\documentclass{article}
\PassOptionsToPackage{numbers,sort&compress}{natbib}
\usepackage[preprint]{neurips_2026}
\newcommand{\MinimaxHorizon}{50000}
\newcommand{\MinimaxRows}{3}
\newcommand{\MinimaxArms}{12}
\newcommand{\MinimaxTrials}{40}

\newcommand{\MinimaxEpsilons}{0.00, 0.25, 0.50, 0.75, 1.00}
\newcommand{\MinimaxScalingHorizons}{3125, 6250, 12500, 25000, 50000}

\usepackage[utf8]{inputenc}
\usepackage[T1]{fontenc}
\usepackage{microtype}
\usepackage{comment}
\usepackage{xparse}
\usepackage[dvipsnames]{xcolor}
\definecolor{paleBlueA}{RGB}{15,132,246}
\definecolor{paleBlueB}{RGB}{5,125,245}
\definecolor{paleBlueC}{RGB}{220,240,255}
\definecolor{pillarblue}{RGB}{20,40,120}
\definecolor{softblue}{RGB}{60,120,180}
\definecolor{royalblue}{RGB}{65,105,225}
\definecolor{brickred}{RGB}{178,34,34}
\definecolor{forestgreen}{RGB}{34,139,34}
\definecolor{royalpurple}{RGB}{128,0,128}
\definecolor{slate}{RGB}{112,128,144}
\usepackage{graphicx}
\usepackage{subcaption}
\usepackage{booktabs}
\usepackage[inline]{enumitem}
\usepackage{amsmath,amssymb,amsfonts,mathtools,amsthm}
\usepackage{thmtools}
\usepackage{upgreek}
\allowdisplaybreaks
\usepackage{algorithm}
\usepackage{algpseudocode}

\usepackage[most]{tcolorbox}
\usepackage[textsize=tiny]{todonotes}
\usepackage{float}
\usepackage{wrapfig}
\usepackage{tikz}
\usetikzlibrary{arrows.meta, positioning, decorations.pathmorphing, shadows.blur, calc, backgrounds}

\usepackage{booktabs}
\usepackage[table]{xcolor}
\usepackage{pifont}
\usepackage{makecell}
\usepackage{array}

\definecolor{mygreen}{RGB}{0,140,72}
\definecolor{myred}{RGB}{180,35,24}

\newcommand{\cmark}{\textcolor{mygreen}{\ding{51}}}
\newcommand{\xmark}{\textcolor{myred}{\ding{55}}}

\newcolumntype{C}[1]{>{\centering\arraybackslash}m{#1}}
\usepackage[
  colorlinks=true,
  linkcolor=paleBlueA,
  citecolor=paleBlueB,
  urlcolor=blue,
  breaklinks=true
]{hyperref}
\hypersetup{
  linktocpage=true,
  pdfstartview=FitH,
  pdfpagemode=UseOutlines,
  pageanchor=true,
  plainpages=false,
  bookmarksnumbered=true,
  bookmarksopen=false,
  bookmarksopenlevel=1,
  hypertexnames=false,
  pdfhighlight=/O
}
\usepackage[capitalize,noabbrev]{cleveref}
\makeatletter
\newtheoremstyle{assumptionAdvStyle}{6pt}{6pt}{\itshape}{}{\bfseries}{.}{0.5em}{Assumption~\textcolor{softblue}{A\theassumptionAdv}\thmnote{~(#3)}}
\makeatother
\makeatletter
\newtheoremstyle{assumptionStocStyle}{6pt}{6pt}{\itshape}{}{\bfseries}{.}{0.5em}{Assumption~\textcolor{softblue}{S\theassumptionStoc}\thmnote{~(#3)}}
\makeatother
\theoremstyle{plain}
\newtheorem{theorem}{Theorem}[section]

\newtheorem{lemma}{Lemma}[section]

\theoremstyle{definition}
\newtheorem{definition}{Definition}[section]

\theoremstyle{remark}
\newtheorem{remark}{Remark}[section]
\theoremstyle{assumptionAdvStyle}
\newtheorem{assumptionAdv}{Assumption}[section]
\theoremstyle{assumptionStocStyle}
\newtheorem{assumptionStoc}{Assumption (Stoc.)}[section]

\newcommand{\Pillar}[1]{\textcolor{pillarblue}{\textbf{Pillar~#1}}}
\newtcolorbox{yellownote}{
  colback=yellow!8!white,
  colframe=softblue!40!gray,
  boxrule=0.5pt,
  arc=2pt,
  left=8pt,
  right=8pt,
  top=8pt,
  bottom=8pt,
  fonttitle=\bfseries,
  enhanced,
  shadow={0.5pt}{-0.5pt}{0pt}{black!15}
}
\title{Learning Safely Without Knowing the World:\\COMPASS‑Hedge}
\author{%
  Ting Hu\thanks{Equal contribution.} \\
  Department of Economics \\
  University of Wisconsin--Madison \\
  \texttt{ting.hu@wisc.edu} 
  \and
  {\bf Luanda Cai\footnotemark[1]} \\
  Department of Finance \\
  University of Wisconsin--Madison \\
  \texttt{luanda.cai@wisc.edu} 
  \And
  Emmanouil-Vasileios Vlatakis-Gkaragkounis \\
  Department of Computer Sciences \\
  University of Wisconsin--Madison \\
  \texttt{vlatakis@wisc.edu} \\
}
\begin{document}
\maketitle
\begin{abstract}
Online learning algorithms often faces a fundamental trilemma: balancing regret guarantees between adversarial and stochastic settings and providing baseline safety against a fixed comparator. While existing methods excel in one or two of these regimes, they typically fail to unify all three without sacrificing optimal rates or requiring oracle access to problem-dependent parameters. 
In this work, we bridge this gap by introducing COMPASS-Hedge. To the best of our knowledge, our algorithm is the first full-information anytime method to simultaneously achieve, up to logarithmic factors:
\emph{i) Minimax-optimal regret in adversarial environments} ;
\emph{ii) Instance-optimal, gap-dependent regret in stochastic environments;}
and \emph{iii) $\tilde{\mathcal{O}}(1)$ regret relative to a designated baseline policy.}
Crucially, COMPASS-Hedge is parameter-free and requires no prior knowledge of the environment's nature or the magnitude of the stochastic sub optimality gaps. Our approach hinges on a novel integration of adaptive pseudo-regret scaling and phase-based aggression, coupled with a comparator-aware mixing strategy. To the best of our knowledge, this provides the first ``best-of-three-world'' guarantee in the full-information setting, establishing that baseline safety does not have to come at the cost of worst-case robustness or stochastic efficiency.
\end{abstract}
\vspace{-1em}
\section{Introduction}
\vspace{-0.5em}
Online learning with expert advice provides a robust framework for sequential decision-making under uncertainty. In this setting, a learner must dynamically aggregate the opinions of a diverse pool of $A$ experts. Lacking prior knowledge of which expert is most reliable—or even the nature of the environment itself—the learner’s success hinges on balancing the exploitation of historically top-performing strategies with the exploration of those that may excel in the future. This model is remarkably universal: whether navigating the fluctuations of a volatile stock market \cite{cover1991universal} or the risk-sensitive outcomes of clinical trials \cite{villar2015multi}, the objective remains to minimize cumulative regret:
\begin{equation}
R_T(\mu)
\;=\;
\sum_{t=1}^T \langle \mu^t, c^t \rangle
\;-\;
\sum_{t=1}^T \langle \mu, c^t \rangle ,\tag{Regret}
\end{equation}
where $\mu^t$ is the learner’s strategy, $c^t$ the loss vector, and $\mu$ a fixed comparator.
Then, the classical desideratum is to ensure that $R_T(\mu)$ grows sublinearly in $T$ for any $\mu$, guaranteeing that the learner’s average performance asymptotically matches that of the best fixed strategy chosen in hindsight.

Historically, research has diverged into two distinct regimes. In adversarial version, where losses can be chosen maliciously, algorithms such as Hedge achieve minimax-optimal regret of order $O(\sqrt{T \log A})$ \citep{FreundSchapire1997,CesaBianchiLugosi2006}. Conversely, in stochastic environments where losses are i.i.d., substantially sharper guarantees are possible. In the presence of minimum suboptimality gap $\Delta$, the instance-optimal rate is
\(
O\!\left( \log A/ \Delta\right)
\)
-- significantly  smaller than the adversarial  rate $O(\sqrt{T \log A})$ \citep{mourtada2019optimality}.

The quest for a ``universal'' learner — one that performs optimally regardless of whether the environment is benign or malicious—has been a central theme in online learning research over the past decade, leading to the development of \textit{Best-of-Both-Worlds (BOBW)} algorithms. These methods simultaneously achieve minimax-optimal adversarial robustness and instance-optimal stochastic adaptivity, both in full-information and bandit models \citep{ bubeck2012best, zimmert2021tsallis}.

As online learning is increasingly deployed in life-critical domains, a third  need has emerged:
\begin{itemize}[leftmargin=2em,topsep=0pt,itemsep=0pt]
    \item[] \textit{{\bf Baseline safety:} The cumulative regret relative to a designated, trusted baseline policy $\mu^c$ remains near-constant $($e.g., $O(\mathrm{poly}\log T)$ or $\tilde O(1)$$)$.     }
\end{itemize}
This provides a  performance floor - a safety net - for the learner to aggressively pursue strategic exploration without the risk of catastrophic underperformance relative to the baseline.
In practice, $\mu^c$ often represents a conservative but trusted strategy, such as a standard-of-care medical protocol or a legacy financial index. 
While satisfying any two of these requirements—\emph{minimax robustness, instance-optimality, or baseline safety}—is often feasible, unifying all three remains a significant challenge, even under full-information feedback.
This tension motivates the central crux of our work:
\begin{center} \vspace{-0.5em}\textit{Can a single, parameter-free algorithm simultaneously guarantee minimax robustness, instance-optimal adaptivity, and near-constant baseline safety?} \end{center}\vspace{-0.5em}

\textbf{The Best-of-Three-Worlds Challenge. (BOTW) } 
Despite the maturity of best-of-two-worlds research, the aforementioned triadic tension remains unanswered with the existing literature presents a fragmented landscape:
\begin{itemize}[itemsep=0pt, topsep=-2pt, parsep=0pt]
    \item \textbf{Adversarial + Safety.}
    Phased-aggression methods \citep{even2008regret, MullerEtAl2025} achieve constant regret relative to a baseline and $\Theta(\sqrt{T})$ adversarial robustness, but fail to capture fast $1/\Delta$ rates in stochastic environments.
    \item \textbf{Stochastic + Safety.}
    In the appendix of \citet{MullerEtAl2025}, the authors propose an alternative algorithm for the stochastic setting with safety constraints. However, the resulting regret guarantees scale as $O(\sqrt{T})$, rather than achieving the instance-optimal rates. Moreover, this result relies on a different algorithmic design than the adversarially robust method above, thereby sacrificing universality.\footnote{While \citet{MullerEtAl2025} primarily study the bandit feedback model and impose an additional \emph{smoothed mixed comparator} assumption—requiring strictly positive probability on each expert—their approach can be viewed as a special case of full-information learning.}    
    \item \textbf{Adversarial + Stochastic (BOBW).}
    State-of-the-art BOBW methods \citep{mourtada2019optimality} adapt efficiently to stochastic gaps, but can suffer $\Omega(\sqrt{T})$ regret relative to a safe baseline policy $\mu^c$.
\end{itemize}
\vspace{-0.75em}
\paragraph{Technical Bottleneck: \emph{The Trade-off between Conservation and Aggression.}}
The fundamental difficulty in unifying these three ``worlds'' stems from a deep-seated
tension in the per-round update mechanism.
Baseline safety necessitates conservative mixing strategies that tightly control
deviation from a trusted comparator.
In contrast, achieving stochastic optimality requires selective and increasingly
aggressive updates that rapidly concentrate probability mass on the empirical leader
in order to exploit suboptimality gaps $\Delta_a$.
Moreover, state-of-the-art best-of-both-worlds methods lack explicit safeguards
for protecting a designated baseline under adversarial conditions. \footnote{A potential workaround could be to assume oracle access to environment-specific quantities,
such as suboptimality gaps or problem-dependent confidence bounds.
While such assumptions yield valid guarantees in principle,
they fundamentally alter algorithm's scope by embedding
environment-specific knowledge into its design.
Indeed, if tight upper bounds on stochastic instance-optimal rates were given as input,
\citep{MullerEtAl2025} can be adapted to satisfy safety constraints with minor
modifications. 

However, this reliance on problem-dependent parameters breaks universality and renders
algorithms brittle in realistic settings where such quantities are unknown,
time-varying, or ill-defined. Hence, designing a single, parameter-free algorithm that reconciles these objectives remains an 
open challenge.
}
\vspace{-1em}
\paragraph{Our Contributions.} We resolve the tension between robustness, adaptivity, and safety by introducing
\textsc{COMPASS-Hedge}
(\textbf{C}omparator-\textbf{O}riented \textbf{M}ixing with
\textbf{P}hase-\textbf{A}daptive \textbf{S}afeguard \textbf{S}caling).
To the best of our knowledge, this is the first parameter-free algorithm to satisfy
the Best-of-Three-Worlds desiderata without prior knowledge of the environment’s nature
or the magnitude of stochastic gaps.
Formally, under standard assumptions on adaptivity and stochasticity, we show that \textsc{COMPASS-Hedge} simultaneously achieves:
\begin{itemize}[leftmargin=1.2em, itemsep=0pt, topsep=0pt]
    \item \textbf{Baseline Safety:}
    $\mathcal{R}(\mu^c)=O(\log^2 T)$ uniformly over all environments.
    \item \textbf{Adversarial Robustness:}
    $O(\sqrt{T\log A}\,\log^2 T)$ regret against any comparator.
    \item \textbf{Stochastic Adaptivity:}
    Instance-optimal regret
    $O\!\left({\log A \log^2 T}/{\Delta}\right)$ in i.i.d.\ settings.
\end{itemize}
\begin{table}[t]
\centering
\caption{The three-world landscape. 
Here $\tilde O(\cdot)$ hides polylogarithmic factors in $T$.}
\label{tab:comparison}
\setlength{\tabcolsep}{5.5pt}
\renewcommand{\arraystretch}{1.12}
\begin{tabular}{>{\raggedright\arraybackslash}p{5.1cm} C{2.1cm} C{2.45cm} C{2.15cm}}
\toprule
\rowcolor{black!6}
\textbf{Representative works}
& \makecell{\textbf{Minimax}\\[-0.5mm]\small $\tilde O(\sqrt{T})$}
& \makecell{\textbf{Stochastic}\\[-0.5mm]\small $\tilde O(\log A/\Delta)$}
& \makecell{\textbf{Baseline}\\[-0.5mm]\small $\tilde O(1)$} \\
\midrule
\makecell[l]{
\small Even-Dar et al.~\citep{even2008regret}; Müller et al.~\citep{MullerEtAl2025}}
& \cmark & \xmark & \cmark \\
\midrule
\makecell[l]{
\small Müller et al.~\citep{MullerEtAl2025}, App.~B.3}
& \xmark & \xmark & \cmark \\
\midrule
\makecell[l]{
\small Mourtada--Gaïffas~\citep{mourtada2019optimality}}
& \cmark & \cmark & \xmark \\
\midrule
\rowcolor{blue!7}
\makecell[l]{\textsc{COMPASS-Hedge} (ours)}
& \cmark & \cmark & \cmark \\

\bottomrule
\end{tabular}
\vspace{-1.5em}
\end{table}
\textbf{Technical Innovation.} The design of \textsc{COMPASS-Hedge} introduces a novel ``safe-to-aggressive'' transition
mechanism that avoids the pitfalls of fixed-parameter safety.
Our approach rests on three fundamental pillars:
\paragraph{1. Autonomous Aggression via Geometric Scaling.}
The core challenge in Best-of-Three-Worlds learning is calibrating the learner’s
\emph{aggression}—the degree to which it deviates from the baseline policy $\mu^c$
to exploit potential suboptimality gaps.
While existing phased-aggression frameworks require a predefined regret budget
\citep{even2008regret, MullerEtAl2025}, \textsc{COMPASS-Hedge} employs an online
geometric estimation scheme. 
We maintain a dynamic estimator of the pseudo-regret and double it only when the
cumulative observed loss exceeds the current threshold\footnote{Thus, our scheme bypasses the impossibility result of \citet{besson2018doubling},
which rules out horizon-free logarithmic guarantees for the \emph{oblivious} 
doubling trick. In contrast, our regret-scaled, data-dependent mechanism avoids any fixed
geometric schedule and thus preserves logarithmic bounds.}.
This induces a self-correcting feedback loop: in benign environments, the learner
remains conservative and safe; in adversarial settings, it automatically scales its
learning rate to recover $O(\sqrt{T})$ adversarial robustness without manual tuning, 
a technique that may be of independent interest for other adaptive online learning problems where the scale of the regret is not known a priori.
\paragraph{2. Bridging the Expected\textendash Pseudo Regret Gap.}
A major theoretical hurdle in using empirical observations to drive the
aforementioned estimation scheme is the discrepancy between $i)$ the
\emph{\small expected adversarial regret} (the theoretical objective), $ii)$
the \emph{\small observed regret} (the quantity available to the algorithm),
and $iii)$ the \emph{\small pseudo-regret} (the quantity appearing in safety guarantees).
We rigorously control this discrepancy, denoted by $\Delta_{\mathrm{gap}}$,
providing a unified treatment across environments:
{\vspace{-0.225em}
\[
\Delta_{\mathrm{gap}}(T)
\;=\;
\mathbb{E}\!\left[
\max_{\mu \in \Delta_A}
\sum_{t=1}^T \langle c^t, \hat \mu^t - \mu \rangle
\right]
\;-\;
\max_{\mu \in \Delta_A}
\mathbb{E}\!\left[
\sum_{t=1}^T \langle c^t, \hat \mu^t - \mu \rangle
\right].
\vspace{-0.225em}
\]}Leveraging concentration inequalities, we show that $\Delta_{\mathrm{gap}} (T) = O(\frac{\log A}{\Delta})$ in stochastic
regimes with a unique optimal expert, while $\Delta_{\mathrm{gap}} (T) = O(\sqrt{T})$
under adversarial losses.

To bridge these regimes, we distinguish between two adversarial models.
For \emph{oblivious adversaries}—where losses are fixed in advance or generated
from a random instance selected at $t=0$—our proof techique follows directly.
For \emph{adaptive adversaries}, we invoke Assumption~\ref{eq:A1}, which limits the
adversary’s cumulative adaptivity.
This prevents the environment from ``tricking'' the doubling mechanism by
reacting too aggressively to the learner’s scaling, and establishes that the
empirical scheme remains a faithful proxy for the underlying safety constraints.
\paragraph{3. Horizon-Free Self-Scaling and Comparator Mixing.}
To ensure that the algorithm is horizon-free, \textsc{COMPASS-Hedge} employs
a decaying step size (e.g., $1/\sqrt{t}$) within the underlying Hedge framework.
We extend the phased-aggression logic by integrating it with a
\emph{comparator-aware mixing strategy}.
The key insight is that the mixture weight assigned to the baseline policy
$\mu^c$ is inversely proportional to the current ``aggression budget.''
\vspace{-0.25em}
\newpage
Two main phases are introduced in our algorithm:
\begin{itemize}[itemsep=0pt, topsep=0pt]
    \item \textbf{Initial Phase.}
    When the estimated regret against the baseline is large, the mixture
    closely tracks $\mu^c$, providing immediate protection.    
    \item \textbf{Adaptive Phase.}
    If the algorithm gains evidence against the baseline,
    the mixture relaxes
\end{itemize}
\vspace{-0.25em}
Just as in Follow-The-Regularized-Leader (FTRL), where the step size controls
the strength of regularization, our approach uses the doubling-trick estimator
as a \emph{self-scaling regularizer} for the mixture.
This ensures that the algorithm only ``breaks away'' from the safety of the
baseline when the statistical evidence for a superior strategy is overwhelming.

\textbf{\emph{Summary. }}{\it\textsc{COMPASS-Hedge} unifies these objectives through a symbiotic architecture: \Pillar{3} establishes a horizon-free, comparator-aware mixing foundation, which is operationalized by \Pillar{1}'s parameter-free estimation of pseudo-regret. Finally, \Pillar{2} serves as the theoretical anchor; by rigorously bounding the expected-pseudo regret gap, it ensures our empirical scaling remains robust and universal across both stochastic and adversarial regimes.}
\section{Related work and motivating scenarios}
\vspace{-0.5em}\paragraph{Regret and Best of Both Worlds.}
The regret properties of online learning have been extensively studied; for general overviews, see \citet{hazan2016introduction, shalev2012online}.
The challenge of unifying robustness, adaptivity, and safety lies at the intersection of three mature yet largely disjoint research directions.
While recent BOBW-algorithms achieve stochastic instance-optimality alongside adversarial robustness—both in the full-information setting \citep{mourtada2019optimality} and under bandit feedback \citep{zimmert2021tsallis}—the trajectory of this field has increasingly pivoted toward multi-armed bandit variants \citep[e.g.,][]{li2025efficient}. 
Conversely, the safety-oriented literature even  in the full-information regime has remained more restrictive. 
\vspace{-1.25em}\paragraph{Bicriteria Measures and Baseline Safety.}A seminal precursor to our work is \citet{even2008regret}, who investigated a {bicriteria} performance measure: simultaneously minimizing regret against the best expert while guaranteeing performance against the {average} of all experts (or a fixed weighting). They formalized the fundamental observation that for comparisons against a fixed baseline, the ``gold standard'' is \emph{constant regret} (independent of $T$). However, while achieving zero regret against the average in isolation is trivial (via uniform weights), obtaining this simultaneously with worst-case guarantees is non-trivial. \citet{even2008regret} addressed this via modified MWU (e.g., Prod) or conservative phased strategies in Exponential Weights—techniques later adapted by \citet{MullerEtAl2025} for safety in bandit settings. However, a parameter-free unification that achieves "Best-of-Three-Worlds" guarantees
remained an open challenge in the full-information regime.
\vspace{-1.25em}\paragraph{The Practical Imperative of Full Information.} Finally, we note that in many safety-critical domains,
full-information feedback is the natural modeling choice, as counterfactual costs
are observable post hoc, rendering bandit formulations unnecessarily restrictive.
For example, in \textit{algorithmic finance}, all asset returns are publicly observed,
calling for strategies that exploit market structure without underperforming
benchmark indices \citep{cover1991universal, gupta2024ai}.
Similarly, \textit{smart-grid load balancing} relies on known routing costs to optimize
efficiency while strictly adhering to stability constraints \citep{tang2016online}.
{Ultimately, these domains highlight the critical need for both theoretical foundations and practical solutions in principled safety-aware deployment.}

%
%
%
\vspace{-0.75em}
\section{Preliminaries and problem setup}
\vspace{-0.5em}
\label{sec:preliminaries}
\paragraph{Notation.}
We use standard $O$-asymptotic notation. Let $[n] := \{1,\ldots,n\}$ for $n \in \mathbb{N}$; $\langle \cdot, \cdot \rangle$ and $\|\cdot\|$ denote the Euclidean inner product and norm; $\lceil x \rceil$ is the ceiling of $x \in \mathbb{R}$; $e_i$ is the $i$-th standard basis vector of $\mathbb{R}^n$; and $\Delta_n := \{x \in \mathbb{R}^n_{\ge 0} : \sum_i x_i = 1\}$ is the $n$-simplex. We consider full-information online learning with $A \ge 2$ arms over horizon $T \ge 1$ on a probability space $(\Omega, \mathcal{G}, \mathbb{P})$; the game proceeds in discrete rounds $t = 1, \dots, T$. In addition to the standard framework, the player receives a \emph{safe comparator strategy} $\mu^c \in \Delta_A$ as input. The objective is twofold: to compete with the best arm in hindsight (optimality), while ensuring performance effectively at least as good as $\mu^c$ (safety).
\vspace{-0.5em}\paragraph{Filtration and Adversarial Models.}
To rigorously capture the information flow and the power of the adversary, we define the history and the associated filtration.
Let $H_{t-1} := (\mu^1, c^1, \dots, \mu^{t-1}, c^{t-1})$ denote the history of interactions up to the beginning of round $t$. We view $H_{t-1}$ as a random variable taking values in the history space $\mathsf{H}_{t-1} = (\Delta_A \times [0,1]^A)^{t-1}$.
For any sample path $\omega \in \Omega$, we denote the realized history as $h_{t-1} := H_{t-1}(\omega)$.

We define the natural filtration $\mathbb{F} = (\mathcal{F}_t)_{t \ge 0}$ where $\mathcal{F}_t := \sigma(\mu^1, c^1, \dots, \mu^t, c^t)$ represents the $\sigma$-algebra generated by all variables revealed up to the end of round $t$. Consequently, $\mathcal{F}_{t-1}$ encapsulates all information available to the learner before making the decision $\mu^t$. This formalism allows us to distinguish between different environmental assumptions standard in the literature \citep{borodin-1998, CesaBianchiLugosi2006}:
\begin{itemize}[leftmargin=0.2em, topsep=0pt, itemsep=0pt]
    \item \textbf{Oblivious Adversary:} The loss sequence $c^{1:T}$ is fixed in advance or depends only on extrinsic randomness, independent of the learner's actions. Formally, $c^t$ is independent of $\mathcal{F}_{t-1}$.
    \item \textbf{Adaptive Adversary:} The adversary may choose $c^t$ based on the realized history $h_{t-1}$. Here, $c^t$ is $\mathcal{F}_{t-1}$-measurable (or measurable with respect to $\mathcal{F}_{t-1}$ augmented by internal randomization).
\end{itemize}

In this work, we consider a general setting that encompasses both, characterized by the \emph{conditional mean loss}. We denote:
\[
m_t(a \mid h) := \mathbb{E}[c^t(a) \mid H_{t-1}=h].
\]
Throughout the paper, $\mathbb{E}[\cdot]$ denotes the expectation with respect to the internal randomness of both the algorithm and the environment. 

\subsection{Performance metrics}
In online learning, the primary objective is to minimize the gap between the learner’s cumulative loss and that of the best fixed strategy in hindsight, yielding the standard notion of \textit{Expected Regret}.
\begin{definition}[Expected Regret]
\label{def:expected-regret}
The expected regret of an algorithm is defined as the expectation of the realized regret against the best arm in hindsight:
\(
\mathcal{R}_{\text{exp}}(T) \triangleq \mathbb{E}\left[ \sum_{t=1}^T \langle c^t, \mu^t \rangle - \min_{\mu \in \Delta_A} \sum_{t=1}^T \langle c^t, \mu \rangle \right].
\)
\end{definition}
This metric captures the algorithm's performance relative to the optimal action for the \textit{specific realization} of the loss sequence.
In our setting, however, competing with the best expert is not enough; we must also respect a specific safety constraint.
\begin{definition}[Comparator Regret]
\label{def:comparator-regret}
The regret compared to a specific fixed strategy $\mu^c \in \Delta_A$ is:
\( \mathcal{R}(\mu^c) \triangleq \mathbb{E}\left[\sum_{t=1}^T \langle c^t, \mu^t - \mu^c \rangle\right]. \)
\end{definition}
Our goal is to design a single algorithm that simultaneously guarantees $\mathcal{R}(\mu^c) \le \tilde{\mathcal{O}}(1)$ (constant-like safety) while minimizing the worst-case regret against the best expert.
\vspace{-1em}
\paragraph{From Expected to Pseudo-Regret.}
Directly analyzing $\mathcal{R}_{\text{exp}}$ is often intractable due to the correlation between the algorithm's choices and the identity of the hindsight optimizer.  More concretely, the hindsight optimizer is also a random variable determined by each random history path.
To facilitate analysis, we utilize the notion of \textit{Pseudo-Regret}, which compares the learner's expected loss to the expected loss of any fixed comparator.

\begin{definition}[Pseudo-Regret]
\label{def:pseudo-regret}
The worst-case pseudo-regret is defined as:
\( \mathcal{R}_{\text{pseudo}}(T)\triangleq  \max_{\mu \in \Delta_A}\sum_{t=1}^T \mathbb{E}\left[\langle c^t, \mu^t - \mu \rangle\right].  \)
\end{definition}

\begin{remark}[Discussion on the Gap.]\it
$\mathcal{R}_{\text{exp}}$ and $\mathcal{R}_{\text{pseudo}}$ differ in the order of max and expectation: $\mathcal{R}_{\text{exp}}$ compares against the random empirical minimizer $\mathbb{E}[\min \dots]$, while $\mathcal{R}_{\text{pseudo}}$ compares against $\min \mathbb{E}[\dots]$. Jensen's inequality gives $\mathcal{R}_{\text{exp}} \ge \mathcal{R}_{\text{pseudo}}$, so pseudo-regret is a lower bound. Lemma~\ref{lem:pseudo-gap} shows the gap is $O(\sqrt{T})$ under adversarial environment with controlled adaptivity, and \(O(\frac{\log A}{\Delta})\) under stochastic environment with unique optimizer.
Thus, optimizing for pseudo-regret serves as a valid proxy for the true objective.
\end{remark}

\subsection{Model assumptions}
\subsubsection{Beyond oblivious adversaries}
\label{subsec:adv_assumptions}

While standard Best-of-Both-Worlds (BOBW) approaches typically assume an oblivious adversary, our framework extends to \emph{adaptive} environments. However, it is a known impossibility result that against a fully adaptive adversary—one with unlimited memory and sensitivity—policy regret is lower-bounded by $\Omega(T)$ \citep{arora2012online}. To bypass this barrier, the literature typically imposes structural constraints: limiting the adversary's memory \citep{arora2012online} or  smoothed adversaries \citep{haghtalab2024smoothed}.

We introduce a third, distinct perspective: instead of limiting \textit{what} the adversary remembers or \textit{how concentrated} their randomness is, we limit the \textit{magnitude} of their reaction.

\begin{assumptionAdv}[Controlled Adaptivity]
\label{A1}
For each $t \ge 1$ and $a \in [A]$, assume there exists a deterministic nonnegative sequence $\{\varepsilon_t\}_{t \ge 1}$ such that the conditional mean loss satisfies:
\begin{equation}
\sup_{h, h' \in \mathsf{H}_{t-1}} \big| m_t(a \mid h) - m_t(a \mid h') \big| \le \varepsilon_t. \tag{\textcolor{softblue}{A1}} \label{eq:A1}
\end{equation}
We define the cumulative adaptivity for any interval $I$ as $E_I := \sum_{t \in I} \varepsilon_t$. We assume $E_I \le C_E \sqrt{|I|}$ for some constant $C_E > 0$.
\end{assumptionAdv}

\begin{remark}\it
\eqref{eq:A1} positions our model uniquely within the landscape of adaptive adversaries:

\begin{itemize}[leftmargin=0pt, itemsep=1pt, topsep=1pt, parsep=0pt]
    \item[]\textbf{\ref{eq:A1}  vs. Memory-Bounded:} An $m$-memory bounded adversary \citep{arora2012online} is constrained to depend only on the $m$ most recent actions. While this limits the \textit{temporal scope} of dependency, it allows for arbitrary discontinuities ("jumps") based on recent history. In contrast, our model allows for \textbf{infinite memory} ($\infty$-memory bounded), provided the aggregate "drift" or sensitivity to history remains within the budget $O(\sqrt{T})$.
    \item[]\textbf{\ref{eq:A1}  vs. Smoothed Analysis:} Recent works like \citet{haghtalab2024smoothed} assume the adversary plays from distributions with bounded density (anti-concentration). 
    Our model does not require the adversary to be noisy or smooth; it allows for deterministic behavior (Dirac deltas), provided the deterministic choice is stable over histories.
\end{itemize}
\end{remark}
\begin{remark}\it
Notably, our framework is not strictly stronger than these alternatives but rather \emph{orthogonal} to them. For example, a switching-cost adversary satisfies the $m=1$ memory constraint (local dependence), yet may violate our stability condition by inducing large instantaneous jumps in the loss landscape. As a result, neither set of assumptions subsumes the other. 
\end{remark}
In essence, \eqref{eq:A1} restricts the environment from changing the ``rules of the game'' too aggressively in response to the learner's past actions, thereby capturing a \emph{drifting}—as opposed to vindictive—adversarial behavior. To facilitate concentration analysis, we also control the variance of the noise around the conditional mean.

\subsubsection{Optimal arm identifiability}
In the stochastic regime, the environment behaves probabilistically with a fixed underlying structure.
\begin{assumptionStoc}[i.i.d. Losses]
\label{Stoc1}
The losses $c^1,..., c^T$ are independent and identically distributed.
\end{assumptionStoc}
\begin{assumptionStoc}[Unique Optimizer and Positive Gap]
\label{Stoc2}
There exists a \textbf{unique} optimal arm
\begin{equation}
\tag{\textcolor{softblue}{S2}}\label{eq:S2}
a^\star \in \operatorname{argmin}_{a} \mathbb{E}[c^t(a)].
\end{equation}
Let $\Delta_a := \mathbb{E}[c^t(a) - c^t(a^\star)]$ denote the suboptimality gap. Consequently, we assume $\Delta := \min_{a \neq a^\star} \Delta_a > 0.$
\end{assumptionStoc}

\begin{remark}[On Identifiability] \it \eqref{eq:S2} is a standard yet non-trivial requirement in gap-dependent analysis. It ensures a strictly positive margin $\Delta$, allowing the algorithm such as the UCB algorithm \citep{auer2002finite}. to settle on $a^\star$ and achieve logarithmic instance-optimal regret $O(\frac{\log T}{\Delta})$. 
The uniqueness of the optimal arm ensures an identifiable gap structure under which \emph{pseudo-regret} faithfully tracks \emph{expected regret}.

In contrast, when multiple arms are optimal, \citet{mourtada2019optimality} show that the discrepancy between these metrics can remain asymptotically non-negligible (scaling as $\Omega(\sqrt{T})$) due to random fluctuations causing the empirical minimizer to oscillate among optimal arms. Consequently, addressing this regime would require techniques fundamentally different from the ``regret self-estimation'' framework developed here; thus, we highlight the relaxation of this assumption as a major open question for future research.
\end{remark}

A key consequence of the above model assumptions is a bound on the gap between
pseudo-regret and expected regret over arbitrary intervals. 

\begin{lemma}[Pseudo-regret versus expected regret]
\label{lem:pseudo-gap}
Let $I \subseteq \{1,\dots,T\}$ be any interval. Then,
\[
\mathcal{R}_{\mathrm{pseudo}}(I)
\;\le\;
\mathcal{R}_{\mathrm{exp}}(I)
\;\le\;
\mathcal{R}_{\mathrm{pseudo}}(I)
 + \mathbb{G}(I),
\]
where $\mathcal{R}_{\mathrm{exp}}(I)$ and $\mathcal{R}_{\mathrm{pseudo}}(I)$ denote,
respectively, the expected regret and the pseudo-regret accumulated over $I$.
Moreover, the remainder term $\mathbb{G}(I)$ satisfies
\[
\mathbb{G}(I) =
\Big\{
O\!\left(\sqrt{|I|}\right),
 \text{under (\hyperref[A1]{\textcolor{softblue}{A1}}\textendash adversarial regime)}
 \lor
O(\frac{\log A}{\Delta}),
 \text{under (\textcolor{softblue}{\hyperref[Stoc1]{\textcolor{softblue}{S1\&2}}}\textendash stochastic regime)}.
\Big\}
\]
\end{lemma}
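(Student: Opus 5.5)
The lower bound follows from Jensen's inequality, since the expected minimum is at most the minimum of expectations. For the upper bound, I would fix an interval $I$ and write $L_I(a):=\sum_{t\in I}c^t(a)$ and $M_I(a):=\sum_{t\in I}\mathbb{E}[c^t(a)]$. The learner's term $\sum_{t\in I}\langle c^t,\mu^t\rangle$ has the same expectation in both definitions, so it cancels. What remains is
\[
\mathbb{G}(I)=\min_a M_I(a)-\mathbb{E}\big[\min_a L_I(a)\big]\;\ge 0 .
\]
Linearity lets me restrict to vertices of the simplex on both sides. The task is therefore to bound the downward fluctuation of an empirical minimum. The learner's strategy never enters, which is why the bound holds for arbitrary intervals.

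\emph{Adversarial regime under \eqref{eq:A1}.} I would decompose
\[
c^t(a)-\mathbb{E}[c^t(a)]=\underbrace{c^t(a)-m_t(a\mid H_{t-1})}_{\xi_t(a)}+\underbrace{m_t(a\mid H_{t-1})-\mathbb{E}[m_t(a\mid H_{t-1})]}_{b_t(a)} .
\]
The sequence $\xi_t(a)$ is a bounded martingale difference sequence with respect to $\mathbb{F}$. The term $b_t(a)$ satisfies $|b_t(a)|\le\varepsilon_t$ deterministically by \eqref{eq:A1}, because the mean over histories lies within the range of $m_t(a\mid\cdot)$. This gives
\[
\mathbb{G}(I)\le \mathbb{E}\big[\max_a\big(-\textstyle\sum_{t\in I}\xi_t(a)\big)\big]+E_I .
\]
The first term I would bound by Azuma--Hoeffding plus a union bound over $A$ arms, which yields $\sqrt{2|I|\log A}$; this is the standard maximal inequality for sub-Gaussian martingales. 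The assumption $E_I\le C_E\sqrt{|I|}$ then gives $O(\sqrt{|I|})$, with the $\sqrt{\log A}$ absorbed into the constant.

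\emph{Stochastic regime under (S1\&2).} Here $M_I(a)=|I|\mu_a$, and the minimizer is $a^\star$. I would write
\[
\mathbb{G}(I)=\mathbb{E}\big[L_I(a^\star)-\min_a L_I(a)\big]\le\sum_{a\ne a^\star}\mathbb{E}\big[(L_I(a^\star)-L_I(a))_+\big].
\]
Set $X_t:=c^t(a^\star)-c^t(a)$. This is i.i.d., takes values in $[-1,1]$, and has mean $-\Delta_a$. Hoeffding's inequality gives
\[
\mathbb{P}\Big(\textstyle\sum_{t\in I} X_t>x\Big)\le\exp\!\big(-(x+|I|\Delta_a)^2/(2|I|)\big).
\]
Integrating the tail and using $(x+n\Delta)^2\ge 2xn\Delta+n^2\Delta^2$ gives
\[
\mathbb{E}[(\cdot)_+]\le \Delta_a^{-1}e^{-|I|\Delta_a^2/2}=O(1/\Delta_a).
\]

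\emph{Main obstacle.} Summing that per-arm bound over arms yields only $O(A/\Delta)$, not $O(\log A/\Delta)$, so I would argue with the maximum directly instead. On the event $\{\min_a L_I(a)<L_I(a^\star)-x\}$, the worst-case deviation is controlled by a union bound over arms. The tail becomes
\[
\mathbb{P}(\mathrm{gap}>x)\le\min\!\big\{1,\;A\,e^{-x\Delta/2}\,e^{-|I|\Delta^2/2}\big\}.
\]
Integrating this gives $\int_0^\infty\min\{1,Ae^{-x\Delta/2}\}\,dx\le 2(\log A+1)/\Delta$, which is $O(\log A/\Delta)$ as claimed. The step needing care is keeping the tail bound uniform in $|I|$. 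The ratio $(x+n\Delta)^2/n\ge 2x\Delta$ does this, because its infimum over $n$ does not blow up, so the bound holds for intervals of every length.
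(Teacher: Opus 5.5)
Your proof is correct for a deterministic interval $I$, and its skeleton matches the paper's. The learner's term cancels and the gap becomes $\min_a\mathbb{E}[L_I(a)]-\mathbb{E}[\min_a L_I(a)]$. The adversarial case splits into an adaptivity term bounded by $E_I$ and a martingale term handled by concentration plus a union bound over arms. The stochastic case reduces to $\mathbb{E}[\max_a(L_I(a^\star)-L_I(a))]$ and integrates a tail of the form $\min\{1,Ae^{-c\Delta x}\}$.

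The differences lie in the concentration tools. In the adversarial case you use Azuma--Hoeffding where the paper uses Bernstein. This only gives up the variance-adaptive form $3\sqrt{V_I\log(2A)}+2\log(2A)$, which is immaterial for $O(\sqrt{|I|})$.

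In the stochastic case the difference is more substantive. You apply Hoeffding at the fixed length $n=|I|$. The paper instead uses Hoeffding's lemma to show that $\exp(\Delta S_n^a)$ is a nonnegative supermartingale, then applies Ville's inequality to get $\mathbb{P}(\sup_n S_n^a\ge x)\le e^{-\Delta x}$. This maximal form is what lets the paper apply the lemma to the algorithm's phases, which are random intervals $\{\sigma+1,\dots,\tau\}$ with stopping-time endpoints. Your ``uniform in $|I|$'' remark gives uniformity over deterministic lengths only, not a bound at a data-dependent length. A union bound over lengths would recover that case only at extra cost: roughly $\sum_n e^{-n\Delta^2/2}\le 2/\Delta^2$, yielding $O(\log(A/\Delta)/\Delta)$. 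Ville's inequality avoids this loss.

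A minor point: your own inequality $(x+n\Delta)^2/(2n)\ge x\Delta+n\Delta^2/2$ already gives the exponent $-x\Delta$. Writing $e^{-x\Delta/2}$ therefore loses a factor $2$ needlessly; with $e^{-x\Delta}$ you recover the paper's constant $(\log(A-1)+1)/\Delta$.
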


\begin{remark} In words, \cref{lem:pseudo-gap} bounds the gap between pseudo-regret and expected regret.
In the adversarial setting, the difference is controlled via moderate history dependence of conditional mean loss, and bounded variance of martingale noises. This leads to an $O(\sqrt{|I|})$ gap between pseudo-regret and expected regret. In the stochastic setting, i.i.d. cost and unique optimal arm assumption imply that this gap is of optimal gap-dependent rate. This observation is the key technical turning point: it allows us to analyze \emph{expected} regret while still using data-dependent stopping rules in the algorithm, even against adaptive adversaries.
\end{remark}

\section{Our results}
\subsection{Algorithm description and intuition}
\begin{wrapfigure}{r}{0.45\textwidth}
\vspace{-5\baselineskip}
\begin{minipage}{\linewidth}
\begin{algorithm}[H]
\caption{\small\textsc{Compass-Hedge}: \textbf{C}omparator-\textbf{O}riented \textbf{M}ixing with \textbf{P}hase-\textbf{A}daptive \textbf{S}afeguard \textbf{S}caling Hedge.}
\label{alg:doubling_ftrl_comparator}
\scriptsize
\begin{algorithmic}[1]
\State \textbf{Input:} arms $A$, safe comparator $\mu^c$, budget $\widehat R_1 \gets 2$
\State \textbf{Definition:} $\mathcal{L}_{start}^{t}(\mu) := \sum_{j=\text{start}}^t \langle c^j, \mu \rangle$, $\widehat {\mathcal{L}}_{start}^{t} := \sum_{j=\text{start}}^t \langle c^j, \hat{\mu}^j \rangle$
\State \textbf{Init:} $\hat{\mu}^{\,1} \gets \text{Unif}(A)$; $\alpha \gets 1/\widehat R_1$; $\text{start}, k, s \gets 1$
\For{$t = 1,2,\ldots$}
    \State Play $\mu^t := \alpha\,\hat{\mu}^t + (1-\alpha)\,\mu^c$; observe $c^t$
    \State \textcolor{royalblue}{\textbf{// 1. Check Stage Condition}}
    \State $R_{\text{hedge}} \gets \max_{\mu} (\widehat{\mathcal{L}}_{start}^{t} - \mathcal{L}_{start}^t(\mu))$
    \If{$R_{\text{hedge}} \le \widehat R_s$}
        \State \textcolor{royalpurple}{\textbf{// 2. Check Phase Condition}}
        \State $R_{\text{comp}} \gets \max_{\mu} (\mathcal{L}_{start}^{t}(\mu^c) - \mathcal{L}_{start}^t(\mu))$
        \If{$R_{\text{comp}} > 2\widehat R_{s}$ and $\alpha < 1$}
            \State \textcolor{brickred}{\emph{// Aggressive Shift}}
            \State $k \gets k+1$; $\text{start} \gets t+1$
            \State $\alpha \gets \min(2^{k-1}/( \widehat R_s + 1), 1)$
            \State Reset $\hat{\mu}^{t+1}$ to uniform
        \Else
            \State \textcolor{forestgreen}{\emph{// No-Regret Update}}
            \State $\eta_t \gets 2\sqrt{\tfrac{\log A}{t - \text{start} + 1}}$
            \State $\hat{\mu}^{t+1} \!\in\! \arg\min_{\mu\in\Delta_A}\!\bigl\{\mathcal{L}_{\text{start}}^{t}(\mu) + \tfrac{\mathrm{Ent}(\mu)}{\eta_t}\bigr\}$
        \EndIf
    \Else
        \State \textcolor{royalblue}{\emph{// Stage Reset (Double Budget)}}
        \State $s \gets s+1$; $\text{start}\gets t+1$
        \State $\widehat R_s \gets \widehat R_{s-1}\!\cdot\! 2^{\lceil \log_2(R_{\text{hedge}}/\widehat R_{s-1})\rceil}$
        \State $\alpha \gets 1/\widehat R_s$; $k \gets 1$; $\hat{\mu}^{t+1} \gets \text{Unif}(A)$
    \EndIf
\EndFor
\end{algorithmic}
\end{algorithm}
\end{minipage}
\vspace{-2\baselineskip}
\end{wrapfigure}
 
We now introduce \textsc{Compass-Hedge} (\cref{alg:doubling_ftrl_comparator}), a meta-algorithm designed to safeguard a base learner (Hedge) using a conservative comparator $\mu^c$. The core intuition relies on a \textbf{hierarchical geometric regret estimation} that manages the trade-off between ``safety'' (adhering to $\mu^c$) and ``optimality'' (following the Hedge expert).
 
The algorithm operates on two time-scales:
\begin{enumerate}[leftmargin=8pt,topsep=0pt,itemsep=1pt]
    \item \textbf{Stages ($s$):} The outer loop controls the global regret budget. Each stage $s$ has pseudo-regret capacity $\widehat R_s$. If the Hedge expert's regret exceeds this capacity, the stage ends and we double the budget ($\widehat R_{s+1} \approx 2 \widehat R_s$), so the total number of stages is $O(\log T)$.
    \item \textbf{Phases ($k[s]$):} Within a stage, time is divided into phases. Each phase fixes a mixing coefficient $\alpha \in [0,1]$ and plays $\mu^t = \alpha \hat{\mu}^t + (1-\alpha)\mu^c$: \emph{(i)} start with small $\alpha$ (mostly safe); \emph{(ii)} exit the phase (and increase $\alpha$) only if $\mu^c$ is verified significantly suboptimal, i.e.\ the Hedge expert has ``earned'' the right to be played more aggressively; \emph{(iii)} this ``phase-budgeted'' approach never incurs large regret against $\mu^c$ unless we have already banked significant gains against the environment.
\end{enumerate}
 
\textbf{Intuition.} \textsc{Compass-Hedge} follows a ``probationary'' principle: it anchors to the safe comparator and shifts weight to the no-regret learner only when $\mu^c$ is empirically proven suboptimal. This ensures \emph{safety} (\texttt{lines 11, 14}): requiring ``proof of failure'' from the safe arm before abandoning it caps potential losses and guarantees constant per-stage regret against the comparator.
 
Calibrating the rejection threshold, however, poses a dilemma: identifying the precise magnitude of cumulative regret that distinguishes suboptimality from transient noise. A static worst-case barrier of order $\Theta(\sqrt{T})$ would suffice but is overly conservative in benign (``beyond worst-case'') environments, effectively destroying instance-optimality. To overcome this, we deploy a \emph{\textbf{hierarchical doubling trick estimation}} (\texttt{line 24}) that expands the pseudo-regret budget adaptively, scaling with the realized noise rather than the worst-case horizon. Consequently, the algorithm automatically interpolates between adversarial robustness and stochastic instance-optimality without requiring prior knowledge.
\subsection{Main results}
\label{sec:main_results}

With the structural assumptions in place, we present the theoretical analysis of \textsc{Compass-Hedge}. Our main contribution is formalized in \cref{thm:main}, which establishes \textsc{Compass-Hedge} as a robust, ``Best-of-Three-Worlds'' algorithm: it guarantees baseline safety while simultaneously achieving minimax-optimality in adversarial regimes and instance-optimality in stochastic ones.
\newpage
\begin{theorem}[Universal Regret Guarantees]
\label{thm:main}
Let $\mu^c \in \Delta_A$ be an arbitrary comparator policy. For any horizon $T$, \textsc{Compass-Hedge} guarantees:

\begin{itemize}[leftmargin=*, itemsep=0pt, topsep=2pt]
    \item \textbf{Safety.}
    Regardless of the environment, the comparator regret is logarithmic:
    \(
    \mathcal R(\mu^c) \le O(\log^2 T).
    \)    
    \item \textbf{Adversarial Optimality.}
    Under Assumption \hyperref[A1]{\textcolor{softblue}{A1}} (Controlled-adaptivity):
    \[
    \mathcal{R}_{\mathrm{pseudo}}(T) \le O\!\left(\sqrt{T \log A}\,\log^2 T\right) = \tilde{O}(\sqrt{T}).
    \] \vspace{-2em}   
    \item \textbf{Stochastic Optimality.}
    Under Assumptions \hyperref[Stoc1]{\textcolor{softblue}{S1}}--\hyperref[Stoc2]{\textcolor{softblue}{S2}} (Gap $\Delta > 0$):
    \[
    \mathcal{R}_{\mathrm{pseudo}}(T) \le O\!\left(\frac{\log A\,\log^2 T}{\Delta}\right) = \tilde{O}\left(\frac{1}{\Delta}\right).
    \]
\end{itemize}
\end{theorem}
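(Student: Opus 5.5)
We decompose the horizon into stages $s=1,\dots,S$ (indexed by the budget $\widehat R_s$) and, within each stage, phases $k=1,\dots,K_s$ with mixing weights $\alpha_{s,k}$. All three bounds will follow from per-stage bounds summed over stages. This requires first showing that the number of stages is $S=O(\log T)$ and the number of phases per stage is $K_s=O(\log T)$.

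\emph{Counting.} The stage count is controlled because $\widehat R_s$ at least doubles at each reset, and $R_{\mathrm{hedge}}\le |I|\le T$ on any interval $I$. Hence $\widehat R_S\le 2T$ and $S\le \log_2(2T)$. The phase count is controlled because $\alpha_{s,k}=\min(2^{k-1}/(\widehat R_s+1),1)$ reaches $1$ after $k=O(\log \widehat R_s)=O(\log T)$ phases, after which no further aggressive shift is permitted.

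\emph{Safety.} On a phase interval $I=I_{s,k}$ the played loss satisfies $\langle c^t,\mu^t-\mu^c\rangle=\alpha\langle c^t,\hat\mu^t-\mu^c\rangle$. Writing $\mu^\star_I$ for the best fixed strategy on $I$, we split
\[
\sum_{t\in I}\langle c^t,\hat\mu^t-\mu^c\rangle=\sum_{t\in I}\langle c^t,\hat\mu^t-\mu^\star_I\rangle-\sum_{t\in I}\langle c^t,\mu^c-\mu^\star_I\rangle .
\]
The first term is at most $\widehat R_s+1$, since the stage condition held at every round of $I$ except possibly the last, and losses lie in $[0,1]$.

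For the second term, every phase except the last in the stage ended through an aggressive shift, so $R_{\mathrm{comp}}>2\widehat R_s$ and the second term is $\le -2\widehat R_s$. Such a phase therefore contributes at most $\alpha(\widehat R_s+1-2\widehat R_s)\le\alpha$, which is $O(1)$.

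The final phase of a stage may have ended by a stage reset or at $T$. Its contribution is bounded by $\alpha(\widehat R_s+1)$ plus a single round. This is $O(1)$ when $k=1$, since then $\alpha=1/\widehat R_s$. For later phases, we charge it to the previous phase, which banked a gain of $\approx\alpha_{k-1}\widehat R_s=\alpha_k\widehat R_s/2$ against $\mu^c$; this telescoping is the Even-Dar phased-aggression argument. Each stage then contributes $O(\log T)$, and summing over $O(\log T)$ stages gives $O(\log^2 T)$ realized regret. Taking expectations gives the safety bound pathwise, so no assumption on the environment is needed.

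\emph{Adversarial.} On any phase interval, the regret of $\mu^t$ against any $\mu$ is
\[
\alpha\,(\text{Hedge regret})+(1-\alpha)\,(\text{regret of }\mu^c\text{ against }\mu).
\]
The Hedge term is at most $\widehat R_s+1$. The $\mu^c$ term is at most $2\widehat R_s+1$ while the phase persists. Each phase therefore contributes $O(\widehat R_s)$ realized regret, and a stage contributes $O(\widehat R_s\log T)$.

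It remains to bound $\widehat R_s$ in expectation. A stage ends only when Hedge's realized regret on the stage interval exceeds $\widehat R_{s-1}$. Anytime Hedge with $\eta_t=2\sqrt{\log A/t}$ has realized regret $O(\sqrt{|I|\log A})$ on every interval, deterministically. Hence $\widehat R_s\le 2\cdot O(\sqrt{T\log A})+O(1)$.

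Summing gives realized (hence expected) regret $O(\sqrt{T\log A}\,\log^2T)$. Since $\mathcal R_{\mathrm{pseudo}}\le\mathcal R_{\mathrm{exp}}$ by Lemma~\ref{lem:pseudo-gap}, the pseudo-regret bound follows; Assumption A1 enters only through the lemma's gap direction if one insists on the expected version.

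\emph{Stochastic (main obstacle).} The same decomposition bounds pseudo-regret by $O(\log T)\sum_s\mathbb E[\widehat R_s]$, so we need $\mathbb E[\widehat R_{s}]=O(\log A/\Delta)$ for the budgets that are actually reached. Under S1–S2, anytime Hedge has realized regret $O(\log A/\Delta)$ plus a deviation term on each restart. The deviation term is controlled by Lemma~\ref{lem:pseudo-gap}, which gives gap $O(\log A/\Delta)$, together with a concentration bound showing that the empirical leader equals $a^\star$ after $O(\log(\cdot)/\Delta^2)$ rounds with high probability.

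From this we would show a geometric tail: the probability that $R_{\mathrm{hedge}}$ exceeds $2^j\cdot C\log A/\Delta$ on a stage decays like $e^{-c2^j}$. It follows that $\mathbb E[\widehat R_S]=O(\log A/\Delta)$ and, summing geometrically, $\sum_s\mathbb E[\widehat R_s]=O(\log A/\Delta\cdot\log T)$.

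The difficulty is that stage and phase lengths are data-dependent stopping times and Hedge is restarted at each of them. We would handle this with a union bound over the $O(\log^2T)$ restart points, combined with optional-stopping on the martingale $\sum_t(c^t-\mathbb Ec^t)$. Multiplying by the $O(\log T)$ phases per stage yields $O(\log A\log^2T/\Delta)$.
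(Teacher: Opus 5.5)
Your counting and safety arguments are the paper's: the stage and phase counts match Lemma~\ref{app:lem:stage-regret} and Theorem~\ref{app:thm:main}, and the comparator bound uses the same pathwise phased-aggression telescoping. One step needs tightening. Charging the final phase only to the immediately preceding phase covers just half of its cost $\alpha_K(\widehat R_s+1)\approx 2^{K-1}$. What closes the argument is the sum of \emph{all} earlier gains, $\sum_{k<K}2^{k-1}=2^{K-1}-1$. So you should not first discard those gains by bounding each exited phase by $+\alpha$.

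For the adversarial bullet you take a genuinely different and more elementary route. The paper bounds $\mathbb E[R_{\mathrm{hedge}}]$ by Hedge's pseudo-regret plus the $O(\sqrt T)$ expected-versus-pseudo gap of Lemma~\ref{lem:pseudo-gap}. That gap is the only place Assumption~A1 is used. You instead note that the anytime-Hedge bound is a deterministic statement about the loss sequence, so every restarted copy has realized regret $O(\sqrt{T\log A})$ on every path. Hence $\widehat R_s=O(\sqrt{T\log A})$, and the realized regret is $O(\sqrt{T\log A}\,\log^2T)$ pathwise. This controls $\mathcal R_{\exp}$ itself and needs no restriction on the adversary, so A1 is superfluous for this bullet. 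Your closing remark about A1 is therefore unnecessary: $\mathcal R_{\mathrm{pseudo}}\le\mathcal R_{\exp}$ is just Jensen.

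The stochastic bullet has a genuine gap: it is not yet a proof. It rests on the asserted tail $\Pr(R_{\mathrm{hedge}}>2^jC\log A/\Delta)\lesssim e^{-c2^j}$ for every restarted Hedge, which you never establish. Mourtada--Ga\"{\i}ffas bound only the expected pseudo-regret. A high-probability bound on realized regret over a stopping-time interval would be a separate analysis. Moreover, a union bound over $O(\log^2T)$ restarts would give only $O(\log A\log\log T/\Delta)$ for $\mathbb E[\widehat R_S]$, not the $O(\log A/\Delta)$ you state.

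You do not need the tail bound. Your per-stage bound is linear in $\widehat R_s$, and $S,K_s=O(\log T)$ pathwise. So it suffices to show $\mathbb E[\widehat R_s\mathbf 1\{s\le S\}]=O(\log A/\Delta)$ for each fixed $s$. At the bounded stopping time $\tau$ that triggers a reset, $\widehat R_{s+1}<2R_{\mathrm{hedge}}(\tau)$. The Hedge evaluated there was restarted at a stopping time $\sigma$, after which the losses are fresh i.i.d.\ draws. Split $R_{\mathrm{hedge}}(\tau)$ into $\sum_{t\in(\sigma,\tau]}\langle c^t,\hat\mu^t-e_{a^\star}\rangle$ and $\max_a\sum_{t\in(\sigma,\tau]}(c^t(a^\star)-c^t(a))$.

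The first term has expectation at most the horizon-free bound $(4\log A+25)/\Delta$. This holds because the per-round pseudo-regret $\sum_a\hat\mu^t(a)\Delta_a$ is nonnegative, so stopping early only helps, and the centered part has mean zero by optional stopping. The second term is at most $(\log A+1)/\Delta$ by the Ville argument of Lemma~\ref{app:lem:pseudo-gap}, which is stated for stopping-time intervals for exactly this purpose. This is the paper's route (per phase, $\mathcal R^k(\mu)\le 4\mathbb E[R_{\mathrm{hedge}}]+2$), and it closes your argument without any new concentration.
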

\begin{remark}[The ``Best-of-Three-Worlds'' Architecture]
\label{rem:interpretation}
\cref{thm:main} bridges the long-standing gap between conservative safety and aggressive adaptivity. We highlight three key properties:

\begin{enumerate}[label=(\textbf{\roman*}), leftmargin=*, itemsep=-2pt]
    \item \textbf{Hard Safety Constraint.} 
    Unlike standard BoBW algorithms that may suffer $\Omega(\sqrt{T})$ regret if the environment is adversarial, our algorithm treats $\mu^c$ as a \textit{performance floor}. The $O(\log^2 T)$ bound ensures the learner never deviates significantly from the safe baseline unless it is winning.
    
    \item \textbf{Blind Adaptation.} 
    Crucially, these rates are achieved \textit{without} regime detection or prior knowledge of the gap $\Delta$. The algorithm naturally accelerates from $\tilde{O}(\sqrt{T})$ to $\tilde{O}(1/\Delta)$ solely based on the observed loss curvature.
    
    \item \textbf{The Price of Protection.} 
    The additional poly-logarithmic factors ($\log^2 T$) represent the overhead of the doubling trick and the phase-based safety checks. This is the structural cost of maintaining the $\mu^c$ guarantee while remaining open to aggressive exploration.
\end{enumerate}

\end{remark}

\subsection{Analysis and proof sketch}
\label{sec:analysis}

The proof of \cref{thm:main} relies on a modular decomposition of regret across \emph{stages} (doubling epochs) and \emph{phases} (safety checks). We provide a high-level roadmap of the four key steps. All detailed proofs are deferred to the supplement.

\paragraph{Step 1: Bridging Realized and Expected Regret.}
Our analysis begins by observing that the algorithm's control logic (\textbf{Lines 7 and 10}) relies on \emph{data-dependent} quantities: the realized regret of the mixture against the Hedge experts ($R_{\text{hedge}}$) and the comparator ($R_{\text{comp}}$). Since the environment or the learner's decisions may be randomized, we must control the deviation between these observed quantities and their expected counterparts.
\cref{lem:pseudo-gap} bridges this gap via standard martingale concentration arguments, ensuring that the stopping conditions triggered by realized losses validly translate to guarantees on the pseudo-regret.

\paragraph{Step 2: The Doubling Mechanism (Regret Control).}
With the statistical validity established, we focus on the algorithmic structure. We ensure that our proxy for total regret does not grow uncontrollably. The stage-based doubling trick acts as a coarse regularizer.
\begin{lemma}[Estimated Regret Bounds]
\label{lem:regret-scale}
For every stage $s$, the pseudo-regret capacity satisfies $\widehat R_s < \widehat R_{s+1} < 2 R_{\text{hedge}}$.
\end{lemma}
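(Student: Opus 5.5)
The claim follows from the stage-reset rule in line~24 of \cref{alg:doubling_ftrl_comparator}. I read $R_{\text{hedge}}$ as the realized Hedge regret that triggers the reset closing stage $s$. Write $R:=R_{\text{hedge}}$ and $m:=\lceil \log_2(R/\widehat R_s)\rceil$, so that
\[
\widehat R_{s+1}=\widehat R_s\, 2^{m}.
\]
The plan is to pin down the exponent $m$ using the condition under which the reset fires, then apply two elementary properties of the ceiling.

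\textbf{Step 1 (lower bound).} The reset branch runs only when the test in line~8 fails, that is, when $R>\widehat R_s$. Since $\widehat R_s\ge \widehat R_1=2>0$, the ratio satisfies $R/\widehat R_s>1$. Hence $\log_2(R/\widehat R_s)>0$ and $m\ge 1$, which gives $\widehat R_{s+1}\ge 2\widehat R_s>\widehat R_s$. Positivity of $\widehat R_s$ holds for all $s$ by a trivial induction, since each reset multiplies the budget by a positive power of two.

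\textbf{Step 2 (upper bound).} Use $\lceil x\rceil<x+1$ with $x=\log_2(R/\widehat R_s)$. This gives
\[
2^{m}<2^{x+1}=2R/\widehat R_s,
\]
so $\widehat R_{s+1}<2R$, with strict inequality. Together with Step~1 this yields $\widehat R_s<\widehat R_{s+1}<2R_{\text{hedge}}$, where $R_{\text{hedge}}$ is the value at the reset time. As a by-product, $\lceil x\rceil\ge x$ also gives $\widehat R_{s+1}\ge R_{\text{hedge}}$. This is the property later steps will need: the new budget dominates the regret that caused the overflow.

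\textbf{Expected obstacle.} The only subtle point is interpretation, not computation. The statement is only meaningful for stages that actually terminate, and $R_{\text{hedge}}$ must be read as the overflow value at the terminating round. In particular, $R_{\text{hedge}}$ here is the realized quantity, not the pseudo-regret of \cref{lem:pseudo-gap}. I will state this convention explicitly.

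I will also note that the per-round increment of $R_{\text{hedge}}$ is at most $1$, because losses lie in $[0,1]$. Therefore $R_{\text{hedge}}\le \widehat R_s+1$ at overflow, which implies $\widehat R_{s+1}=2\widehat R_s$ whenever $\widehat R_s\ge 1$. This is not needed for the lemma, but it sharpens the $O(\log T)$ stage count used afterwards.
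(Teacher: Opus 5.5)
Your proof is correct and follows essentially the paper's argument: both read $\widehat R_{s+1}=2^m\widehat R_s$ off the reset rule, use the trigger condition $R_{\text{hedge}}>\widehat R_s$ to get $m\ge 1$, and use minimality of the exponent ($2^{m-1}\widehat R_s<R_{\text{hedge}}$, i.e.\ $\lceil x\rceil<x+1$) to conclude $\widehat R_{s+1}<2R_{\text{hedge}}$. You are slightly more careful than the paper, whose monotonicity step only records the non-strict $\widehat R_{s+1}\ge\widehat R_s$, whereas you derive the strict inequality from $m\ge 1$ and $\widehat R_s>0$.
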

\textit{Implication:} This ensures that the number of stages remains logarithmic ($S \approx O(\log T)$), allowing us to sum per-stage regrets efficiently without incurring a linear penalty.

\paragraph{Step 3: The Safety Ratchet (Phase Dynamics).}
This step constitutes the core safety mechanism. Within a phase $k$ of any stage, the algorithm fixes the mixing weight $\alpha_{[k]}$. The following lemmas govern the behavior inside a phase:

\begin{lemma}[Regret within a Normal Phase]
\label{lem:normal-phase}
Consider a phase $k$ with mixing parameter $\alpha_{[k]} < 1$. The regret against the safety comparator $\mu^c$ is bounded by $\mathcal R^{k}(\mu^c) \le 2^{k-1}$.
\end{lemma}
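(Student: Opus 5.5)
Within a phase $k$ with $\alpha_{[k]}<1$, the played point is $\mu^t=\alpha_{[k]}\hat\mu^t+(1-\alpha_{[k]})\mu^c$. So the per-round comparator regret is exactly
\[
\langle c^t,\mu^t-\mu^c\rangle=\alpha_{[k]}\,\langle c^t,\hat\mu^t-\mu^c\rangle .
\]
Summing over the rounds $t\in[\text{start},\tau]$ of the phase gives
\[
\mathcal R^k(\mu^c)=\alpha_{[k]}\bigl(\widehat{\mathcal L}_{\text{start}}^{\tau}-\mathcal L_{\text{start}}^{\tau}(\mu^c)\bigr).
\]
The plan is to bound the bracket by $\widehat R_s+1$ using the two stopping tests of the algorithm. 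Then I plug in the phase's mixing weight $\alpha_{[k]}=\min(2^{k-1}/(\widehat R_s+1),1)$; for $k=1$ I use $\alpha=1/\widehat R_s$, which needs a small adjustment of the constant. This gives $\alpha_{[k]}(\widehat R_s+1)\le 2^{k-1}$.

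\textbf{Step 1 (decomposition).} For any $\mu\in\Delta_A$, write
\[
\widehat{\mathcal L}-\mathcal L(\mu^c)=\bigl(\widehat{\mathcal L}-\mathcal L(\mu)\bigr)-\bigl(\mathcal L(\mu^c)-\mathcal L(\mu)\bigr).
\]
Choosing $\mu=\mu^c$, or simply noting $\widehat{\mathcal L}-\mathcal L(\mu^c)\le\max_\mu(\widehat{\mathcal L}-\mathcal L(\mu))=R_{\text{hedge}}$, bounds the bracket by $R_{\text{hedge}}$ evaluated at the current time. The second, nonnegative term $R_{\text{comp}}$ only helps.

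\textbf{Step 2 (stopping control).} At every round $t<\tau$ of the phase, the stage test in line 8 passed, so $R_{\text{hedge}}(t)\le\widehat R_s$. At the last round $\tau$, the phase ends in one of two ways. If it is an aggressive shift, the test at $\tau$ also passed, so the bound $\widehat R_s$ still holds. If it is a stage reset, $R_{\text{hedge}}(\tau)$ may exceed $\widehat R_s$, but only through the single loss vector $c^\tau\in[0,1]^A$. Each round changes $R_{\text{hedge}}$ by at most $1$, so $R_{\text{hedge}}(\tau)\le\widehat R_s+1$. This is the source of the $+1$ in the denominator of $\alpha$. Combining with Step 1 gives $\mathcal R^k(\mu^c)\le\alpha_{[k]}(\widehat R_s+1)\le 2^{k-1}$ pathwise, and taking expectations preserves the bound.

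\textbf{Main obstacle.} The delicate point is the measure/horizon bookkeeping. First, $\widehat{\mathcal L}$ is measured from the phase start. The restart of $\text{start}$ at each aggressive shift ensures the counters refer to this phase only; I must check that line 7 uses the same $\text{start}$ as the phase. Second, for $k=1$ the weight $1/\widehat R_s$ gives $(\widehat R_s+1)/\widehat R_s\le 3/2$ rather than $1$, since $\widehat R_s\ge2$. If the strict constant $2^{0}=1$ is needed, I would instead bound the final-round increment more finely. The first-round overshoot contributes at most $\alpha\le 1/\widehat R_s$. Alternatively, I would argue that at a stage reset the extra round is charged to the next stage. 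Otherwise I would state the bound with constant $2^{k-1}(1+1/\widehat R_s)\le 2^k$, which is harmless for the downstream $O(\log^2T)$ summation.
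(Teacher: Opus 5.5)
Your argument is the same as the paper's: under the mixture the comparator term vanishes, the phase-local Hedge regret against $\mu^c$ is at most $\widehat R_s+1$ (the stage test holds on all earlier rounds, and the last round overshoots by at most $1$), and substituting $\alpha_{[k]}=2^{k-1}/(\widehat R_s+1)$ gives $2^{k-1}$ pathwise, hence in expectation. Your caveat about $k=1$ is correct, and the paper's proof glosses over it: the algorithm actually sets $\alpha=1/\widehat R_s$ at the start of a stage, so this argument gives $1+1/\widehat R_s\le 3/2$ rather than $1$ for the first phase, which is harmless for the $O(\log^2 T)$ summation.
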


\begin{lemma}[Comparator Gain upon Phase Exit]
\label{lem:phase-exit}
If the algorithm chooses to exit phase $k$ (triggering a stronger Hedge mix), it implies that the mixture has significantly outperformed the comparator. Specifically:$\mathcal R^k(\mu^c) \le -\,2^{k-1} + 2.$
\end{lemma}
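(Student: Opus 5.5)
We need to prove Lemma~\ref{lem:phase-exit}: if phase $k$ ends through the aggressive shift (line 11), the comparator regret accumulated over that phase is at most $-2^{k-1}+2$. We work pathwise on the realized losses. Let $I=[\text{start},\tau]$ be the phase, where $\tau$ is the exit round. Because $\mu^t=\alpha\hat\mu^t+(1-\alpha)\mu^c$ with $\alpha=\alpha_{[k]}$ fixed on $I$, the comparator regret factors exactly:
\[
\sum_{t\in I}\langle c^t,\mu^t-\mu^c\rangle \;=\; \alpha_{[k]}\sum_{t\in I}\langle c^t,\hat\mu^t-\mu^c\rangle \;=\; \alpha_{[k]}\bigl(\widehat{\mathcal L}_{\text{start}}^{\tau}-\mathcal L_{\text{start}}^{\tau}(\mu^c)\bigr).
\]
So everything reduces to bounding the Hedge iterate's cumulative loss against $\mu^c$ at the moment of exit.

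\textbf{Step 1: introduce the best fixed strategy.} Let $\mu^\star\in\arg\min_\mu \mathcal L_{\text{start}}^{\tau}(\mu)$ and write
\[
\widehat{\mathcal L}-\mathcal L(\mu^c)=\bigl(\widehat{\mathcal L}-\mathcal L(\mu^\star)\bigr)-\bigl(\mathcal L(\mu^c)-\mathcal L(\mu^\star)\bigr).
\]
The first bracket is $R_{\text{hedge}}$ at time $\tau$. We reached the phase-condition check (line 10) at all, so the stage test held, giving $R_{\text{hedge}}\le \widehat R_s$. The second bracket is $R_{\text{comp}}$, and the exit condition gives $R_{\text{comp}}>2\widehat R_s$. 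Combining the two, $\widehat{\mathcal L}-\mathcal L(\mu^c)<\widehat R_s-2\widehat R_s=-\widehat R_s$.

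\textbf{Step 2: multiply by the mixing weight.} The exit branch also requires $\alpha_{[k]}<1$, so $\alpha_{[k]}=2^{k-1}/(\widehat R_s+1)$ for $k\ge 2$, and $\alpha_{[1]}=1/\widehat R_s$. For $k\ge2$, the phase regret is then at most
\[
-\frac{2^{k-1}\widehat R_s}{\widehat R_s+1}=-2^{k-1}+\frac{2^{k-1}}{\widehat R_s+1}.
\]
For $k=1$ it is at most $-1\le -2^{0}+2$.

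\textbf{Step 3 (main obstacle): bound the correction term by $2$.} This needs $2^{k-1}\le 2(\widehat R_s+1)$. The constraint $\alpha<1$ only gives $2^{k-1}<\widehat R_s+1$, and that is enough: it makes the correction at most $1\le 2$. There is, however, an off-by-one subtlety. The $\alpha$ in force during phase $k$ is the value set at the previous exit, so I must fix the indexing convention consistently with Lemma~\ref{lem:normal-phase}. If the phase-$k$ weight is actually $2^{k-2}/(\widehat R_s+1)$, the same argument yields $-2^{k-2}+1$, and the stated form is recovered after adjusting the index; the slack $+2$ is meant to absorb exactly this.

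\textbf{Step 4: pass to expectations.} I also check the boundary case where the exit round $\tau$ itself contributes one loss of at most $1$ per coordinate; this is absorbed into the same $+2$ constant. Since the bound holds on every sample path where the exit occurs, taking expectations, or conditioning on the exit event, gives $\mathcal R^k(\mu^c)\le -2^{k-1}+2$. No concentration argument (Lemma~\ref{lem:pseudo-gap}) is needed here, because both triggers are realized quantities.
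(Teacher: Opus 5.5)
Your proof is correct and follows the paper's argument. The steps match: the factorization $\alpha_{[k]}\sum_{t}\langle c^t,\hat\mu^t-\mu^c\rangle$; the split through the empirical minimizer $\mu^\star$ into the Hedge regret, bounded by the stage test that must pass at the exit round, and the comparator gap, bounded by the exit trigger $R_{\text{comp}}>2\widehat R_s$; and the substitution $\alpha_{[k]}=2^{k-1}/(\widehat R_s+1)$. Your bound $R_{\text{hedge}}\le\widehat R_s$ is even slightly tighter than the paper's $\widehat R_s+1$.

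The off-by-one worry in Step 3 is moot, because the aggressive-shift branch increments $k$ before recomputing $\alpha$. Also, your claim that the $+2$ slack would absorb such a shift is false: for $k\ge 3$, $-2^{k-2}+1$ exceeds $-2^{k-1}+2$.
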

\noindent\textit{Implication:} These two lemmas form a ``ratche'': we only escalate risk (increase $\alpha$) when we have ``banked'' sufficient negative regret (surplus) against the comparator. 
Crucially, by construction, the total number of phases cannot exceed $O(\log R_{\text{hedge}})$. This guarantees that the regret remains logarithmic even if the algorithm halts abruptly due to an unknown horizon.

\paragraph{Step 4: Aggregating to Stage Optimality.}
Finally, by aggregating the phases within a stage $s$, we derive bounds that hold regardless of the stopping time.

\begin{lemma}[Per-Stage Regret]
\label{lem:stage-regret}
For any fixed stage $s$, we have $\mathcal R^s(\mu^c) = O(\log T)$. Furthermore, the regret against the best expert scales as $O (\sqrt{T \log A} \log T)$ (Adv.) or $O (\frac{\log T \log A}{\Delta})$ (Stoch.).
\end{lemma}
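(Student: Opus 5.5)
The plan is to decompose the stage into its phases $k=1,\dots,K_s$ and sum the phase-level bounds from \cref{lem:normal-phase} and \cref{lem:phase-exit}. Let the stage have budget $\widehat R_s$. Every phase except the last is exited via the aggressive-shift branch, so by \cref{lem:phase-exit} it contributes at most $-2^{k-1}+2$ to $\mathcal R^s(\mu^c)$. The last phase is either a normal phase with $\alpha<1$, where \cref{lem:normal-phase} gives at most $2^{K_s-1}$, or a phase with $\alpha=1$. Summing gives
\[
\mathcal R^s(\mu^c)\le \sum_{k=1}^{K_s-1}\bigl(2-2^{k-1}\bigr)+2^{K_s-1}\le 2K_s+1,
\]
because $\sum_{k<K_s}2^{k-1}=2^{K_s-1}-1$ and this telescoping cancels the exponentially large final term.

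Next I bound $K_s$. The mixing weight $\alpha=\min(2^{k-1}/(\widehat R_s+1),1)$ reaches $1$ once $k\ge \log_2(\widehat R_s+1)+1$, and $\alpha<1$ is needed to exit a phase. Hence $K_s\le \log_2(\widehat R_s+1)+2$. By \cref{lem:regret-scale}, $\widehat R_s\le 2R_{\text{hedge}}\le 2T$, so $K_s=O(\log T)$.

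For the final phase with $\alpha=1$, the learner plays $\hat\mu^t$. Its regret against $\mu^c$ is at most the Hedge regret, which is at most $\widehat R_s$ while the stage persists. I would handle this through the entry condition. At the moment of entry, \cref{lem:phase-exit} gives a banked surplus of $-2^{K_s-2}+2$, with $2^{K_s-2}\gtrsim \widehat R_s$. That surplus offsets the at most $\widehat R_s$ regret of the $\alpha=1$ phase, up to $O(1)$. Together with the sum above, this yields $\mathcal R^s(\mu^c)=O(\log T)$.

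For the best-expert bound, I write the regret of $\mu^t$ against any $\mu$ in a phase as
\[
\langle c^t,\mu^t-\mu\rangle=\alpha\langle c^t,\hat\mu^t-\mu\rangle+(1-\alpha)\langle c^t,\mu^c-\mu\rangle .
\]
The first term is at most $\widehat R_s$ by the stage condition. The second, summed over the phase, is at most $2\widehat R_s$ (plus the one-round overshoot $1$) by the phase condition $R_{\text{comp}}\le 2\widehat R_s$ in every round except the exiting one. So each phase costs $O(\widehat R_s)$ of realized regret, and the stage costs $O(K_s\widehat R_s)=O(\widehat R_s\log T)$.

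It remains to bound $\widehat R_s$ in each regime. Hedge with $\eta_t=2\sqrt{\log A/t}$ restarted at phase starts has realized regret $O(\sqrt{|I|\log A})$ adversarially. By \cref{lem:regret-scale}, the budget never exceeds twice a realized Hedge regret, so $\widehat R_s=O(\sqrt{T\log A})$. In the stochastic case I would use the known constant-regret property of decreasing-rate Hedge under a gap: realized regret is $O(\log A/\Delta)$ in expectation. \cref{lem:pseudo-gap} converts between realized, expected and pseudo-regret with an additional $O(\sqrt{|I|})$ or $O(\log A/\Delta)$ term. This gives the stated $O(\sqrt{T\log A}\log T)$ and $O(\log T\log A/\Delta)$.

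The main obstacle is the stochastic case. $\widehat R_s$ is data-dependent and set at the doubling moment, so I need that the stage-reset threshold is exceeded only with probability small enough that $\mathbb E[\widehat R_s]=O(\log A/\Delta)$. I would try a tail bound on the Hedge regret, which decays exponentially beyond $\log A/\Delta$ once the leader is $a^\star$. Combining it with the doubling $\widehat R_s\le 2R_{\text{hedge}}$ should make the expectation of the geometric budget finite at that scale.
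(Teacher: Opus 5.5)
Your comparator bound is the paper's own telescoping argument (its Subcases 1.2 and 2.2). Your adversarial best-expert bound is also correct, and by a cleaner route than the paper's. Anytime Hedge with $\eta_t=2\sqrt{\log A/t}$ has realized regret $O(\sqrt{n\log A})$ on \emph{every} loss sequence, so $\widehat R_s\le\max\{2,O(\sqrt{T\log A})\}$ holds surely. Combined with your pathwise per-phase bound $O(\widehat R_s)$ and $K_s=O(\log T)$, the stage bound then holds pathwise, hence in expectation and for pseudo-regret. This avoids Assumption A1 and \cref{lem:pseudo-gap}, which the paper's normal-phase lemma relies on. (Minor: when $\alpha=1$, the final phase is absorbed by the whole banked surplus $2^{K_s-1}-1\ge\widehat R_s$, not by the last surplus $2^{K_s-2}\ge(\widehat R_s+1)/2$ alone. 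Your first display already handles this.)

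The genuine gap is the stochastic best-expert bound, which you leave as a plan. It depends entirely on controlling the data-dependent budget. The exponential tail bound you propose for the running supremum of Hedge's realized regret is neither standard nor proved. The qualifier ``once the leader is $a^\star$'' hides exactly the hard part: the regret each freshly restarted Hedge run accrues before it locks onto $a^\star$. The target itself also needs restating. Since $R_{\text{hedge}}$ grows by at most $1$ per round and $\widehat R_1=2$, every reset exactly doubles the budget, so $\widehat R_s=2^s$ is deterministic. What must be controlled is $\widehat R_s\,\mathbb P(\text{stage } s\text{ is reached})$, not $\mathbb E[\widehat R_s]$.

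The paper uses no tail bound at this point. In \cref{app:lem:normal-phase} it takes expectations in $\widehat R_s<2R_{\text{hedge}}$ (\cref{lem:regret-scale}), where $R_{\text{hedge}}$ is the realized regret of the freshly restarted Hedge run at the time that triggered the doubling. It bounds $\mathbb E[R_{\text{hedge}}]$ by two terms. The first is the stochastic pseudo-regret bound $(4\log A+25)/\Delta$ of \citet{mourtada2019optimality}. The second is the gap term $(\log A+1)/\Delta$ from the stochastic part of \cref{app:lem:pseudo-gap}, which is proved via Ville's inequality for intervals delimited by bounded stopping times, so it covers such data-dependent phases. This gives $O(\log A/\Delta)$ per phase and, over $K_s=O(\log T)$ phases, the stated per-stage rate.

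To close your argument along these lines, replace the tail-bound plan by this expectation bound, taken on the event that stage $s$ is reached. That is also how the paper's $\mathbb E[\widehat R_s]<2\mathbb E[R_{\text{hedge}}]$ has to be read. You need one extra observation: the per-round stochastic pseudo-regret $\sum_a\hat\mu^t(a)\Delta_a$ is nonnegative, so a Hedge run stopped at a stopping time has expected pseudo-regret at most the full-horizon bound. Strictly, the triggering run is one of up to $K_{s-1}$ restarted runs of the previous stage, and a careful version must account for that union.
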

Since the total regret against $\mu^c$ is the sum of $O(1)$ over at most $O(\log T)$ stages, we obtain the global logarithmic safety bound declared in \cref{thm:main}.
\section{Experimental Evaluation \& Future Work}
\label{sec:experiments}
\begin{figure}[h]
    \centering
    \vspace{-1.25em}
        \begin{subfigure}[t]{0.48\textwidth}
        \centering       \includegraphics[width=0.85\linewidth]{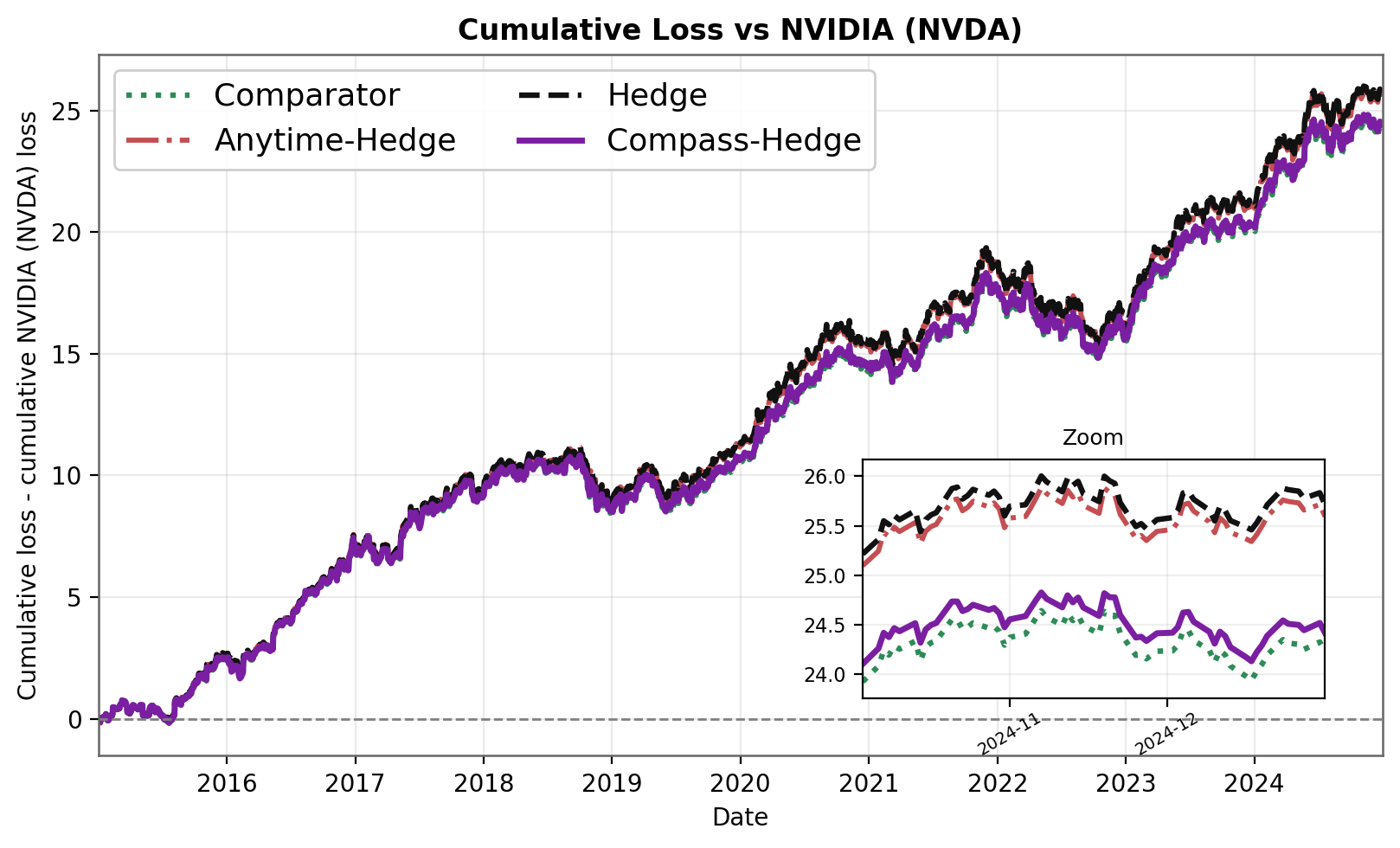}
        \caption{NVDA comparison.}
        \label{fig:stochastic-best}
    \end{subfigure}
    \hfill
    \begin{subfigure}[t]{0.48\textwidth}
        \centering
        \includegraphics[width =0.85\textwidth]{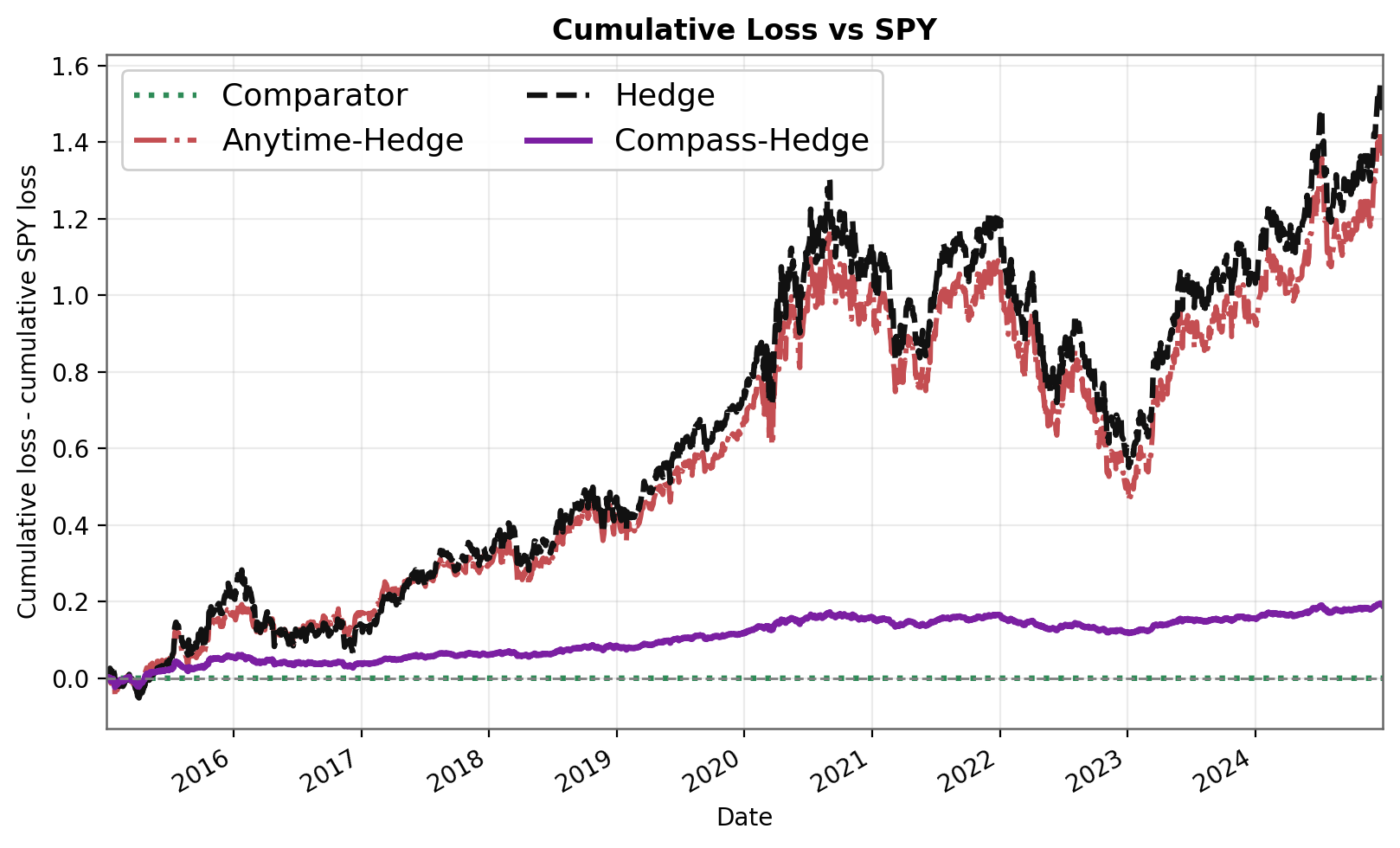}
        \caption{S\&P 500 comparison}
        \label{fig:stochastic-safe}
    \end{subfigure}
    \begin{subfigure}[t]{0.48\textwidth}
        \centering
        \includegraphics[width=0.85\linewidth]{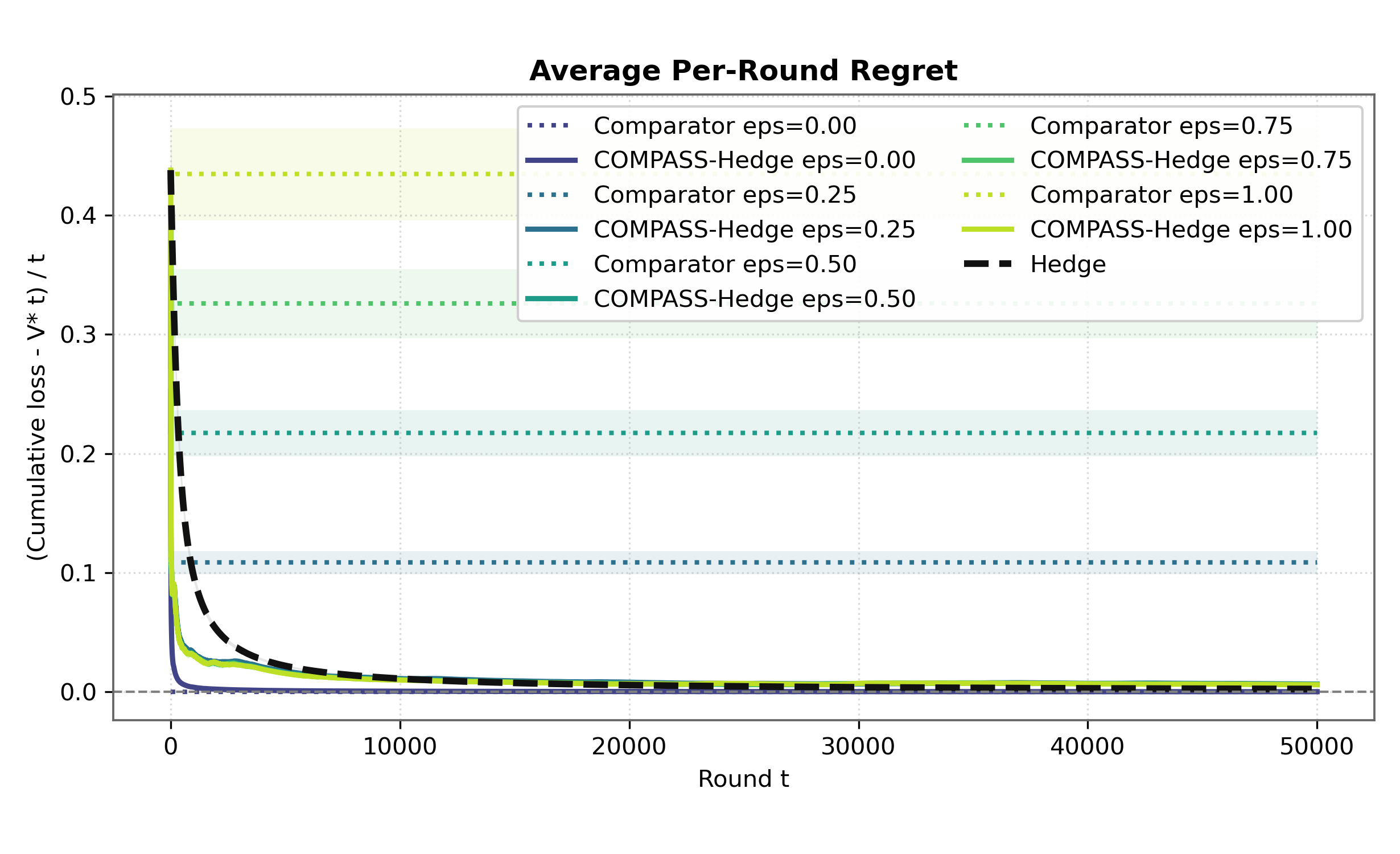}
        \caption{Average regret.}
        \label{fig:adv-avg}
    \end{subfigure}
    \hfill
    \begin{subfigure}[t]{0.48\textwidth}
        \centering
        \includegraphics[width=0.85\linewidth]{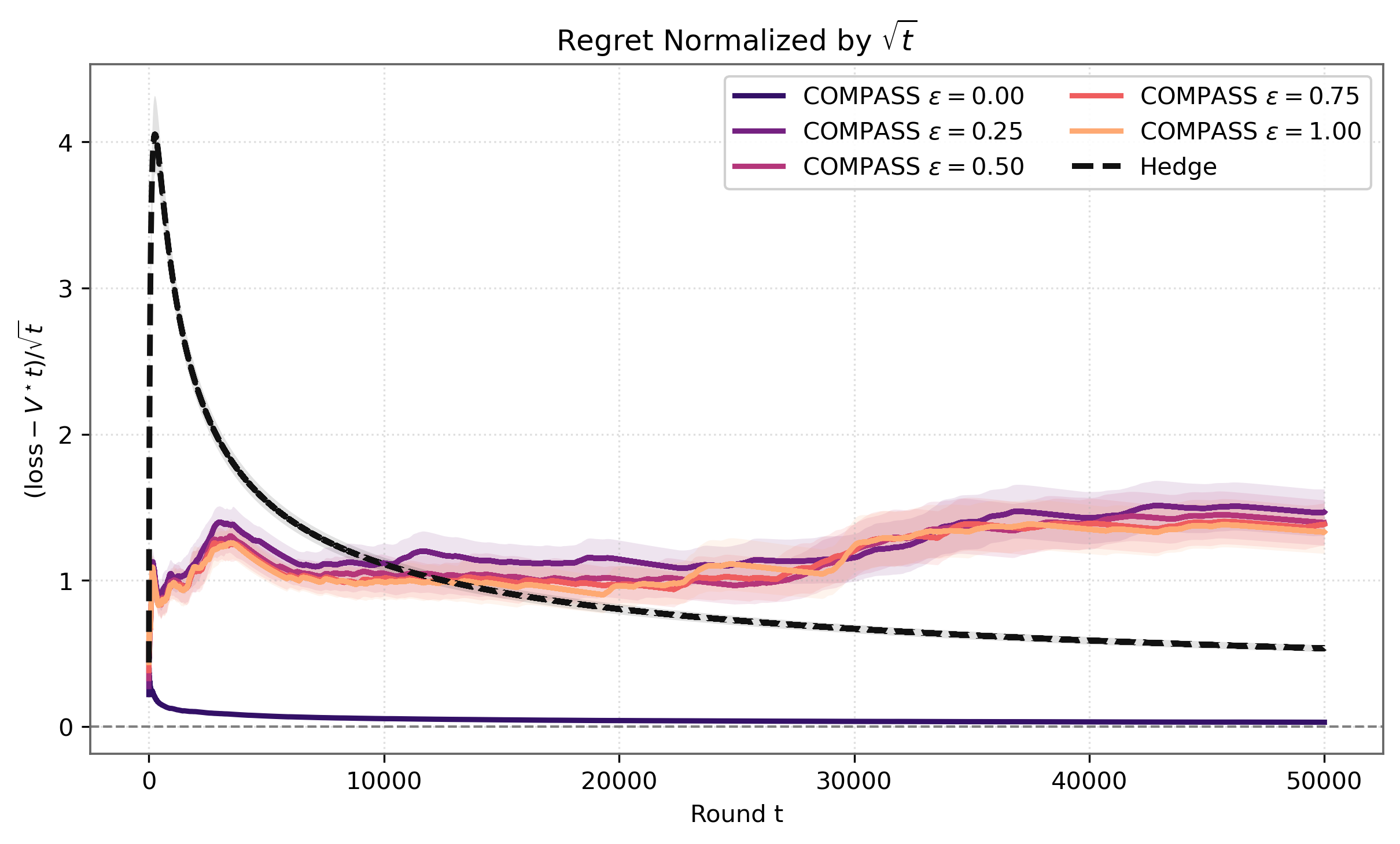}
        \caption{Regret normalized by $\sqrt{t}$.}
        \label{fig:adv-sqrt}
    \end{subfigure}

    \vspace{-0.5em}
    \caption{\textbf{Experimental evaluation.}
       \textbf{(a)--(b) Stochastic adaptivity on S\&P 500 data (2015--2024):}
    Panel~(a) reports regret against NVIDIA, the ex-post best single asset,
    while Panel~(b) reports regret against SPY, a market comparator.
    \textsc{Compass-Hedge} gives the best trade-off: it improves over Hedge variants against NVDA while remaining close to SPY.
    \textbf{(c)--(d) Adversarial robustness:}
    \textsc{Compass-Hedge} exhibits vanishing average regret and stable
    $R_t/\sqrt{t}$, consistent with $O(\sqrt{T})$ adversarial regret.
    Shaded regions denote  90\% CIs.
 }
    \label{fig:experiments}
\end{figure}
\vspace{-0.75em}

Finally, we validate our theoretical guarantees by evaluating whether
\textsc{Compass-Hedge} realizes the intended universality: adversarial robustness
without sacrificing adaptivity in benign regimes. We test two complementary
settings: a control adversarial zero-sum game (\cite{su2026exploitability,assos2024maximizing}), where a
strategic optimizer alternates between regimes to stress Hedge-type dynamics and daily S\&P 500 data from 2015--2024, a
real-data environment with both persistent stochastic structure and abrupt regime
shifts.
A detailed breakdown of the experimental setup and results can be found in App.~\ref{app:experiments} and ~\ref{sec:experiments}.
\paragraph{Adversarial robustness.}
We vary a parameter $\varepsilon$ controlling the quality of the baseline
comparator, with smaller $\varepsilon$ corresponding to a comparator closer to
the learner's optimal strategy. Panel~(c) shows that, across all comparator
qualities, the average regret of \textsc{Compass-Hedge} decays to zero, while
blindly following the baseline can incur linear regret. As $\varepsilon \to 0$,
the regret decreases faster, and in well-aligned cases \textsc{Compass-Hedge}
can outperform standard Hedge. Panel~(d) shows that $R_t/\sqrt{t}$ remains
approximately flat, consistent with $O(\sqrt{T})$ adversarial regret.
\vspace{-1em}
\paragraph{Stochastic adaptivity on market data.}
We next evaluate the algorithms on daily S\&P 500 data from 2015--2024, treating stocks as arms and converting realized returns to bounded losses, details are left in Appendix~\ref{app:data_setup}. We compare against two benchmarks:
NVIDIA (NVDA), the ex-post best single asset, and SPY, a diversified market
comparator. NVDA measures adaptivity to favorable structure, while SPY measures
baseline safety. Relative to NVDA, Hedge and
Anytime-Hedge accumulate large regret, whereas \textsc{Compass-Hedge} closely
tracks the best-arm benchmark. Relative to SPY, the gap is sharper: Hedge and
Anytime-Hedge incur substantial regret, while \textsc{Compass-Hedge} remains
near the market comparator throughout. Thus, \textsc{Compass-Hedge} avoids the
usual tradeoff between exploiting the best realized asset and preserving safety
against a stable baseline.

\paragraph{Future Work.}
We introduced \textsc{Compass-Hedge}, a single parameter-free algorithm unifying
\emph{baseline safety}, \emph{worst-case robustness}, and \emph{stochastic
adaptivity}. Its bounds are nearly optimal up to polylogarithmic factors, leaving
open whether these overheads are inherent or artifacts of our restart method.
We discuss this in Appendix~\ref{app:price-of-protection}; Appendix~\ref{app:adaptive-adversaries}
explains the extension to stronger adaptive adversaries. Extending
comparator-safe adaptivity beyond experts---to linear bandits, structured action
spaces, and reinforcement learning---is a promising direction for safer
sequential decision-making.

\newpage
%



\bibliography{example_paper}

@inproceedings{gaillard2014second,
  title={A second-order bound with excess losses},
  author={Gaillard, Pierre and Stoltz, Gilles and Van Erven, Tim},
  booktitle={Conference on Learning Theory},
  pages={176--196},
  year={2014},
  organization={PMLR}
}

@article{besson2018doubling,
  title={What doubling tricks can and can't do for multi-armed bandits},
  author={Besson, Lilian and Kaufmann, Emilie},
  journal={arXiv preprint arXiv:1803.06971},
  year={2018}
}

@article{fang2022online,
  title={Online mirror descent and dual averaging: keeping pace in the dynamic case},
  author={Fang, Huang and Harvey, Nicholas JA and Portella, Victor S and Friedlander, Michael P},
  journal={Journal of Machine Learning Research},
  volume={23},
  number={121},
  pages={1--38},
  year={2022}
}

@article{zimmert2024productive,
  title={Productive bandits: Importance weighting no more},
  author={Zimmert, Julian and Marinov, Teodor V},
  journal={Advances in Neural Information Processing Systems},
  volume={37},
  pages={85360--85388},
  year={2024}
}

@article{cesa2007improved,
  title={Improved second-order bounds for prediction with expert advice},
  author={Cesa-Bianchi, Nicolo and Mansour, Yishay and Stoltz, Gilles},
  journal={Machine Learning},
  volume={66},
  number={2},
  pages={321--352},
  year={2007},
  publisher={Springer}
}

@article{su2026exploitability,
  title={On the Exploitability of FTRL Dynamics},
  author={Su, Yiheng and Vlatakis-Gkaragkounis, Emmanouil-Vasileios},
  journal={arXiv e-prints},
  pages={arXiv--2604},
  year={2026}
}

@inproceedings{joulani2013online,
  title={Online learning under delayed feedback},
  author={Joulani, Pooria and Gyorgy, Andras and Szepesv{\'a}ri, Csaba},
  booktitle={International conference on machine learning},
  pages={1453--1461},
  year={2013},
  organization={PMLR}
}

@inproceedings{auer1995gambling,
  title={Gambling in a rigged casino: The adversarial multi-armed bandit problem},
  author={Auer, Peter and Cesa-Bianchi, Nicolo and Freund, Yoav and Schapire, Robert E},
  booktitle={Proceedings of IEEE 36th annual foundations of computer science},
  pages={322--331},
  year={1995},
  organization={IEEE}
}

@article{auer2010ucb,
  title={UCB revisited: Improved regret bounds for the stochastic multi-armed bandit problem},
  author={Auer, Peter and Ortner, Ronald},
  journal={Periodica Mathematica Hungarica},
  volume={61},
  number={1-2},
  pages={55--65},
  year={2010},
  publisher={Akad{\'e}miai Kiad{\'o}, co-published with Springer Science+ Business Media BV~…}
}

@article{arora2012online,
  title={Online bandit learning against an adaptive adversary: from regret to policy regret},
  author={Arora, Raman and Dekel, Ofer and Tewari, Ambuj},
  journal={arXiv preprint arXiv:1206.6400},
  year={2012}
}

@article{zimmert2021tsallis,
  title={Tsallis-inf: An optimal algorithm for stochastic and adversarial bandits},
  author={Zimmert, Julian and Seldin, Yevgeny},
  journal={Journal of Machine Learning Research},
  volume={22},
  number={28},
  pages={1--49},
  year={2021}
}

@article{haghtalab2024smoothed,
  title={Smoothed analysis with adaptive adversaries},
  author={Haghtalab, Nika and Roughgarden, Tim and Shetty, Abhishek},
  journal={Journal of the ACM},
  volume={71},
  number={3},
  pages={1--34},
  year={2024},
  publisher={ACM New York, NY}
}

@book{borodin-1998,
  title={Online computation and competitive analysis},
  author={Borodin, Allan and El-Yaniv, Ran},
  year={2005},
  publisher={cambridge university press}
}

@article{shalev2012online,
  title={Online Learning and Online Convex Optimization},
  author={Shalev-Shwartz, Shai},
  journal={Foundations and Trends{\textregistered} in Machine Learning},
  volume={4},
  number={2},
  pages={107--194},
  year={2012},
  publisher={Now Publishers, Inc.}
}

@article{hazan2016introduction,
  title={Introduction to Online Convex Optimization},
  author={Hazan, Elad},
  journal={Foundations and Trends{\textregistered} in Optimization},
  volume={2},
  number={3-4},
  pages={157--325},
  year={2016},
  publisher={Now Publishers, Inc.}
}

@article{tang2016online,
  title={Online charging scheduling algorithms of electric vehicles in smart grid: An overview},
  author={Tang, Wanrong and Bi, Suzhi and Zhang, Ying Jun},
  journal={IEEE communications Magazine},
  volume={54},
  number={12},
  pages={76--83},
  year={2016},
  publisher={IEEE}
}

@inproceedings{sani2014exploiting,
  title={Exploiting easy data in online optimization},
  author={Sani, Amir and Neu, Gergely and Lazaric, Alessandro},
  booktitle={Advances in Neural Information Processing Systems (NeurIPS)},
  volume={27},
  year={2014}
}

@article{cover1991universal,
  title={Universal portfolios},
  author={Cover, Thomas M},
  journal={Mathematical finance},
  volume={1},
  number={1},
  pages={1--29},
  year={1991},
  publisher={Wiley Online Library}
}

@article{villar2015multi,
  title={Multi-armed bandit models for the optimal design of clinical trials: benefits and challenges},
  author={Villar, Sof{\'\i}a S and Bowden, Jack and Wason, James},
  journal={Statistical science: a review journal of the Institute of Mathematical Statistics},
  volume={30},
  number={2},
  pages={199},
  year={2015}
}

@article{LittlestoneWarmuth1994,
  title={The weighted majority algorithm},
  author={Littlestone, Nick and Warmuth, Manfred K},
  journal={Information and computation},
  volume={108},
  number={2},
  pages={212--261},
  year={1994},
  publisher={Elsevier}
}

@article{FreundSchapire1997,
  title={A decision-theoretic generalization of on-line learning and an application to boosting},
  author={Freund, Yoav and Schapire, Robert E},
  journal={Journal of computer and system sciences},
  volume={55},
  number={1},
  pages={119--139},
  year={1997},
  publisher={Elsevier}
}

@book{CesaBianchiLugosi2006,
  title={Prediction, learning, and games},
  author={Cesa-Bianchi, Nicolo and Lugosi, G{\'a}bor},
  year={2006},
  publisher={Cambridge university press}
}

@article{auer2002finite,
  title={Finite-time analysis of the multiarmed bandit problem},
  author={Auer, Peter and Cesa-Bianchi, Nicolo and Fischer, Paul},
  journal={Machine learning},
  volume={47},
  number={2},
  pages={235--256},
  year={2002},
  publisher={Springer}
}

@article{auer-2002,
  title={Adaptive and self-confident on-line learning algorithms},
  author={Auer, Peter and Cesa-Bianchi, Nicolo and Gentile, Claudio},
  journal={Journal of Computer and System Sciences},
  volume={64},
  number={1},
  pages={48--75},
  year={2002},
  publisher={Elsevier}
}

@article{deRooijEtAl2014,
  title={Follow the leader if you can, hedge if you must},
  author={De Rooij, Steven and Van Erven, Tim and Gr{\"u}nwald, Peter D and Koolen, Wouter M},
  journal={The Journal of Machine Learning Research},
  volume={15},
  number={1},
  pages={1281--1316},
  year={2014},
  publisher={JMLR. org}
}

@inproceedings{koolen2015second,
  title={Second-order quantile methods for experts and combinatorial games},
  author={Koolen, Wouter M and Van Erven, Tim},
  booktitle={Conference on Learning Theory},
  pages={1155--1175},
  year={2015},
  organization={PMLR}
}

@article{even2008regret,
  title={Regret to the best vs. regret to the average},
  author={Even-Dar, Eyal and Kearns, Michael and Mansour, Yishay and Wortman, Jennifer},
  journal={Machine Learning},
  volume={72},
  number={1},
  pages={21--37},
  year={2008},
  publisher={Springer}
}

@inproceedings{gupta2024ai,
  title={AI in Financial Decision-Making: Revolutionizing investment strategies and risk management},
  author={Gupta, Aaryan and Puri, Mayank and Keshan, Mayank and Tiwari, Varun},
  booktitle={This paper was presented at Global Forum for Financial Consumers (GFFC)},
  year={2024}
}

@article{li2025efficient,
  title={Efficient Best-of-Both-Worlds Algorithms for Contextual Combinatorial Semi-Bandits},
  author={Li, Mengmeng and Schneider, Philipp and Aleksi{\'c}, Jelisaveta and Kuhn, Daniel},
  journal={arXiv preprint arXiv:2508.18768},
  year={2025}
}

@article{orabona2016coin,
  title={Coin betting and parameter-free online learning},
  author={Orabona, Francesco and P{\'a}l, D{\'a}vid},
  journal={Advances in Neural Information Processing Systems},
  volume={29},
  year={2016}
}

@article{MullerEtAl2025,
  title={Best of Both Worlds: Regret Minimization versus Minimax Play},
  author={M{\"u}ller, Adrian and Schneider, Jon and Skoulakis, Stratis and Viano, Luca and Cevher, Volkan},
  journal={arXiv preprint arXiv:2502.11673},
  year={2025}
}

@article{mourtada2019optimality,
  title={On the optimality of the Hedge algorithm in the stochastic regime},
  author={Mourtada, Jaouad and Ga{\"\i}ffas, St{\'e}phane},
  journal={Journal of Machine Learning Research},
  volume={20},
  number={83},
  pages={1--28},
  year={2019}
}

@inproceedings{bubeck2012best,
  title={The best of both worlds: Stochastic and adversarial bandits},
  author={Bubeck, S{\'e}bastien and Slivkins, Aleksandrs},
  booktitle={Conference on Learning Theory},
  pages={42--1},
  year={2012},
  organization={JMLR Workshop and Conference Proceedings}
}

@article{ganzfried2015safe,
  title={Safe opponent exploitation},
  author={Ganzfried, Sam and Sandholm, Tuomas},
  journal={ACM Transactions on Economics and Computation (TEAC)},
  volume={3},
  number={2},
  pages={1--28},
  year={2015},
  publisher={ACM New York, NY, USA}
}

@inproceedings{damer2017safely,
  title={Safely using predictions in general-sum normal form games},
  author={Damer, Steven and Gini, Maria},
  booktitle={Proceedings of the 16th conference on autonomous agents and multiagent systems},
  pages={924--932},
  year={2017}
}

@article{liu2022safe,
  title={Safe opponent-exploitation subgame refinement},
  author={Liu, Mingyang and Wu, Chengjie and Liu, Qihan and Jing, Yansen and Yang, Jun and Tang, Pingzhong and Zhang, Chongjie},
  journal={Advances in Neural Information Processing Systems},
  volume={35},
  pages={27610--27622},
  year={2022}
}

@article{hutter2005adaptive,
  title={Adaptive online prediction by following the perturbed leader},
  author={Hutter, Marcus and Poland, Jan},
  year={2005},
  publisher={Journal of Machine Learning Research}
}

@article{kapralov2011prediction,
  title={Prediction strategies without loss},
  author={Kapralov, Michael and Panigrahy, Rina},
  journal={Advances in Neural Information Processing Systems},
  volume={24},
  year={2011}
}

@article{koolen2013pareto,
  title={The pareto regret frontier},
  author={Koolen, Wouter M},
  journal={Advances in Neural Information Processing Systems},
  volume={26},
  year={2013}
}

@inproceedings{cutkosky2018black,
  title={Black-box reductions for parameter-free online learning in banach spaces},
  author={Cutkosky, Ashok and Orabona, Francesco},
  booktitle={Conference On Learning Theory},
  pages={1493--1529},
  year={2018},
  organization={PMLR}
}

@article{orabona2019modern,
  title={A modern introduction to online learning},
  author={Orabona, Francesco},
  journal={arXiv preprint arXiv:1912.13213},
  year={2019}
}

@inproceedings{wu2016conservative,
  title={Conservative bandits},
  author={Wu, Yifan and Shariff, Roshan and Lattimore, Tor and Szepesv{\'a}ri, Csaba},
  booktitle={International Conference on Machine Learning},
  pages={1254--1262},
  year={2016},
  organization={PMLR}
}

@inproceedings{garcelon2020conservative,
  title={Conservative exploration in reinforcement learning},
  author={Garcelon, Evrard and Ghavamzadeh, Mohammad and Lazaric, Alessandro and Pirotta, Matteo},
  booktitle={International conference on artificial intelligence and statistics},
  pages={1431--1441},
  year={2020},
  organization={PMLR}
}

@article{badanidiyuru2018bandits,
  title={Bandits with knapsacks},
  author={Badanidiyuru, Ashwinkumar and Kleinberg, Robert and Slivkins, Aleksandrs},
  journal={Journal of the ACM (JACM)},
  volume={65},
  number={3},
  pages={1--55},
  year={2018},
  publisher={ACM New York, NY, USA}
}

@article{efroni2020exploration,
  title={Exploration-exploitation in constrained mdps},
  author={Efroni, Yonathan and Mannor, Shie and Pirotta, Matteo},
  journal={arXiv preprint arXiv:2003.02189},
  year={2020}
}

@article{liu2021learning,
  title={Learning policies with zero or bounded constraint violation for constrained mdps},
  author={Liu, Tao and Zhou, Ruida and Kalathil, Dileep and Kumar, Panganamala and Tian, Chao},
  journal={Advances in Neural Information Processing Systems},
  volume={34},
  pages={17183--17193},
  year={2021}
}

@article{lattimore2015pareto,
  title={The pareto regret frontier for bandits},
  author={Lattimore, Tor},
  journal={Advances in Neural Information Processing Systems},
  volume={28},
  year={2015}
}

@article{van2020comparator,
  title={Comparator-adaptive convex bandits},
  author={van der Hoeven, Dirk and Cutkosky, Ashok and Luo, Haipeng},
  journal={Advances in Neural Information Processing Systems},
  volume={33},
  pages={19795--19804},
  year={2020}
}

@article{assos2024maximizing,
  title={Maximizing utility in multi-agent environments by anticipating the behavior of other learners},
  author={Assos, Angelos and Dagan, Yuval and Daskalakis, Constantinos},
  journal={Advances in Neural Information Processing Systems},
  volume={37},
  pages={38769--38798},
  year={2024}
}

@inproceedings{braverman2018selling,
  title={Selling to a no-regret buyer},
  author={Braverman, Mark and Mao, Jieming and Schneider, Jon and Weinberg, Matt},
  booktitle={Proceedings of the 2018 ACM Conference on Economics and Computation},
  pages={523--538},
  year={2018}
}

@article{deng2019strategizing,
  title={Strategizing against no-regret learners},
  author={Deng, Yuan and Schneider, Jon and Sivan, Balasubramanian},
  journal={Advances in neural information processing systems},
  volume={32},
  year={2019}
}

@article{deng2019prior,
  title={Prior-free dynamic auctions with low regret buyers},
  author={Deng, Yuan and Schneider, Jon and Sivan, Balasubramanian},
  journal={Advances in Neural Information Processing Systems},
  volume={32},
  year={2019}
}

@article{kolumbus2024contracting,
  title={Contracting with a learning agent},
  author={Kolumbus, Yoav and Schneider, Jon and Talgam-Cohen, Inbal and Vlatakis-Gkaragkounis, Emmanouil-Vasileios and Wang, Joshua R and Weinberg, SM},
  journal={Advances in Neural Information Processing Systems},
  volume={37},
  pages={77366--77408},
  year={2024}
}

@inproceedings{mansour2022strategizing,
  title={Strategizing against learners in bayesian games},
  author={Mansour, Yishay and Mohri, Mehryar and Schneider, Jon and Sivan, Balasubramanian},
  booktitle={Conference on Learning Theory},
  pages={5221--5252},
  year={2022},
  organization={PMLR}
}
\bibliographystyle{plainnat}
\clearpage
\tableofcontents
\appendix
\clearpage
\begin{figure}[t]
\centering
\begin{tikzpicture}[
    every node/.style={font=\small},
    lbl/.style={font=\scriptsize\itshape, text=slate, align=center},scale=1.4
]

\begin{scope}[opacity=0.10]
  \fill[brickred]    (90:1.55)  circle (2.0cm);
  \fill[royalblue]   (210:1.55) circle (2.0cm);
  \fill[forestgreen]  (330:1.55) circle (2.0cm);
\end{scope}
\draw[brickred,   thick] (90:1.55)  circle (2.0cm);
\draw[royalblue,  thick] (210:1.55) circle (2.0cm);
\draw[forestgreen, thick] (330:1.55) circle (2.0cm);

\node[font=\normalsize\bfseries, text=brickred, align=center]
  at (90:2.5) {Baseline\\[-2pt]Safety};
\node[font=\footnotesize, text=brickred!80!black] at (90:1.6) {$\mathcal{R}(\mu^c)\!=\!O(\log T)$};

\node[font=\normalsize\bfseries, text=royalblue, align=center]
  at (220:2.5) {Adversarial\\[-2pt]Robustness};
\node[font=\footnotesize, text=royalblue!80!black] at (218:1.65) {$\tilde{O}\!\bigl(\!\sqrt{T}\bigr)$};

\node[font=\normalsize\bfseries, text=forestgreen!85!black, align=center]
  at (320:2.55) {Stochastic\\[-2pt]Adaptivity};
\node[font=\footnotesize, text=forestgreen!70!black] at (322:1.65) {$\tilde{O}(1/\Delta)$};

\node[lbl] at (150:1.30) {\textnormal{Safe}\\[-2pt]\textnormal{Adv.}};
\node[lbl] at (30:1.30)  {\textnormal{Safe}\\[-2pt]\textnormal{Stoc.}};
\node[lbl] at (270:0.65) {\textnormal{BOBW}};

\node[fill=white, draw=black!60, semithick, rounded corners=3pt,
      inner xsep=6pt, inner ysep=4pt,
      blur shadow={shadow blur steps=5, shadow xshift=0.4pt, shadow yshift=-0.4pt}]
  (center) at (0, 0.15) {\footnotesize\bfseries\textsc{Compass-Hedge}};

\coordinate (extSA) at (160:3.85);
\coordinate (extSS) at (20:3.85);
\coordinate (extBW) at (270:3.25);

\draw[gray!50, thin] (150:1.55) -- (extSA);
\node[font=\tiny, text=slate, align=center, anchor=south] at (extSA)
  {\citet{even2008regret}\\[-1pt]Phased Aggression};

\draw[gray!50, thin] (30:1.55) -- (extSS);
\node[font=\tiny, text=slate, align=center, anchor=south] at (extSS)
  {\citet{MullerEtAl2025}\\[-1pt](Appendix)};

\draw[gray!50, thin] (270:0.95) -- (extBW);
\node[font=\tiny, text=slate, align=center, anchor=north] at (extBW)
  {Hedge / FTRL\\[-1pt]\citep{mourtada2019optimality}};

\end{tikzpicture}
\caption{\textbf{The Best-of-Three-Worlds landscape.} Each circle represents one desideratum. Prior algorithms (annotations) cover at most two regions simultaneously. \textsc{Compass-Hedge} (center) is the first to unify all three in a single, parameter-free algorithm.}
\label{fig:three_worlds}
\end{figure}
\section{Extended related work and motivating scenarios}
\label{app:related_work}

The literature on online learning is vast, spanning several decades of research in information theory, statistics, and computer science. We provide here a more detailed contextualization of \textsc{COMPASS-Hedge} within the broader landscape of sequential decision-making.

\subsection{Exploitation vs.\ safety trade-offs}
Safe exploitation methods typically deviate from minimax play to target weak opponents, often using parameters to balance safety and aggression~\cite{damer2017safely, liu2022safe}. This approach forces a compromise: algorithms either lack theoretical exploitation guarantees~\cite{ganzfried2015safe} or risk linear regret against strong adversaries~\cite{damer2017safely, liu2022safe}. \textsc{Compass-Hedge} avoids these trade-offs by defining safety as achieving $\tilde{\mathcal{O}}(1)$ regret against a baseline, ensuring robustness without sacrificing optimal adversarial rates.
\subsection{Comparator adaptivity in full information}
Safety, defined as ensuring low regret relative to a pre-specified baseline policy $\mu^c$, is a cornerstone of ``safe AI.'' In the full-information setting, it is established that algorithms can achieve constant regret against a specific comparator strategy while maintaining optimal worst-case regret guarantees. Notable examples include the Phased Aggression framework~\cite{even2008regret} and various adaptive strategies~\cite{hutter2005adaptive, kapralov2011prediction, koolen2013pareto, sani2014exploiting}. Similarly, parameter-free methods adapt to unknown comparators~\cite{orabona2016coin, cutkosky2018black, orabona2019modern}. These works successfully resolve the ``Best-of-Both-Worlds'' challenge regarding safety and adversarial robustness. However, they typically do not account for the stochastic nature of the environment, potentially missing the opportunity for gap-dependent logarithmic regret. Our work extends this line of inquiry by unifying these objectives.

\subsection{Hardness of safe bandit learning}
Under bandit feedback, achieving constant regret against a deterministic comparator implies linear worst-case regret against other actions~\cite{lattimore2015pareto}. This result indicates that the $\tilde{\mathcal{O}}(1)$ safety guarantees are impossible in the bandit setting without relaxing assumptions on the comparator, such as requiring it to be full-support~\cite{MullerEtAl2025}. Although recent advances in bandit convex optimization have explored comparator adaptivity~\cite{van2020comparator}, they generally cannot replicate the safety guarantee under full information. By operating in the full-information regime, \textsc{Compass-Hedge} avoids these hardness results.

\subsection{Conservative bandits and safe RL}
Our framework's definition of safety is distinct from the paradigms often studied in conservative bandits~\cite{wu2016conservative} and conservative reinforcement learning (RL)~\cite{garcelon2020conservative}. In those contexts, algorithms are designed to satisfy a constraint—often defined as securing at least a $(1-\alpha)$-fraction of a baseline's return. While this ensures performance relative to a baseline, it allows for linear regret $\mathcal{O}(\alpha T)$ relative to the optimal policy in the worst case. Furthermore, in constrained settings such as Constrained MDPs~\cite{badanidiyuru2018bandits, efroni2020exploration}, achieving constant regret against a safe baseline is frequently unobtainable under adversarial environemnt ~\cite{liu2021learning}. In contrast, \textsc{Compass-Hedge} aims for constant $\tilde{\mathcal{O}}(1)$ regret relative to the baseline while simultaneously minimizing regret against the best strategy in hindsight.

\subsection{Foundations of adversarial and adaptive learning}
The adversarial online learning framework, pioneered by \citet{LittlestoneWarmuth1994} and \citet{FreundSchapire1997}, focuses on minimizing regret against an arbitrary sequence of losses. The Hedge and Weighted Majority algorithms established the minimax-optimal $O(\sqrt{T \log A})$ rate, later generalized through the lens of Online Mirror Descent (OMD) and Follow-the-Regularized-Leader (FTRL) \citep{shalev2012online, hazan2016introduction}. 

Subsequent research sought to move beyond worst-case bounds. \emph{First-order} bounds \citep{auer-2002} scale with the loss of the best expert, while \emph{second-order} (variance-dependent) bounds—achieved by algorithms like AdaHedge \citep{deRooijEtAl2014} and Squint \citep{koolen2015second}—exploit cases where experts exhibit low variance or large suboptimality gaps. \citet{mourtada2019optimality} provided a landmark analysis showing that even vanilla Hedge with a $1/\sqrt{t}$ learning rate can be instance-optimal in stochastic settings while remaining robust, though it lacks explicit safety guarantees relative to a specific baseline.

\subsection{Adaptation via doubling tricks}
The strategy of parameter adaptation via geometric restarts, commonly referred to as the ``Doubling Trick,'' is a foundational technique in online learning used to convert horizon-dependent algorithms into anytime strategies. The concept can be traced back to \citet{auer1995gambling} in the context of adversarial bandits. In its most ubiquitous form, popularized by \citet{CesaBianchiLugosi2006} for prediction with expert advice, the trick involves partitioning the time horizon into epochs of length $T_i = 2^i$. At the start of each epoch, the base algorithm is restarted with the new horizon guess. This approach has been widely adapted across the bandit literature; for instance, \citet{auer2010ucb} utilized a sparser doubling schedule with $T_i = 2^{2^i}$ to render the UCB-R algorithm anytime for stochastic environments. In the specific setting of delayed feedback, \citet{joulani2013online} employed this mechanism to adapt to the unknown total delay $D$ by restarting the algorithm whenever the accumulated delay exceeded the current estimate. 

However, while effective for minimax regret, the doubling trick is not without caveats. As analyzed by \citet{besson2018doubling}, the ``oblivious'' nature of the trick—where the algorithm's internal state is reset at every epoch—can be detrimental when preserving finer, instance-dependent guarantees. Specifically, in our safety-constrained setting, a standard doubling trick would periodically reset the negative regret accumulated against the comparator. This necessitates a more intricate, coupled schedule (as employed in \textsc{Compass}-Hedge) that adapts without discarding the statistical evidence required to maintain the $O(1)$ safety bound.

\subsection{The best-of-both-worlds paradigm}
The Best-of-Both-Worlds (BOBW) literature seeks algorithms that perform optimally in both stochastic and adversarial environments without prior knowledge of the regime. While earlier work focused on full-information feedback, the field has recently prioritized the multi-armed bandit (MAB) setting \citep{bubeck2012best, zimmert2021tsallis} due to the challenge of exploration under uncertainty. However, the full-information setting remains technically distinct and practically vital. Modern BOBW methods often rely on adaptive regularization (e.g., Tsallis entropy) to interpolate between the two regimes \citep{zimmert2021tsallis}. \textsc{COMPASS-Hedge} contributes to this line by showing that the "third world" of safety can be integrated into this duality through an empirical doubling-trick on the aggression schedule.

\subsection{Manipulation of learning agents}
While our work focuses on designing learners that are robust to adversarial environments, a growing body of literature investigates the inverse problem: how strategic optimizers can exploit the predictable dynamics of learning agents~\cite{braverman2018selling, deng2019strategizing}. This line of work demonstrates that non-myopic opponents can steer standard mean-based learners (such as MWU) toward outcomes that significantly favor the manipulator. It is characterized across repeated auctions~\cite{braverman2018selling, deng2019prior}, contract design~\cite{kolumbus2024contracting}, and Bayesian games~\cite{mansour2022strategizing}. Most recently, Assos et al.~\cite{assos2024maximizing} demonstrated that while optimal manipulation strategies can be efficiently generated for zero-sum settings , the same task becomes computationally intractable when applied to general-sum game.
\subsection{Detailed motivating scenarios for full-information safety}
We elaborate on why the full-information feedback model is the appropriate abstraction for many safety-critical applications, where counterfactual losses are observable post-hoc:

\begin{itemize}[leftmargin=1.5em, itemsep=4pt]
    \item \textbf{Clinical Decision Support and Titration:} In personalized medicine, clinicians use algorithms to suggest medication dosages. Once a patient is treated, their physiological markers (e.g., blood pressure, glucose levels) are recorded. Biomechanical models or historical patient data allow clinicians to estimate what the response \emph{would have been} for alternative dosages within a local range (the counterfactuals). The algorithm must outperform the standard-of-care ($\mu^c$) if the patient responds predictably (stochastic), but must never exceed a risk threshold if the patient's reaction is anomalous (adversarial) \citep{villar2015multi}.
    
    \item \textbf{Financial Hedging and Index Tracking:} In "Active Indexing," a fund manager aims to outperform a benchmark index (e.g., S\&P 500) by weighting individual stocks or sector ETFs. At the end of each trading day, the exact returns for \emph{all} available assets are known (full-information). The manager seeks an algorithm that exploits market inefficiencies when they appear (stochastic adaptivity) but guarantees that the portfolio's tracking error relative to $\mu^c$ remains near-constant during market crashes or manipulative "flash crashes" \citep{cover1991universal}.

    \item \textbf{Dynamic Resource Allocation in Smart Grids:} Balancing supply and demand in electrical grids requires selecting distribution paths. After the demand is realized, the costs (resistive losses, congestion prices) of all possible paths are observable. A safety-critical baseline $\mu^c$ represents a legacy "stability-first" routing protocol. The learner must minimize energy waste while ensuring that it never performs significantly worse than the legacy protocol, which could otherwise lead to grid instability or blackouts during adversarial demand spikes \citep{tang2016online}.

    \item \textbf{Autonomous Traffic and Logistics Routing:} Fleet management systems (e.g., delivery services) must choose routes for hundreds of vehicles. After a trip, GPS and sensor data from the entire network provide the travel times for all alternative routes. The "baseline" $\mu^c$ is a conservative, high-capacity arterial route. The learner aims to find faster shortcuts (stochastic) but must revert to $\mu^c$ to avoid systemic gridlock if traffic patterns become non-stationary or adversarial due to accidents or sensor failures.

    \item \textbf{Online Content Moderation and Policy Filtering:} Automated moderation systems must decide whether to flag content based on signals from multiple classifiers (the experts). When a human moderator audits a sample of the content, their label provides the ground truth for \emph{all} classifiers simultaneously. The system's baseline $\mu^c$ is a strict "Safe Harbor" filter. The algorithm should adapt to new slang or evolving content (stochastic) but must guarantee safety relative to the baseline to comply with legal regulations during coordinated "trolling" attacks.
\end{itemize}

\section{The Price of Protection}
\label{app:price-of-protection}

A natural question left open by our analysis is whether the logarithmic overheads
in Theorem~\ref{thm:main} are inherent, or whether they are artifacts of the
restart-and-doubling mechanism used by \textsc{Compass-Hedge}. We view this as
one of the most interesting conceptual questions raised by the paper.

Let us first separate this question from the easier two-world settings. If the
goal is only to combine \emph{adversarial robustness} with \emph{baseline safety},
then one does not need the additional logarithmic overheads incurred by our
universal construction. Indeed, in the classical adversarial analysis one starts
from a regret bound of the form
\[
    \Re egret_T
    \le
    \frac{A}{\eta} + B\eta T .
\]
If the relevant regret scale is known in advance, or if one is willing to use a
constant-factor upper estimate of it, then the phased-aggression threshold can be
chosen directly. A constant-factor slack, say a factor of \(2\) or \(3\), is
sufficient to preserve the adversarial guarantee while keeping the comparator
regret \(O(1)\) rather than \(O(\log T)\). Similarly, if the goal is only to
combine \emph{stochastic adaptivity} with \emph{baseline safety}, then a
regime-specific design can be tuned to the stochastic scale and need not pay the
same universal price.

The difficulty in our setting is therefore not safety alone, nor robustness
alone, nor stochastic adaptivity alone. The difficulty is the simultaneous
combination of all three requirements without knowing the regime in advance.
The phase-aggression methodology requires an estimate of the regret scale that
the aggressive learner is expected to incur in the current environment. In an
adversarial regime this scale is of order \(\sqrt{T}\), whereas in a stochastic
regime it may be only logarithmic, and its precise value depends on unknown gap
parameters. Since \textsc{Compass-Hedge} is parameter-free, it must infer this
scale online. From this perspective, the logarithmic overheads in both the
comparator and adversarial regret bounds can be interpreted as the cost of
``guessing'' the correct regret scale, which a priori lies somewhere between
constant order and \(T\).

This viewpoint also explains the role of the doubling trick in our analysis.
The algorithm is performing a data-dependent search over possible regret
budgets. A standard geometric doubling trick is known to preserve
\(\sqrt{T}\)-type adversarial guarantees rather well, but it is typically much
less delicate in logarithmic stochastic regimes; see, for example,
\cite{besson2018doubling}. Our hierarchical doubling mechanism is designed to be
less wasteful in the stochastic case: although it still introduces logarithmic
overheads, it preserves near instance-optimal stochastic regret up to
\(\log^2 T\) factors while maintaining comparator safety. Whether this overhead
can be reduced, or even removed, remains open.

\paragraph{Beyond phased aggression: Prod-style updates.}
One possible route toward eliminating this price is to move away from the
phased-aggression framework altogether. A particularly promising direction is
suggested by the Prod family of multiplicative updates. The original Prod
algorithm of \citet{cesa2007improved} maintains weights \(w_{t,i}\) over experts and
updates them multiplicatively. In a simplified loss-shifted form, the update can
be written as
\[
    \pi_{1,i} = \frac{1}{K},
    \qquad
    \pi_{t+1,i}
    =
    \pi_{t,i}\Bigl(1-\eta(\ell_{t,i}-\lambda_t)\Bigr),
    \qquad
    \lambda_t
    =
    \sum_{j=1}^K \pi_{t,j}\ell_{t,j}.
\]
Here the shift by the learner's own average loss keeps the update centered and
allows the algorithm to operate directly in the policy space. This family was
further developed through D-Prod~\citep{even2008regret}, where losses are
shifted by the loss of a fixed policy, and ML-Prod~\citep{gaillard2014second},
where the shift is taken with respect to the current mixture, along with other
second-order refinements.

Although Prod-style updates and Exponential Weights often lead to comparable
regret guarantees, they are algorithmically quite different. Exponential
Weights has a clean interpretation as entropic mirror descent or FTRL. By
contrast, Prod-MWU is not simply an instantiation of mirror descent with a
standard regularizer. This difference can matter dynamically: in several
game-convergence settings, Prod-type updates exhibit stability properties that
are not identical to those of Exponential Weights.

For baseline safety, the relevant observation is that one can adapt the update
by shifting rewards relative to the baseline signal. In reward notation, a
representative update takes the form
\[
    w_{i,t+1}
    =
    w_{i,t}\Bigl(1+\eta(g_{i,t}-g_{0,t})\Bigr),
\]
where \(g_{0,t}\) denotes the reward of the protected baseline. Such a shift can
make the update comparator-aware at the level of the primitive multiplicative
rule itself, rather than through an external phase-aggression wrapper. Prior
work shows that this type of modification can yield \(O(1)\) regret relative to
the baseline in adversarial settings. The central open question is whether a
suitably designed Prod-MWU variant can also achieve stochastic instance-optimal
rates, while retaining adversarial robustness and comparator safety.

\paragraph{Bandit feedback and Tsallis-Prod.}
The bandit setting makes this question even sharper. In bandit feedback,
\citet{MullerEtAl2025} obtain adversarial robustness together with safety, but
the resulting guarantees are far from stochastic-optimal unless the algorithm is
given oracle knowledge of structural quantities such as arm gaps. Thus, the
bandit analogue of our best-of-three-worlds objective remains substantially
more delicate.

A promising lead comes from recent Prod-style interpretations of Tsallis-based
bandit algorithms. In particular, variants such as TS-Prod \citep{zimmert2024productive} can be derived from
FTRL, but can also be interpreted as approximations of stabilized OMD in the
sense of \citet{fang2022online}. This connection is especially intriguing
because stabilized OMD induced by the \(1/2\)-Tsallis entropy is known to enjoy
best-of-both-worlds guarantees in bandit learning. It is therefore natural to
ask whether one can build a \emph{Safe Tsallis-Prod} update: an algorithm that
combines the baseline-relative centering of Prod with the stochastic/adversarial
adaptivity of Tsallis-style bandit learning.

\paragraph{Open direction.}
The broader question is whether comparator safety can be incorporated directly
into the update rule, rather than enforced through a separate phase-aggression
and budget-estimation mechanism. For full-information feedback, this asks
whether a Prod-MWU-type algorithm can simultaneously achieve
\[
    O(1) \text{ or } \widetilde{O}(1) \text{ baseline regret}, 
    \qquad
    \widetilde{O}(\sqrt{T}) \text{ adversarial regret},
    \qquad
    \widetilde{O}(1/\Delta) \text{ stochastic regret}.
\]
For bandit feedback, the analogous question is whether a Safe Tsallis-Prod
method can achieve the same trilemma under partial information. Resolving either
question would clarify whether the logarithmic price paid by
\textsc{Compass-Hedge} is intrinsic to universal safe learning, or merely a
consequence of the phased-aggression architecture.

\section{High-level proof sketch}
\paragraph{Our Position and Contribution.}
Our work unifies distinct strands of the online learning literature by presenting a \emph{single full-information algorithm} that satisfies a ``Best-of-Three-Worlds'' guarantee. Specifically, it simultaneously achieves:
(i) minimax-optimal adversarial regret (up to logarithmic factors),
(ii) instance-optimal stochastic regret, and
(iii) $O(\log T)$ regret with respect to an arbitrary safe comparator $\mu^c$.
To our knowledge, this is the first result to establish these three guarantees concurrently within a unified framework.

\paragraph{Algorithmic Construction: Beyond Fixed Budgets.}
Our approach builds upon the \emph{Phased Aggression with Mixing} framework introduced by \citet{MullerEtAl2025}. While their work successfully combines comparator-oriented mixing with aggressive learning to achieve $\tilde{O}(\sqrt{T})$ worst-case regret and constant comparator regret, it fundamentally relies on a \emph{pre-specified pseudo-regret budget}. This requirement is prohibitive for achieving instance-optimality in stochastic settings, where the optimal regret rate depends on unknown gap parameters and cannot be fixed a priori.

To overcome this limitation, we replace the fixed budget with a \textbf{data-dependent doubling trick}. Instead of receiving the pseudo-regret target as input, our algorithm estimates the budget dynamically by tracking the \emph{observed} cumulative loss difference between the learner and the best arm. We initialize the estimated budget at a small constant (e.g., $\widehat R_1 = 2$) and double it only when the empirical regret violates the current stage's safety condition. This allows the algorithm to implicitly ``learn'' the complexity of the environment: keeping the budget small in easy (stochastic) instances while expanding it to $O(\sqrt{T})$ in hard (adversarial) scenarios.

\paragraph{Analytical Challenge: Bridging the Gap to Expected Regret.}
A core technical challenge in our analysis is bounding the gap between the \emph{expected regret} and the \emph{pseudo-regret}.
Recall that standard FTRL analyses typically bound the pseudo-regret:
\[
{\mathcal{R}}_{\text{pseudo}} (T)= \max_{\mu\in\Delta_A} \mathbb{E} \left[ \sum_{t=1}^{T}\langle c^t, \mu^t-\mu\rangle \right].
\]
However, our doubling mechanism relies on the \emph{observed} maximum regret, and our final goal is to bound the expected version of this observed regret:
\[
{\mathcal{R}}_{\text{exp}} (T) = \mathbb{E} \left[ \max_{\mu\in\Delta_A}\sum_{t=1}^{T}\langle c^t, \mu^t-\mu\rangle \right].
\]
The discrepancy arises because the algorithm makes decisions based on the realization of the maximum loss difference, not its expectation. To rigorously bound $\mathcal{R}_T$, we must control the concentration of the empirical losses around their means.

\begin{itemize}
    \item \textbf{Stochastic Setting:} We leverage the assumption of a unique optimal arm to show that the gap between the expected maximum and the maximum expectation is $O(\frac{\log A}{\Delta})$. This allows our pseudo-regret bounds to translate directly to expected regret bounds.
    
    \item \textbf{Adversarial Setting:} Here, the gap cannot be neglected. We impose standard boundedness assumptions on the losses (Assumption \hyperref[A1]{A1}) which allow us to apply concentration inequalities (e.g., Azuma-Hoeffding). We show that the deviation term is bounded by $O(\sqrt{T})$, which is absorbed into the minimax rate without degrading the overall guarantee.
\end{itemize}

By combining the adaptive budget scaling with these concentration arguments, we establish that the algorithm's decisions—driven by observed losses—result in a strategy that satisfies the theoretical bounds for both regimes.
\\\textbf{Organization.} The supplementary material is organized to provide a self-contained analysis of \textsc{Compass-Hedge} alongside detailed experimental reproducibility.
\begin{itemize}[leftmargin=*,itemsep=0pt,topsep=-1pt]
    \item \textbf{\cref{app:lem:pseudo-gap}} establishes the fundamental statistical tools, including the martingale concentration inequalities required to bridge the gap between realized losses and expected regret (Step 1 of the Proof Sketch).
    \item \textbf{\cref{app:lem:regret-scale,app:lem:normal-phase,app:lem:phase-exit,app:lem:stage-regret}} contains the core algorithmic analysis. We formally prove the properties of the doubling mechanism and the phase-based safety ``ratchet,'' culminating in the proof of \cref{thm:main}(\ref{app:thm:main}) (Steps 2, 3, and 4).
    \item \textbf{\cref{app:experiments}} provides the extended experimental evaluation. It details the data processing pipeline (sigmoidal transformation, survivorship bias handling), discusses hyperparameter choices, and offers deeper qualitative insights into the regime-specific behavior of the algorithm.
\end{itemize}

\section{Proofs}
\label{app:proofs}
\subsection{Properties of the regret estimator}
\begin{yellownote}
\textbf{Proof Strategy.}
This lemma establishes that the updated budget $\widehat{R}_{s+1}$ provides a tight upper bound on the realized worst-case regret. Specifically, we show that $\widehat{R}_{s+1} < 2 R_{\text{hedge}}$, ensuring that the estimated capacity does not overestimate the actual regret by more than a factor of two. The analysis proceeds in two steps:
\begin{enumerate*}[label=(\roman*)]
    \item we first demonstrate the monotonicity of the estimator ($\widehat{R}_{s+1} \ge \widehat{R}_s$);
    \item we then derive the upper bound using the doubling update rule defined in \cref{alg:doubling_ftrl_comparator}.
\end{enumerate*}
This inequality is critical for subsequent lemmas, as it allows us to bound the number of phases and the regret accumulated within them by relating the budget back to the realized loss.
\end{yellownote}

\begin{lemma}[Bounds on Estimated Regret; Appendix Version of  \cref{lem:regret-scale}]
\label{app:lem:regret-scale}
For every stage $s$, the pseudo-regret capacity satisfies:
\[
\widehat R_s < \widehat R_{s+1} < 2 R_{\text{hedge}},
\]
where $R_{\text{hedge}}$ denotes the realized worst-case regret that triggered the stage update.
\end{lemma}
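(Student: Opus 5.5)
The plan is to read both inequalities directly off the stage-reset rule in line~24 of \cref{alg:doubling_ftrl_comparator}, after recording the trigger condition that precedes it. First I would check by induction on $s$ that $\widehat R_s>0$ for all $s$. The base case is $\widehat R_1=2$. For the inductive step, each update multiplies $\widehat R_s$ by a power of two, which is positive, so every ratio below is well defined.

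Next, fix a stage $s$ that ends, and let $t$ be the round where the reset happens. The reset branch is entered only when the test $R_{\text{hedge}}\le \widehat R_s$ fails, so $R_{\text{hedge}}>\widehat R_s>0$. Set $x:=R_{\text{hedge}}/\widehat R_s$ and $y:=\log_2 x$. Then $x>1$, hence $y>0$, and the update reads
\[
\widehat R_{s+1}=\widehat R_s\, 2^{\lceil y\rceil}.
\]

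For the lower bound, $y>0$ forces $\lceil y\rceil\ge 1$, so $\widehat R_{s+1}\ge 2\widehat R_s>\widehat R_s$. For the upper bound, the elementary fact $\lceil y\rceil<y+1$ for every real $y$ gives
\[
\widehat R_{s+1}<\widehat R_s\, 2^{y+1}=2\widehat R_s\, x=2R_{\text{hedge}}.
\]
In passing, $\lceil y\rceil\ge y$ also gives $\widehat R_{s+1}\ge R_{\text{hedge}}$, so the new budget covers the realized regret that triggered the reset. This is what justifies calling it an upper estimate within a factor of two.

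There is no real obstacle here; the only point needing care is bookkeeping. $R_{\text{hedge}}$ must be the realized value at the triggering round, which is the quantity computed in line~7 over the current stage window. The strictness of both inequalities comes from the strict trigger $R_{\text{hedge}}>\widehat R_s$ and the strict bound $\lceil y\rceil<y+1$. As a consequence, each reset at least doubles the budget, and the budget always stays below twice the maximal realized regret, which is bounded by $T$. So the number of stages is at most $\log_2 T+O(1)$, which is the implication used later.
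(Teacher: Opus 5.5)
Your proof is correct and follows essentially the same route as the paper: both read the two inequalities directly off the reset rule using the strict trigger $R_{\text{hedge}}>\widehat R_s$. Where the paper argues via minimality of the exponent $m=\lceil\log_2(R_{\text{hedge}}/\widehat R_s)\rceil$ to get $2^{m-1}\widehat R_s<R_{\text{hedge}}$, you use the equivalent fact $\lceil y\rceil<y+1$, and you obtain the strict lower bound directly from $\lceil y\rceil\ge 1$, which is slightly cleaner than the paper's non-strict monotonicity step.
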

\begin{proof}$\\$
\begin{itemize}
\item \textbf{Monotonicity ($\widehat R_s \le \widehat R_{s+1}$):}
The inequality holds by construction. In \cref{alg:doubling_ftrl_comparator}, the budget is updated via $\widehat R_{s+1} \gets \widehat R_{s} \cdot 2^m$ for some integer $m \geq 1$ when a reset occurs, or remains constant otherwise. Thus, $\widehat R_{s+1} \ge \widehat R_s$.

\item \textbf{Tightness ($\widehat R_{s+1} < 2 R_{\text{hedge}}$):}
A stage reset is triggered only when $R_{\text{hedge}} > \widehat R_s$. The algorithm finds the smallest integer $m \in \mathbb{N}_{\ge 1}$ such that the new budget covers the realized regret:
\[
2^m \widehat R_s > R_{\text{hedge}}.
\]
The update rule sets $\widehat R_{s+1} = 2^m \widehat R_s$. Due to the minimality of $m$, the exponent $m-1$ would fail to cover the regret, implying:
\begin{equation}
    \label{eq:doubling_bounds}
    2^{m-1} \widehat R_s < R_{\text{hedge}} \le 2^m \widehat R_s = \widehat R_{s+1}.
\end{equation}
To obtain the upper bound, we rearrange the left-hand side of \eqref{eq:doubling_bounds}:
\[
\frac{1}{2} (2^m \widehat R_s) < R_{\text{hedge}} \implies \frac{1}{2} \widehat R_{s+1} < R_{\text{hedge}} \implies \widehat R_{s+1} < 2 R_{\text{hedge}}.
\]
Combining the lower and upper bounds, we conclude that $R_{\text{hedge}} < \widehat R_{s+1} < 2 R_{\text{hedge}}$.
\end{itemize}
\end{proof}

\subsection{Relating expected regret to pseudo-regret}
\label{subsec:pseudo_gap}

\begin{yellownote}
\textbf{Proof Strategy.}
The following proof bounds the gap between expected regret and pseudo-regret under both adversarial and stochastic settings.
In the adversarial setting, we rely on the assumption of controlled adaptivity (\cref{eq:A1}) to apply Bernstein's Inequality, achieving an \(O(\sqrt{T})\) gap.
In the stochastic setting, we rely on the unique optimal arm assumption to achieve an \(O(\frac{\log A}{\Delta})\) gap.
This lemma is essential to extend existing upper bounds on pseudo-regret found in the literature to the expected regret metric used in our phase-based analysis.
\end{yellownote}

\begin{lemma}[Pseudo-Regret vs. Expected Regret; Appendix Version of Lemma \ref{lem:pseudo-gap}]
\label{app:lem:pseudo-gap}
Consider any time interval $I$. Let the regret difference gap be defined as:
\[
\Delta_{\text{gap}}(I) := \mathbb{E}\left[\max_{\mu \in \Delta_A} \sum_{t \in I} \langle c^t, \hat{\mu}^t - \mu \rangle \right] - \max_{\mu \in \Delta_A} \mathbb{E}\left[ \sum_{t \in I} \langle c^t, \hat{\mu}^t - \mu \rangle \right].
\]
Furthermore, let $V_I \ge \sup_{a} \sum_{t \in I} \operatorname{Var}(c^t(a) \mid \mathcal{F}_{t-1})$ denote the upper bound on the cumulative conditional variance, and let $E_I$ denote the cumulative adaptivity error satisfying Assumption \hyperref[A1]{A1}.
Then, the gap satisfies:
\begin{itemize}
    \item \textbf{Adversarial Case:} $\Delta_{\text{gap}}(I) \le E_I + 3\sqrt{V_I \log (2A)} + 2 \log (2A)$.
    \item \textbf{Stochastic Case:} $\Delta_{\text{gap}}(I) \le O(\frac{\log A}{\Delta})$.
\end{itemize}
\end{lemma}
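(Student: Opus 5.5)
The plan is to write the regret of $\hat\mu^t$ against a fixed $\mu$ as a linear functional of the loss sequence and split the losses into a predictable part and a martingale part. Since both objectives are linear in $\mu$, the max over $\Delta_A$ is attained at a vertex $e_a$. Writing $X_a := \sum_{t\in I}\langle c^t,\hat\mu^t - e_a\rangle$, the gap becomes
\[
\Delta_{\mathrm{gap}}(I)=\mathbb{E}\bigl[\max_a X_a\bigr]-\max_a \mathbb{E}[X_a].
\]
Because $\hat\mu^t$ is common to all $a$, its contribution cancels in the difference, up to the term $\mathbb{E}[\sum_t\langle c^t,\hat\mu^t\rangle]$. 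The gap therefore reduces to
\[
\mathbb{E}\bigl[\max_a(-L_a)\bigr]-\max_a\mathbb{E}[-L_a], \qquad L_a:=\sum_{t\in I}c^t(a).
\]
Only the arm losses matter, and the learner's own randomness drops out.

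\textbf{Adversarial case.} Decompose
\[
c^t(a)=m_t(a\mid H_{t-1}) + \xi_t(a),
\]
where $\xi_t(a)$ is a bounded martingale difference with conditional variance summing to at most $V_I$. Next fix an arbitrary reference history $h^\circ_{t-1}$ and write
\[
m_t(a\mid H_{t-1}) = \bar m_t(a) + \delta_t(a),
\]
where $\bar m_t(a):=m_t(a\mid h^\circ_{t-1})$ is deterministic and, by \eqref{eq:A1}, $|\delta_t(a)|\le\varepsilon_t$. Then
\[
-L_a=-\bar M_a-D_a-N_a,
\]
with $\bar M_a$ deterministic, $|D_a|\le E_I$, and $N_a$ a martingale sum. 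Upper bound $\mathbb{E}[\max_a(-L_a)]$ by $\max_a(-\bar M_a)+E_I+\mathbb{E}[\max_a(-N_a)]$. Lower bound $\max_a\mathbb{E}[-L_a]$ by $\max_a(-\bar M_a)-E_I$, using $\mathbb{E}N_a=0$. This nominally costs $2E_I$. To obtain the stated single $E_I$, I would instead centre at $\bar m_t$ chosen as the midpoint of the range of $m_t(a\mid\cdot)$, so that $|\delta_t|\le\varepsilon_t/2$ on each side.

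The remaining term is $\mathbb{E}[\max_a(-N_a)]$, a maximum of $A$ martingales, or $2A$ counting signs. I would bound it by Freedman/Bernstein for martingales with increments bounded by 1 and predictable variance at most $V_I$: $\Pr(-N_a>x)\le\exp\!\bigl(-x^2/(2V_I+2x/3)\bigr)$. Applying a union bound and integrating the tail gives
\[
\mathbb{E}\bigl[\max_a(-N_a)\bigr]\le\sqrt{2V_I\log(2A)}+\tfrac{2}{3}\log(2A)+\text{small},
\]
which fits within $3\sqrt{V_I\log(2A)}+2\log(2A)$. Alternatively, use the exponential supermartingale $\exp(\lambda N - (e^\lambda-1-\lambda)V)$ with a log-sum-exp bound on the max and optimize $\lambda$. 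The only subtle point is that $V_I$ is a deterministic upper bound on the predictable variance, so Freedman applies directly.

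\textbf{Stochastic case.} Here the $c^t$ are i.i.d., so $-L_a$ has mean $-T_I\,\mathbb{E}c(a)$ with $T_I=|I|$, maximized uniquely at $a^\star$. Write
\[
\max_a(-L_a)=-L_{a^\star}+\max_a(L_{a^\star}-L_a)_+ .
\]
The gap then equals $\mathbb{E}\bigl[\max_{a\ne a^\star}(L_{a^\star}-L_a)_+\bigr]$. Each $L_{a^\star}-L_a$ is a sum of $T_I$ i.i.d. terms in $[-1,1]$ with mean $-T_I\Delta_a\le -T_I\Delta$. I would bound the expectation of the positive part by integrating Hoeffding tails: $\Pr(L_{a^\star}-L_a>x)\le\exp(-(x+T_I\Delta_a)^2/(2T_I))$. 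Using $(x+n\Delta)^2\ge 2xn\Delta$ and $(x+n\Delta)^2\ge n^2\Delta^2$ gives a tail of order $\min\{e^{-x\Delta},\ e^{-n\Delta^2/2}\}$, so that summing over arms yields $\mathbb{E}[\cdot]\lesssim\int_0^\infty\min(1,Ae^{-x\Delta})\,dx=(1+\log A)/\Delta$. That is the claimed $O(\log A/\Delta)$. With the max taken inside before the union bound, the $\log A$ comes from the threshold $x_0=\log A/\Delta$.

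I expect the main obstacle to be the adversarial step. Pinning down constants exactly as stated, namely one $E_I$ rather than $2E_I$ and the $3$ and $2$ factors, depends on carefully centring at a deterministic reference mean and on which Bernstein-max bound is used. Conceptually, the difficulty is that conditional means depend on the history, and \eqref{eq:A1} is exactly what lets us swap them for a deterministic sequence.
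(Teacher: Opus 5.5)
Your proposal is correct, and the adversarial half follows the paper's route. The paper also splits the gap into an adaptivity term, bounded by $E_I$ via (A1), and a concentration term, bounded by Bernstein's inequality, a union bound and tail integration. The only difference is the centring. The paper compares $m_t(a\mid h)$ with the unconditional mean $\mathbb{E}_h[m_t(a\mid h)]=\mathbb{E}[c^t(a)]$. That makes the $\max_a\mathbb{E}[\cdot]$ side exact, so a single $E_I$ appears without your midpoint device. Your $\varepsilon_t/2$-per-side version is equally valid.

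In the stochastic half you share the paper's skeleton: compare with the fixed arm $a^\star$, take a union bound over $a\neq a^\star$, and integrate $\min\{1,(A-1)e^{-\Delta x}\}$ to get $(1+\log(A-1))/\Delta$. However, you get the $e^{-\Delta x}$ tail from a fixed-length Hoeffding bound plus $(x+n\Delta)^2\ge 2xn\Delta$, which presupposes that $|I|$ is deterministic.

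The paper instead proves the bound for a phase $I=\{\sigma+1,\dots,\tau\}$ whose endpoints are stopping times. It uses that the post-$\sigma$ losses are i.i.d.\ and independent of $\mathcal{F}_\sigma$. Hoeffding's lemma gives $\mathbb{E}[e^{\Delta X^a_t}]\le e^{-\Delta\Delta_a+\Delta^2/2}\le 1$ for $X^a_t=c^t(a^\star)-c^t(a)$, so $\exp(\Delta S^a_n)$ is a nonnegative supermartingale. Ville's inequality then yields $\Pr(\sup_n S^a_n\ge x)\le e^{-\Delta x}$ uniformly over the random length $\tau-\sigma$.

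For the statement read with a fixed interval, your argument is complete and slightly more elementary; there your reduction is even an identity. But the lemma is later applied to phases that end at data-dependent times, and there only the maximal version works. The upgrade costs one line from the same exponential-moment bound. For random $I$, your identity becomes the inequality $\Delta_{\mathrm{gap}}(I)\le\mathbb{E}[\max_a(L_{a^\star}-L_a)]$, which is all that is needed.
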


\begin{proof}
We analyze the two settings separately.

\paragraph{Part 1: Adversarial Setting.}
We aim to upper bound the quantity:
\[
\Delta_{\text{gap}} = \mathbb{E}\left[\max _{\mu \in \Delta_A} \sum_{t \in I}\left\langle c^t, \hat{\mu}^t-\mu\right\rangle \right] - \max _{\mu \in \Delta_A} \mathbb{E}\left[\sum_{t \in I}\left\langle c^t, \hat{\mu}^t-\mu \right\rangle \right].
\]
First, using the identity $\max_{\mu} \langle v, -\mu \rangle = -\min_a v(a)$, we rewrite the first term:
\begin{equation}
\label{eq:gap_decomp}
\mathbb{E}\left[\max_{\mu \in \Delta_A} \sum_{t \in I}\left\langle c^t, \hat{\mu}^t-\mu \right\rangle\right]
=\mathbb{E} \left[\sum_{t \in I}\left\langle c^t,\hat{\mu}^t \right\rangle \right]
-\mathbb{E}\left[\min _{a \in [A]} \sum_{t \in I} c^t(a)\right].
\end{equation}

\textbf{Step 1: Lower bounding the minimum loss.}
Recall $m_t(a) := \mathbb E[c^t(a)\mid \mathcal F_{t-1}]$. For each arm $a \in [A]$, we decompose the realized cost:
\[
\sum_{t \in I} c^t(a)
=\sum_{t \in I} m_t(a)+\sum_{t \in I}\left(c^t(a)-m_t(a)\right)
\geq \sum_{t \in I} m_t(a)-\left|\sum_{t \in I}\left(c^t(a)-m_t(a)\right)\right|.
\]
By the property of the $L_\infty$ norm, taking the minimum over $a \in [A]$ yields:
\begin{equation}
\min _{a \in [A]} \sum_{t \in I} c^t(a) \geq \min _{a \in [A]} \sum_{t \in I} m_t(a)-\left\|\sum_{t \in I} \left(c^t-m_t\right)\right\|_{\infty}.
\label{eq:min_bound}
\end{equation}
Substituting \cref{eq:min_bound} back into \cref{eq:gap_decomp}, we obtain:
\begin{equation}
\mathbb{E}\left[\max_{\mu} \sum_{t \in I}\left\langle c^t,\hat{\mu}^t-\mu \right\rangle \right]
\leq \mathbb{E} \left[ \sum_{t \in I}\left\langle c^t, \hat{\mu}^t \right\rangle \right] -\mathbb{E}\left[\min _a \sum_{t \in I} m_t(a)\right]
+ \mathbb{E} \left[ \left\|\sum_{t \in I}\left(c^t-m_t\right)\right\|_{\infty} \right].
\label{eq:combined_gap}
\end{equation}

\textbf{Step 2: Analyzing the Expectation of the Estimator.}
We expand the first term on the RHS of \cref{eq:combined_gap}:
\begin{align*}
\mathbb{E} \left[\sum_{t \in I}\left\langle c^t, \hat{\mu}^t \right\rangle \right]
& = \mathbb{E} \left[\sum_{t \in I}\left\langle c^t, \hat{\mu}^t - \mu \right\rangle \right] + \mathbb{E} \left[ \sum_{t \in I}\left\langle c^t, \mu \right\rangle \right] \\
& = \max_{\mu \in \Delta_A}\mathbb{E} \left[\sum_{t \in I}\left\langle c^t, \hat{\mu}^t - \mu \right\rangle \right] + \min_{\mu \in \Delta_A}  \sum_{t \in I}\left\langle  \mathbb{E} \left[ c^t \right] ,\mu \right\rangle.
\end{align*}
Observing that $\min_{\mu} \sum \langle \mathbb{E}[c^t], \mu \rangle = \min_a \sum \mathbb{E}[c^t(a)]$, we substitute this back into \cref{eq:combined_gap}:
\begin{align}
\Delta_{\text{gap}}
\leq \underbrace{\left(\min _a \sum_{t \in I} \mathbb{E}\left[c^t(a)\right] -\mathbb{E} \left[ \min _a \sum_{t \in I} m_t(a)\right]\right)}_{\text{(A) Adaptivity Term}}
+ \underbrace{\mathbb{E} \left[ \left\|\sum_{t \in I}\left(c^t-m_t\right)\right\|_{\infty} \right]}_{\text{(B) Concentration Term}}.
\label{eq:AB_decomp}
\end{align}

\textbf{Step 3: Bounding Term (A) via Controlled Adaptivity.}
Let $L_t(a):=\inf _h m_t(a \mid h)$ and $U_t(a):=\sup _h m_t(a \mid h)$. By Assumption \hyperref[A1]{A1}, $U_t(a)-L_t(a) \leq \varepsilon_t$.
Note that $\mathbb{E}_h\left[m_t(a \mid h)\right] \in \left[L_t(a), U_t(a)\right]$. Thus, for every history $h$:
\[
m_t(a \mid h) \geq L_t(a) \geq \mathbb{E}_h\left[m_t(a \mid h)\right] -\left(U_t(a)-L_t(a)\right)
\geq \mathbb{E}_h\left[m_t(a \mid h)\right] -\varepsilon_t,
\]
where $\mathbb E_h[\cdot]$ denotes expectation when $h=H_{t-1}$ is drawn from its induced distribution. Summing over $t \in I$ and taking the minimum over $a$:
\[
\min_{a \in [A]} \sum_{t \in I} m_t(a \mid h) \geq \min_{a \in [A]} \sum_{t \in I} \mathbb{E}_h\left[m_t(a \mid h)\right] - E_I.
\]
Taking expectation over history $h$ (and noting $\mathbb{E}_h[m_t(a|h)] = \mathbb{E}[c^t(a)]$) gives:
\[
\mathbb{E}\left[\min_{a} \sum_{t \in I} m_t(a)\right] \geq \min_{a} \sum_{t \in I} \mathbb{E}\left[c^t(a)\right] - E_I.
\]
Rearranging implies Term (A) $\leq E_I$.

\textbf{Step 4: Bounding Term (B) via Bernstein Inequality.}
Since $c^t(a) \in [0,1]$, the centered variables $\xi_t(a) = c^t(a) - m_t(a)$ form a martingale difference sequence with $|\xi_t(a)| \le 1$ and conditional variance sum bounded by $V_I$ (or trivially $|I|$).
By Bernstein's Inequality and a union bound over $A$ actions:
\[
\mathbb{P}\left( \left\| \sum_{t \in I} (c^t - m_t) \right\|_\infty \ge x \right) \le 2A \exp \left( - \frac{x^2}{2(V_I + x/3)} \right).
\]
Let $x_0 = \sqrt{2 V_I \log(2A) + \frac{1}{9}(\log 2A)^2} + \frac{1}{3}\log(2A)$. Using the layer-cake representation:
\[
\mathbb{E} \left[ \left\|\sum_{t \in I}(c^t-m_t)\right\|_{\infty} \right]
= \int_0^{\infty} \mathbb{P}\left( \left\|\sum_{t \in I}(c^t-m_t)\right\|_{\infty} \geq x \right) d x
\leq x_0 + \int_{x_0}^{\infty} 2A \exp \left(-\frac{x^2}{2(V_I + x/3)}\right) dx.
\]
Evaluating the tail integral results in a bounded constant. Specifically, we obtain:
\[
\mathbb{E} \left[ \left\|\sum_{t \in I}\left(c^t-m_t\right)\right\|_{\infty} \right]
\leq x_0 + \sqrt{\frac{V_I}{2 \log 2A}} + \frac{2}{3}.
\]
Expanding $x_0$ and simplifying terms (using $\sqrt{a+b} \le \sqrt{a} + \sqrt{b}$):
\begin{align*}
\text{(B)}
&\leq \sqrt{2 V_I \log 2A} + \frac{\log 2A}{3} + \frac{\log 2A}{3} + \sqrt{\frac{V_I}{2 \log 2A}} + \frac{2}{3} \\
&\leq 3\sqrt{V_I \log (2A)} + 2 \log (2A).
\end{align*}
Combining terms (A) and (B) yields the final bound.

\paragraph{Part 2: Stochastic Setting.}
Pick one phase interval from Algorithm~\ref{alg:doubling_ftrl_comparator}, write as:
\[
    I=\{\sigma+1,\ldots,\tau\},
\]
where \(\sigma\) and \(\tau\) are bounded stopping times with respect to the
natural filtration generated by the algorithm and the observed losses.

Let
\[
    \bar c := \mathbb E[c^t], \qquad 
    a^\star \in \arg\min_{a\in[A]} \bar c(a)
\]
and let
\[
    \Delta_a := \bar c(a)-\bar c(a^\star), \qquad
    \Delta := \min_{a\neq a^\star}\Delta_a >0 .
\]

For any interval \(I\), write
\[
    L_I(a) := \sum_{t\in I} c^t(a),
    \qquad
    L_I(\mu) := \sum_{t\in I}\langle c^t,\mu\rangle .
\]
Then
\[
\begin{aligned}
\Delta_{\rm gap}(I)
&:=
\mathbb E\!\left[
    \max_{\mu\in\Delta_A}
    \sum_{t\in I}\langle c^t,\hat\mu^t-\mu\rangle
\right]
-
\max_{\mu\in\Delta_A}
\mathbb E\!\left[
    \sum_{t\in I}\langle c^t,\hat\mu^t-\mu\rangle
\right]  \\
&=
\min_{\mu\in\Delta_A}\mathbb E[L_I(\mu)]
-
\mathbb E\!\left[\min_{\mu\in\Delta_A} L_I(\mu)\right].
\end{aligned}
\]
Since the loss is linear over the simplex, the minima are
attained at vertices, and therefore
\[
\Delta_{\rm gap}(I)
=
\min_{a\in[A]}\mathbb E[L_I(a)]
-
\mathbb E\!\left[\min_{a\in[A]} L_I(a)\right].
\]
Using the fixed optimal arm \(a^\star\), we get the upper bound
\[
\begin{aligned}
\Delta_{\rm gap}(I)
&\le
\mathbb E[L_I(a^\star)]
-
\mathbb E\!\left[\min_{a\in[A]}L_I(a)\right] \\
&=
\mathbb E\!\left[
    \max_{a\in[A]}\{L_I(a^\star)-L_I(a)\}
\right].
\end{aligned}
\]
The term inside the expectation is nonnegative because the maximum includes
\(a=a^\star\).

Now fix any suboptimal arm \(a\neq a^\star\), and define the stopped random
walk
\[
    S_n^a
    :=
    \sum_{r=1}^{n}
    \bigl(c^{\sigma+r}(a^\star)-c^{\sigma+r}(a)\bigr),
    \qquad 0\le n\le T-\sigma .
\]

For $a\neq a^\star$, define
\[
    X_t^a := c^t(a^\star)-c^t(a).
\]

Then $X_t^a\in[-1,1]$ and
\[
    \mathbb E[X_t^a] = -\Delta_a.
\]
Let $\mathcal G_n:=\mathcal F_{\sigma+n}$. Since $\sigma$ is a bounded stopping time and the losses are i.i.d., the post-$\sigma$ losses are independent of $\mathcal F_\sigma$ and have the same distribution as the original losses. Hence, for any $\lambda\in(0,2\Delta_a]$, Hoeffding’s lemma gives:
\[
\mathbb E\!\left[
    e^{\lambda X_{\sigma+r}^a}
    \mid
    \mathcal F_{\sigma+r-1}
\right]
\le
\exp\!\left(
    -\lambda\Delta_a+\frac{\lambda^2}{2}
\right)
\le 1.
\]
Taking $\lambda=\Delta$ gives that
\[
    M_n^a := \exp(\Delta S_n^a),
    \qquad
    S_n^a := \sum_{r=1}^n X_{\sigma+r}^a,
\]
is a nonnegative supermartingale with respect to $(\mathcal G_n)$, for
$0\le n\le T-\sigma$. Therefore, by Ville's inequality,
\[
    \mathbb P\!\left(
        \sup_{0\le n\le T-\sigma} S_n^a \ge x
    \right)
    \le e^{-\Delta x}.
\]
Therefore, for the interval \(I=\{\sigma+1,\ldots,\tau\}\),
\[
\begin{aligned}
\mathbb P\!\left(
    \max_{a\neq a^\star}
    \sup_{0\le n\le T-\sigma} S_n^a
    \ge x
\right)
&\le
\sum_{a\neq a^\star}
\mathbb P\!\left(
    \sup_{0\le n\le T-\sigma}S_n^a\ge x
\right) \\
&\le
(A-1)e^{-\Delta x}.
\end{aligned}
\]
Since $\tau-\sigma\le T-\sigma$,
\[
L_I(a^\star)-L_I(a)
=
S_{\tau-\sigma}^a
\le
\sup_{0\le n\le T-\sigma}S_n^a.
\]
Thus,
\[
\begin{aligned}
\Delta_{\rm gap}(I)
&\le
\mathbb E\!\left[
    \max_{a\neq a^\star}
    \sup_{0\le n\le T-\sigma}S_n^a
\right] \\
&=
\int_0^\infty
\mathbb P\!\left(
    \max_{a\neq a^\star}
    \sup_{0\le n\le T-\sigma}S_n^a
    \ge x
\right)\,dx \\
&\le
\int_0^\infty
\min\{1,(A-1)e^{-\Delta x}\}\,dx \\
&\le
\frac{\log(A-1)+1}{\Delta}
\le
\frac{\log A+1}{\Delta}.
\end{aligned}
\]
The bound does not depend on the particular interval $I$; hence the same bound applies to every phase generated by the algorithm.
\end{proof}

\subsection{Analysis of a normal phase}
\label{subsec:normal_phase_analysis}

\begin{yellownote}
\textbf{Proof Strategy.}
This proof establishes the regret bounds within a "normal" phase $k$ of stage $s$—defined as a phase where the mixing parameter satisfies $\alpha^k < 1$ (i.e., we have not yet fully committed to the aggressive strategy).
The core idea is to decompose the phase-level regret into two components:
\begin{enumerate*}[label=(\roman*)]
    \item a "Hedge-driven" component bounded by the aggressive estimator;
    \item a "Comparator-driven" component bounded by the safety budget.
\end{enumerate*}
We leverage the phase exiting condition to bound the comparator component in terms of the estimated budget $\mathbb{E}[\widehat{R}_s]$. Crucially, we then link this budget back to the expected regret using \cref{lem:regret-scale} (Appendix Version-\cref{app:lem:regret-scale}) and bridge the gap between observed and expected regret via \cref{lem:pseudo-gap}(Appendix Version-\cref{app:lem:pseudo-gap}).
\end{yellownote}

\begin{lemma}[Regret within a Normal Phase; Appendix Version of Lemma \ref{lem:normal-phase}]
\label{app:lem:normal-phase}
Consider a phase $k$ within stage $s$ where the mixing parameter satisfies $\alpha^{k} < 1$. The expected regret accumulated during this phase satisfies:
\begin{itemize}[leftmargin=*, itemsep=0pt, topsep=0pt]
    \item \textbf{Against the Comparator:} $\mathcal R^{k}(\mu^c) \le 2^{k-1}$.
    \item \textbf{Against any $\mu$ (Adversarial):} $\mathcal R^{k}(\mu) \le O(\sqrt{T \log A})$.
    \item \textbf{Against any $\mu$ (Stochastic):} \(\mathcal R^{k}(\mu) \le O(\frac{\log A}{\Delta}) \).
\end{itemize}
\end{lemma}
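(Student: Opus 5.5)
The plan is to work with the decomposition induced by the mixture. During phase $k$ the algorithm plays $\mu^t=\alpha_{[k]}\hat\mu^t+(1-\alpha_{[k]})\mu^c$, so for any fixed $\mu$,
\[
\langle c^t,\mu^t-\mu\rangle=\alpha_{[k]}\langle c^t,\hat\mu^t-\mu\rangle+(1-\alpha_{[k]})\langle c^t,\mu^c-\mu\rangle .
\]
I would bound the two terms separately on the phase interval $I=\{\sigma+1,\dots,\tau\}$, where $\sigma,\tau$ are the stopping times delimiting the phase. In the first term, $\hat\mu^t$ is Hedge restarted at $\sigma+1$ with step $\eta_t=2\sqrt{\log A/(t-\text{start}+1)}$. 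In the second term, the comparator's loss is measured against $\mu$.

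\textbf{Against the comparator.} Take $\mu=\mu^c$. The second term then vanishes identically, so $\mathcal R^k(\mu^c)=\alpha_{[k]}\,\mathbb E[\sum_{t\in I}\langle c^t,\hat\mu^t-\mu^c\rangle]$. Both phase-exit tests are evaluated with start $=\sigma+1$ and hold at every round $t<\tau$ of the phase. The stage test gives $\sum_{j\le t}\langle c^j,\hat\mu^j-\mu^c\rangle\le R_{\text{hedge}}\le\widehat R_s$, since $\mu^c$ is one feasible choice in the max. At the terminal round the bound degrades by at most one unit loss. Hence the realized Hedge regret against $\mu^c$ over the phase is at most $\widehat R_s+1$, and
\[
\mathcal R^k(\mu^c)\le \alpha_{[k]}(\widehat R_s+1)\le \frac{2^{k-1}}{\widehat R_s+1}(\widehat R_s+1)=2^{k-1},
\]
using $\alpha_{[k]}=\min(2^{k-1}/(\widehat R_s+1),1)$. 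For $k=1$ one has $\alpha=1/\widehat R_s$, which gives the same order.

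The main obstacle is exactly this terminal round. If the phase ends because of a stage reset at $\tau$, then $R_{\text{hedge}}>\widehat R_s$ at $\tau$. Only the one-step increment of at most $1$ is uncontrolled, so I would charge it as $\alpha_{[k]}\cdot 1$ and absorb it, or bound the realized regret up to $\tau-1$ and add this single step. Two further points need care. First, $\widehat R_s$ is random, but it is fixed at the start of the stage and hence $\mathcal F_\sigma$-measurable. Second, the bound therefore holds pathwise, and taking expectations causes no gap issue here because $\mu^c$ is a fixed comparator.

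\textbf{Against arbitrary $\mu$.} For the first term I would use the standard anytime Hedge bound with decreasing $\eta_t$. This gives a realized regret over $I$ of at most $O(\sqrt{|I|\log A})\le O(\sqrt{T\log A})$ in the adversarial case. In the stochastic case I would invoke the Mourtada--Gaïffas analysis, whose conditions are valid after the restart because losses after $\sigma$ are i.i.d. and independent of $\mathcal F_\sigma$; it yields $O(\log A/\Delta)$. Passing from realized to expected regret uses \cref{app:lem:pseudo-gap}, which costs $O(\sqrt{|I|})$ or $O(\log A/\Delta)$ respectively.

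For the second term, the phase did not exit aggressively before $\tau$, so the phase test gives $\max_\mu\sum_{j\in I,\,j<\tau}\langle c^j,\mu^c-\mu\rangle\le 2\widehat R_s$. Adding the last step, this becomes $2\widehat R_s+1$. I then need $\widehat R_s$ to be of the right order. By \cref{app:lem:regret-scale}, $\widehat R_s<2R_{\text{hedge}}$ for the regret that triggered the stage. That triggering regret is itself a realized Hedge regret over an earlier interval, so its expectation is $O(\sqrt{T\log A})$, or $O(\log A/\Delta)$ after applying \cref{app:lem:pseudo-gap} once more. Combining the two terms, each weighted by a coefficient at most $1$, gives the stated bounds.
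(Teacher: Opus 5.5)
Your proposal is correct and follows essentially the same route as the paper. Both use the aggressive/conservative split of $\mu^t$ and the stage test (with $\mu^c$ feasible in the max, plus one unit for the terminal round), giving $\alpha(\widehat R_s+1)=2^{k-1}$ against $\mu^c$. For general $\mu$, both use the phase test to bound the conservative term by $O(\widehat R_s)$, then $\widehat R_s<2R_{\text{hedge}}$ together with \cref{app:lem:pseudo-gap} and the Mourtada--Ga\"{\i}ffas bounds. You also note that phase $k=1$ actually uses $\alpha=1/\widehat R_s$, so your argument only yields $1+1/\widehat R_s$ there; the paper's proof glosses over this by applying $\alpha^k=2^{k-1}/(\widehat R_s+1)$ for every $k$.
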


\begin{proof}
For simplicity of notation, let $k$ denote the current phase index $k(s)$ and let $\mathcal{T}_k = \{ \operatorname{start}_k, \dots, \operatorname{start}_{k+1}-1 \}$ be the set of time steps in this phase.
Since $\widehat{R}_s$ is fixed throughout the stage, the mixing parameter $\alpha^k = (2^k - 1)/\widehat{R}_s$ remains constant for all $t \in \mathcal{T}_k$.

The expected regret relative to an arbitrary strategy $\mu \in \Delta_A$ is decomposed as follows:
\begin{align}
\mathcal{R}^k(\mu)
&= \sum_{t \in \mathcal{T}_k} \mathbb{E}\left[\left\langle c^t, \alpha^k \hat{\mu}^t + (1 - \alpha^k) \mu^c \right\rangle - \left\langle c^t, \mu \right\rangle\right] \nonumber \\
&= \underbrace{\alpha^k \sum_{t \in \mathcal{T}_k} \mathbb{E}\left[ \left\langle c^t, \hat{\mu}^t-\mu \right\rangle\right]}_{\text{(I) Aggressive Component}}
+ \underbrace{(1-\alpha^k) \sum_{t \in \mathcal{T}_k} \mathbb{E}\left[ \left\langle c^t, \mu^c-\mu \right\rangle\right]}_{\text{(II) Conservative Component}}.
\label{eq:phase_decomp}
\end{align}

\paragraph{Bounding Term (II): The Safety Check.}
Let $\bar{t} = \operatorname{start}_{k+1} - 1$ be the last time step of the phase (the exit time). The phase continues as long as the cumulative comparator regret does not violate the budget. Therefore, strictly before the exit time (up to $\bar{t}-1$), the condition holds:
\[
\sum_{t=\operatorname{start}_k}^{\bar{t}-1} \langle c^t, \mu^c - \mu \rangle \le 2 \widehat{R}_s.
\]
Including the final time step $\bar{t}$, and using Hölder's inequality ($\|c^t\|_\infty \le 1, \|\mu^c-\mu\|_1 \le 2$), the total realized regret is bounded by:
\[
\sum_{t \in \mathcal{T}_k} \langle c^t, \mu^c - \mu \rangle \le \widehat{R}_s + \langle c^{\bar{t}}, \mu^c - \mu \rangle \le 2\widehat{R}_s + 2.
\]
Substituting this into Term (II):
\[
\text{(II)} \le (1-\alpha^k)(\mathbb{E} [2\widehat{R}_s] + 2).
\]

\paragraph{Combining and Bridging to Expected Regret.}
We now bound the expected regret $\mathcal{R}^k(\mu)$. For stage $s = 1$, Term (I) is bounded by $\alpha^k \mathbb{E}[R_{\text{hedge}}]$. For stage $s \geq 2$, if it is not the last phase of a new stage, then
Term (I) is bounded by $\alpha^k \mathbb{E}[R_{\text{hedge}}]$. Otherwise, Term (I) is bounded by $\alpha^k [\mathbb{E}[R_{\text{hedge}}] + 1]$. Applying \cref{app:lem:regret-scale} (Equation \ref{eq:doubling_bounds}), we know that $\mathbb{E}[\widehat{R}_s] < 2 \mathbb{E}[R_{\text{hedge}}]$. Thus:
\[
\mathcal{R}^k(\mu) \le \alpha^k (\mathbb{E}[R_{\text{hedge}}] + 1) + (1-\alpha^k) (4\mathbb{E}[R_{\text{hedge}}] + 2) \le 4\mathbb{E}[R_{\text{hedge}}] + 2.
\]

We now relate $\mathbb{E}[R_{\text{hedge}}]$ to the pseudo-regret using \cref{app:lem:pseudo-gap}:
\begin{enumerate}
    \item \textbf{Adversarial Setting:}
    Assumption \hyperref[A1]{A1} ensures the gap is $O(\sqrt{T})$. Using Proposition~1 of \citet{mourtada2019optimality}:
    \[\max _{\mu \in \Delta_A} \mathbb{E}\left[\sum \langle c^t, \hat{\mu}^t-\mu \rangle\right]\le \sqrt{T \log A} \] and consequently we have that 
    \[
    \mathbb{E}[R_{\text{hedge}}] \le \max _{\mu \in \Delta_A} \mathbb{E}\left[\sum \langle c^t, \hat{\mu}^t-\mu \rangle\right] + O(\sqrt{T}) \le \sqrt{T \log A} + O(\sqrt{T}).
    \]
    Substituting back: $\mathcal{R}^k(\mu) \le 4\sqrt{T \log A} + O(\sqrt{T}) = O(\sqrt{T \log A})$.

    \item \textbf{Stochastic Setting:}
    Under Assumptions \hyperref[Stoc1]{S1}--\hyperref[Stoc2]{S2}, the pseudo-expected gap is \(O(\frac{\log A}{\Delta})\). :
        \[ \max _{\mu \in \Delta_A}\mathbb{E}\left[\sum \langle c^t, \hat{\mu}^t-\mu \rangle\right]\le \frac{4 \log A + 25}{\Delta}  \] and consequently we have that 
    \[
    \mathbb{E}[R_{\text{hedge}}] \le \frac{4 \log A + 25}{\Delta} + O(\frac{\log A}{\Delta}).
    \]
    Substituting back: \(\mathcal{R}^k(\mu) \le \frac{16 \log A + 100}{\Delta} + O(\frac{\log A}{\Delta}) + 2 = O(\frac{\log A}{\Delta})\).
\end{enumerate}

\paragraph{Comparator Regret ($\mu = \mu^c$).}
Finally, we evaluate the regret against the comparator $\mu^c$. In this case, the second term in \cref{eq:phase_decomp} vanishes because $\langle \mu^c - \mu^c \rangle = 0$. We are left with only the aggressive component.
Recall that the phase condition implies $\sum_{t \in \mathcal{T}_k} \langle c^t, \hat{\mu}^t - \mu \rangle \le \widehat{R}_s + 1$ (otherwise we would have triggered a stage reset, not just a phase exit, or the budget bounds the loss).
\[
\mathcal{R}^k(\mu^c)
= \mathbb{E}\left[ \alpha^k \sum_{t \in \mathcal{T}_k} \langle c^t, \hat{\mu}^t - \mu^c \rangle \right]
\le \mathbb{E}\left[ \alpha^k ( \widehat{R}_s + 1 ) \right].
\]
Substituting $\alpha^k = \frac{2^{k-1}}{\widehat{R}_s + 1}$:
\[
\mathcal{R}^k(\mu^c) \le \frac{2^{k-1}}{\widehat{R}_s + 1} \cdot ( \widehat{R}_s + 1 ) = 2^{k-1}.
\]
This concludes the proof.
\end{proof}

\subsection{Analysis of phase exit conditions}
\label{subsec:phase_exit_analysis}

\begin{yellownote}
\textbf{Proof Strategy.}
This lemma formalizes the ``safety valve" mechanism: we only exit a phase when the comparator proves to be significantly suboptimal relative to the best fixed action. Intuitively, if the comparator is performing poorly, measuring regret \emph{against} it yields a negative value (a gain).

The proof proceeds by decomposing the regret against the comparator into two parts:
\begin{enumerate*}[label=(\roman*)]
    \item the regret of the aggressive learner $\hat{\mu}^t$, which is bounded by the budget $\widehat{R}_s$;
    \item the performance gap of the comparator itself, which, by the exit condition, is bounded by $-2\widehat{R}_s$.
\end{enumerate*}
Combining these shows that the net regret is negative, specifically $-2^{k-1}$. This negative term is crucial for the overall analysis, as it cancels out the positive regret accumulated in normal phases, ensuring the total comparator regret remains $\tilde{O}(1)$.
\end{yellownote}
\begin{lemma}[Comparator Gain upon Phase Exit; Appendix Version of Lemma \ref{lem:phase-exit}]
\label{app:lem:phase-exit}
If the algorithm chooses to exit phase $k$ (triggering a stronger Hedge mix), it implies that the mixture has significantly outperformed the comparator. Specifically:
\[
\mathcal R^k(\mu^c) \le -\,2^{k-1} + 2.
\]
\end{lemma}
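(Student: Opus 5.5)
We decompose the comparator regret over the exiting phase $\mathcal T_k$ exactly as in \cref{eq:phase_decomp}, now with $\mu=\mu^c$. Since $\mu^t-\mu^c=\alpha^k(\hat\mu^t-\mu^c)$, only the aggressive component survives:
\[
\sum_{t\in\mathcal T_k}\langle c^t,\mu^t-\mu^c\rangle=\alpha^k\sum_{t\in\mathcal T_k}\langle c^t,\hat\mu^t-\mu^c\rangle .
\]
We then insert the hindsight minimizer $\mu^\star_k\in\arg\min_\mu \mathcal L^{\bar t}_{\mathrm{start}}(\mu)$ of the phase and split the sum into two parts. The first is the realized Hedge regret $\sum_{t}\langle c^t,\hat\mu^t-\mu^\star_k\rangle$. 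The second is the comparator gap $\sum_t\langle c^t,\mu^\star_k-\mu^c\rangle=-R_{\mathrm{comp}}$, both evaluated at the exit time $\bar t$.

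Next we read both quantities off the control flow of \cref{alg:doubling_ftrl_comparator} at $\bar t$. An aggressive shift is executed only inside the branch $R_{\mathrm{hedge}}\le\widehat R_s$, so at $\bar t$ the first part is at most $\widehat R_s$. The shift also requires $R_{\mathrm{comp}}>2\widehat R_s$, so the second part is strictly below $-2\widehat R_s$. Both statements hold pathwise on the exit event, with no boundary overshoot to correct, because they are checked at the very round that closes the phase. This gives
\[
\sum_{t\in\mathcal T_k}\langle c^t,\mu^t-\mu^c\rangle\le \alpha^k\bigl(\widehat R_s-2\widehat R_s\bigr)=-\alpha^k\widehat R_s .
\]

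We then substitute the phase schedule $\alpha^k=2^{k-1}/(\widehat R_s+1)$, the same one used in the proof of \cref{app:lem:normal-phase}. This yields
\[
-\alpha^k\widehat R_s=-2^{k-1}+\frac{2^{k-1}}{\widehat R_s+1}.
\]
The residual equals $\alpha^k\le 1$ because the phase is normal ($\alpha^k<1$), so the bound is at most $-2^{k-1}+1\le -2^{k-1}+2$. The extra unit of slack absorbs the boundary conventions, namely whether the exit round $\bar t$ is charged to phase $k$ or $k+1$, in the same spirit as the "$+2$" Hölder step in the proof of \cref{app:lem:normal-phase}. Finally, we take expectations. Since $\widehat R_s$ is fixed within the stage and the inequality holds pointwise on the event that phase $k$ exits, we obtain $\mathcal R^k(\mu^c)\le -2^{k-1}+2$.

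The main obstacle is the index bookkeeping of $\alpha$. The pseudocode sets $\alpha\gets 1/\widehat R_s$ at the start of a stage and $\alpha\gets\min(2^{k-1}/(\widehat R_s+1),1)$ after the $k$-th increment. The appendix proof, however, mixes this with $(2^k-1)/\widehat R_s$. I will fix the convention $\alpha^k=2^{k-1}/(\widehat R_s+1)$ for the phase labelled $k$. For $k=1$ I will check separately that $1/\widehat R_s\ge 1/(\widehat R_s+1)$ still gives $-\alpha^1\widehat R_s\le -1+1$, which is within the claimed bound. A secondary subtlety is that "exit" must be interpreted as the aggressive-shift branch rather than a stage reset. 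Under a stage reset the inequality $R_{\mathrm{hedge}}\le\widehat R_s$ fails, and that case is handled instead by \cref{app:lem:normal-phase}.
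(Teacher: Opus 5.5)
Your proposal is correct and follows essentially the same route as the paper. Both arguments keep only the aggressive component $\alpha^k\sum_t\langle c^t,\hat\mu^t-\mu^c\rangle$ and insert the phase's hindsight minimizer $\mu^\star$; the Hedge part is then bounded by the stage budget and the comparator gap by $-2\widehat R_s$ via the exit test, before substituting $\alpha^k=2^{k-1}/(\widehat R_s+1)$. Your version is slightly tighter, giving $-2^{k-1}+1$, because both tests are evaluated at the exit round itself, whereas the paper charges $\widehat R_s+1$ to the Hedge term. Your separate check of the $k=1$ convention $\alpha^1=1/\widehat R_s$ also fills a bookkeeping step the paper leaves implicit.
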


\begin{proof}
For simplicity, we denote the current phase index $k(s)$ simply as $k$.
The proof relies on the specific condition that triggers the phase exit.

\paragraph{Step 1: The Exit Condition.}
At the final time step $t=\operatorname{start}_{k+1}-1$, the algorithm exits because the cumulative comparator regret exceeds the threshold:
\[
\max _{\mu \in \Delta_A} \sum_{j=\operatorname{start}_k}^{\operatorname{start}_{k+1}-1} \langle c^j, \mu^c-\mu \rangle > 2 \widehat{R}_{s}.
\]
Let $\mu^{\star}$ be the maximizer of this sum. The condition implies:
\[
\sum_{j=\operatorname{start}_k}^{\operatorname{start}_{k+1}-1} \langle c^j, \mu^c-\mu^\star \rangle > 2 \widehat{R}_{s}.
\]
Multiplying by $-1$ reverses the inequality, yielding a bound on the comparator's suboptimality:
\begin{equation}
\label{eq:comparator_suboptimality}
\sum_{j=\operatorname{start}_k}^{\operatorname{start}_{k+1}-1} \langle c^j, \mu^\star - \mu^c \rangle < -2 \widehat{R}_{s}.
\end{equation}

\paragraph{Step 2: Regret Decomposition.}
We now analyze the expected regret accumulated during the phase. Recall that our strategy is $\mu^t = \alpha^k \hat{\mu}^t + (1-\alpha^k)\mu^c$.
\begin{align*}
\mathcal{R}^{k} (\mu^c) 
& = \sum_{t=\operatorname{start}_k}^{\operatorname{start}_{k+1}-1} \mathbb{E} \left[ \left\langle c^t, \mu^t-\mu^c\right\rangle \right] \\
& = \sum_{t=\operatorname{start}_k}^{\operatorname{start}_{k+1}-1} \mathbb{E} \left[ \left\langle c^t, \left(\alpha^k \hat{\mu}^t + (1-\alpha^k)\mu^c\right) -\mu^c\right\rangle \right] \\
& = \sum_{t=\operatorname{start}_k}^{\operatorname{start}_{k+1}-1} \alpha^k \mathbb{E} \left[\left\langle c^t, \hat{\mu}^t-\mu^c\right\rangle \right].
\end{align*}
We introduce the maximizer $\mu^\star$ by adding and subtracting it inside the inner product:
\begin{align*}
\mathcal{R}^{k} (\mu^c) 
&= \alpha^k \mathbb{E} \left[ \sum_{t=\operatorname{start}_k}^{\operatorname{start}_{k+1}-1} \left( \left\langle c^t, \hat{\mu}^t-\mu^{\star}\right\rangle + \left\langle c^t, \mu^{\star}-\mu^c\right\rangle \right) \right] \\
&= \alpha^k \underbrace{\mathbb{E} \left[ \sum_{t=\operatorname{start}_k}^{\operatorname{start}_{k+1}-1} \left\langle c^t, \hat{\mu}^t-\mu^{\star}\right\rangle \right]}_{\text{(I) Aggressive Regret}} 
+ \alpha^k \underbrace{\mathbb{E} \left[ \sum_{t=\operatorname{start}_k}^{\operatorname{start}_{k+1}-1} \left\langle c^t, \mu^{\star}-\mu^c\right\rangle \right]}_{\text{(II) Comparator Gap}}.
\end{align*}

\paragraph{Step 3: Final Bound.}
We bound the two terms:
\begin{itemize}
    \item \textbf{Term (I):} By definition, the regret of the aggressive learner $\hat{\mu}^t$ is bounded by the stage budget $\widehat{R}_s$ (otherwise the \emph{stage} would have reset, not just the phase). Thus, $\text{(I)} \le \widehat{R}_s$ + 1.
    \item \textbf{Term (II):} Using the exit condition derived in \cref{eq:comparator_suboptimality}, we have $\text{(II)} < -2 \widehat{R}_s$.
\end{itemize}
Substituting these back:
\[
\mathcal{R}^{k} (\mu^c) 
\le \alpha^k \left( \widehat{R}_{s} + 1 - 2 \widehat{R}_{s} \right)
= -\alpha^k ( \widehat{R}_{s} +1 ) + 2.
\]
Finally, utilizing the definition of the mixing rate $\alpha^k = \frac{2^{k-1}}{\widehat{R}_s + 1}$:
\[
\mathcal{R}^{k} (\mu^c) \le - \frac{2^{k-1}}{\widehat{R}_s + 1} \cdot ( \widehat{R}_s + 1 ) + 2 = -2^{k-1} + 2.
\]
This completes the proof.
\end{proof}

\subsection{Per-stage regret analysis}
\label{subsec:stage_analysis}

\begin{yellownote}
\textbf{Proof Strategy.}
This proof establishes the regret bounds within a single stage $s$. The core idea is to decompose the stage-level regret based on whether the algorithm fully transitions to the aggressive strategy. We distinguish two cases:
\begin{enumerate*}[label=(\roman*)]
    \item \textbf{Conservative Regime ($\alpha^{k(s)} < 1$):} The stage concludes while still mixing with the comparator. In this case, the regret is bounded primarily by the phase decomposition logic.
    \item \textbf{Saturation Regime ($\alpha^{k(s)} = 1$):} The mixing weight saturates, and the algorithm plays pure Hedge in the final phase. Here, the total regret is the sum of the regret accumulated during the mixing phases plus the standard Hedge regret in the final phase.
\end{enumerate*}
We leverage \cref{app:lem:regret-scale} to bound the number of phases and \cref{app:lem:phase-exit} (Phase Exiting Conditions) to handle the telescoping sums of the mixing weights.
\end{yellownote}

\begin{lemma}[Per-Stage Regret Bounds; Appendix Version of Lemma~\ref{lem:stage-regret}]
\label{app:lem:stage-regret}
For any fixed stage $s$:
\begin{itemize}
    \item \textbf{Comparator Regret:} $\mathcal R^s(\mu^c) = O(\log T)$ in both adversarial and stochastic settings.
    \item \textbf{Adversarial Worst-Case:} $\max_{\mu} \mathcal R^s(\mu) = O(\sqrt{T \log A} \log T)$.
    \item \textbf{Stochastic Worst-Case:} $\max_{\mu} \mathcal R^s(\mu) = O(\frac{\log A}{\Delta} \log T)$.
\end{itemize}
\end{lemma}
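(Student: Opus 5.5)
The plan is to split the phases of stage $s$ into three kinds and sum the bounds of \cref{app:lem:normal-phase} and \cref{app:lem:phase-exit}. The kinds are: (a) completed phases $k=1,\dots,K-1$, each ending with an aggressive shift; (b) the final phase $K$ with $\alpha^K<1$, which ends either by a stage reset or at the horizon; (c) possibly a saturated phase with $\alpha=1$.

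The first step is to bound the number of phases. Since $\alpha^k=\min(2^{k-1}/(\widehat R_s+1),1)$, saturation occurs once $2^{k-1}\ge \widehat R_s+1$. Hence $K\le \log_2(\widehat R_s+1)+2$. By \cref{app:lem:regret-scale}, $\widehat R_s<2R_{\text{hedge}}\le 2T$, so $K=O(\log T)$.

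\textbf{Comparator regret.} Each completed phase $k$ contributes at most $-2^{k-1}+2$ by \cref{app:lem:phase-exit}. The last non-saturated phase contributes at most $2^{K-1}$ by \cref{app:lem:normal-phase}. Telescoping gives
\[
\sum_{k=1}^{K-1}(2-2^{k-1})+2^{K-1}=2(K-1)-(2^{K-1}-1)+2^{K-1}=2K-1=O(\log T).
\]
If the stage saturates, the final phase plays pure Hedge. For it I use the stage condition $R_{\text{hedge}}\le\widehat R_s$, plus at most $1$ for the triggering round, to bound its comparator regret by $\widehat R_s+1$. The preceding exit phase, whose index satisfies $2^{K-1}\ge \widehat R_s+1$ roughly, banked a gain of $-2^{K-1}+2$, which cancels this term up to $O(1)$.

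I expect this cancellation to be the main obstacle. The bound on the saturated phase is realized and data-dependent, while the banked gain is in expectation, so both must be handled pathwise before taking expectations. I would therefore redo the argument on realized sums: the exit condition and the stage condition are pathwise, so the inequality $\sum_{t}\langle c^t,\mu^t-\mu^c\rangle\le 2K+O(1)$ holds almost surely. Taking expectations afterwards avoids the gap of \cref{app:lem:pseudo-gap} entirely.

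\textbf{Worst-case regret.} For arbitrary $\mu$, each phase is bounded by \cref{app:lem:normal-phase}: $O(\sqrt{T\log A})$ in the adversarial case and $O(\log A/\Delta)$ in the stochastic case. The saturated phase is plain anytime Hedge restarted at $\text{start}$. Its regret follows from Proposition~1 of \citet{mourtada2019optimality}, combined with \cref{app:lem:pseudo-gap} applied to that interval to pass from expected to pseudo regret, and it obeys the same rates. Multiplying by the $O(\log T)$ phases yields $O(\sqrt{T\log A}\log T)$ and $O(\frac{\log A}{\Delta}\log T)$ respectively.

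In the stochastic case the phase endpoints are stopping times. This is exactly why Part 2 of \cref{app:lem:pseudo-gap} was stated for stopped intervals, and the i.i.d.\ restart of losses after a stopping time justifies reapplying the Hedge stochastic bound within each phase.
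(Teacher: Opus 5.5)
Your proposal is correct and follows the paper's proof essentially step for step. You use the same $O(\log T)$ phase count via \cref{app:lem:regret-scale}, the same telescoping of the bounds from \cref{app:lem:normal-phase} and \cref{app:lem:phase-exit}, the same cancellation of the saturated pure-Hedge phase against the banked gains, and the same appeal to \citet{mourtada2019optimality} for the worst-case rates. Carrying out the comparator cancellation pathwise before taking expectations is a welcome tightening of what the paper writes loosely in expectation notation.

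One small slip: the saturated phase has index $K'$, with $2^{K'-1}\ge\widehat R_s+1$, and its term $\widehat R_s+1$ is cancelled by the geometric sum over all exited phases, $\sum_{k=1}^{K'-1}2^{k-1}=2^{K'-1}-1\ge\widehat R_s$, exactly as in the paper's Subcase~2.2. It is not cancelled by the last exit phase alone, which is only guaranteed to bank about $(\widehat R_s+1)/2$.
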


\begin{proof}
Let $K(s)$ (abbreviated as $K$) be the total number of phases in stage $s$. The regret accumulated in stage $s$ against any strategy $\mu \in \Delta_A$ is the sum of regrets across all $K$ phases. Using the definition of our strategy $\mu^t = \alpha^k \hat{\mu}^t + (1 - \alpha^k) \mu^c$, we write:
\begin{align*}
\mathcal{R}^s(\mu)
&= \sum_{k=1}^{K} \sum_{t = \operatorname{start}_{k} }^{ \operatorname{start}_{k+1} - 1} \mathbb{E}\left[\left\langle c^t, \alpha^{k} \hat{\mu}^t + (1 - \alpha^{k}) \mu^c - \mu \right\rangle\right].
\end{align*}
We analyze the bounds by considering two mutually exclusive cases regarding the mixing coefficient $\alpha^k$.

\paragraph{Case 1: Partial Mixing ($\alpha^k < 1$ for all phases $k \le K$).}
In this scenario, the algorithm always mixes the Hedge strategy with the comparator.
First, we upper bound the number of phases $K$. Since $\alpha^K < 1$, by definition $\frac{2^{K-1}}{\widehat{R}_{s}} < 1$, which implies $2^{K-1} < \widehat{R}_{s}$.
Using \cref{lem:regret-scale}, we know $\widehat{R}_s < 2 R_{\text{hedge}}$. Since $R_{\text{hedge}} \le T$ (trivial linear regret bound), we have:
\[
K < 1 + \log_2 \widehat{R}_s < 2 + \log_2 R_{\text{hedge}} \le 2 + \log_2 T.
\]
Thus, the number of phases is logarithmic in $T$.

\textbf{Subcase 1.1: Worst-Case Regret ($\max_\mu \mathcal{R}^s(\mu)$).}
We sum the regret bounds of each "Normal Phase" (since phases $1 \dots K$ were entered normally).
\begin{itemize}
    \item \textit{Adversarial:} By \cref{app:lem:normal-phase}, each phase contributes at most $O(\sqrt{T \log A})$. Summing over $K = O(\log T)$ phases:
    \[
    \max_\mu \mathcal{R}^s(\mu) \le (2 + \log_2 T) \cdot O(\sqrt{T \log A}) = O(\sqrt{T \log A} \log T).
    \]
    \item \textit{Stochastic:} By \cref{app:lem:normal-phase}, each phase contributes $O(\frac{\log A}{\Delta})$. Summing over $K$:
    \[
    \max_\mu \mathcal{R}^s(\mu) \le (2 + \log_2 T) \cdot O\left(\frac{\log A}{\Delta}\right) = O\left(\frac{\log A}{\Delta} \log T\right).
    \]
\end{itemize}

\textbf{Subcase 1.2: Comparator Regret ($\mathcal{R}^s(\mu^c)$).}
We exploit the phase exit condition. For all phases $k < K$, the algorithm exited because the comparator was performing poorly, yielding a gain $\mathcal{R}^k(\mu^c) \le -2^{k-1}$ (\cref{lem:phase-exit}). The final phase $K$ has not been completed (or is the current one), and its regret is bounded by the budget definition $\alpha^K ( \widehat{R}_s + 1 ) = 2^{K-1}$.
Summing these up:
\begin{align*}
\mathcal{R}^s(\mu^c)
&= \underbrace{\mathcal{R}^K(\mu^c)}_{\text{Current Phase}} + \sum_{k=1}^{K-1} \underbrace{\mathcal{R}^k(\mu^c)}_{\text{Exited Phases}} \le 2^{K-1} + \sum_{k=1}^{K-1} \left( -2^{k-1} \right) + 2 (K-1) = O (\log T) .
\end{align*}

\paragraph{Case 2: Full Hedge ($\alpha^K = 1$).}
Here, the algorithm reaches the final phase $K$ where it plays the Hedge strategy purely ($\alpha^K=1$).
This implies the budget constraint was saturated: $1 = \min\{1, \frac{2^{K-1}}{\widehat{R}_s + 1}\} \implies \widehat{R}_s \le 2^{K-1} + 1$.

\textbf{Subcase 2.1: Worst-Case Regret ($\max_\mu \mathcal{R}^s(\mu)$).}
The regret is the sum of the ``Prefix" phases ($1$ to $K-1$) and the ``Final" phase $K$.
The prefix phases contribute $O(\log T \times \text{Rate})$ exactly as in Case 1.
The final phase is a standard Hedge instance initialized uniformly. By \citet{mourtada2019optimality}:
\begin{itemize}
    \item Adversarial: $\le \sqrt{T \log A}$.
    \item Stochastic: $\le O(\frac{\log A}{\Delta})$.
\end{itemize}
Adding the prefix (logarithmic factor) dominates the final phase term, preserving the overall $O(\log T)$ bounds stated in the Lemma.

\textbf{Subcase 2.2: Comparator Regret ($\mathcal{R}^s(\mu^c)$).}
The exited phases $k < K$ still contribute negative regret $\le -2^{k-1} + 2$.
For the final phase $K$, since we play Hedge ($\alpha^K=1$), the regret is bounded by the stage budget $\widehat{R}_s$ (by design of the doubling trick, the stage ends if regret exceeds this).
Using $\widehat{R}_s \le 2^{K-1} + 1$:
\begin{align*}
\mathcal{R}^s(\mu^c) \le \widehat{R}_s + \sum_{k=1}^{K-1} (-2^{k-1} + 2) \le 2^{K-1} - (2^{K-1} - 1) + 2K= O(\log T).
\end{align*}
In both cases and settings, the comparator regret is $\tilde{O}(1)$, and the worst-case regret scales with $\log T$.
\end{proof}

\subsection{Analysis of \textsc{Compass-Hedge}: proof of main theorem}

\begin{yellownote}
\textbf{Proof Strategy.}
The following proof establishes the upper bounds for the (worst-case) pseudo-regret and comparator regret incurred over $T$ rounds in both stochastic and adversarial environments.
Recall that \cref{alg:doubling_ftrl_comparator} partitions the horizon $T$ into $S$ stages. The analysis proceeds in two steps:
\begin{enumerate*}[label=(\roman*)]
    \item we first bound the total number of stages $S$;
    \item we then utilize the per-stage regret bounds derived in the auxiliary lemmas.
\end{enumerate*}
Summing these bounds over all stages yields the final regret guarantees.
\end{yellownote}

\begin{theorem}[Regret Guarantees (Appendix Version of Theorem \ref{thm:main})]
\label{app:thm:main}
Let $\mu^c \in \Delta_A$ be an arbitrary comparator policy. For any horizon $T$, \cref{alg:doubling_ftrl_comparator} guarantees the following bounds:

\begin{itemize}[leftmargin=*, itemsep=2pt, topsep=2pt]
    \item \textbf{Adversarial Setting.}
    Under Assumption \hyperref[A1]{\textcolor{softblue}{A1}}:
    \[
    \mathcal R(\mu^c) \le O(\log^2 T), \quad
    \mathcal{R}_{\mathrm{pseudo}}(T) \le O\!\left(\sqrt{T \log A}\,\log^2 T\right).
    \]
    \item \textbf{Stochastic Setting.}
    Under Assumptions \hyperref[Stoc1]{\textcolor{softblue}{S1}}--\hyperref[Stoc2]{\textcolor{softblue}{S2}}:
    \[
    \mathcal R(\mu^c) \le O(\log^2 T), \quad
    \mathcal{R}_{\mathrm{pseudo}}(T) \le O\!\left(\frac{\log A\,\log^2 T}{\Delta}\right).
    \]
\end{itemize}
\end{theorem}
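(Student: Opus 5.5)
The plan is to decompose the horizon into the stages produced by \cref{alg:doubling_ftrl_comparator}, bound the number of stages $S$, and then sum the per-stage bounds of \cref{app:lem:stage-regret}. Both the comparator regret $\mathcal R(\mu^c)$ and the pseudo-regret are additive over a partition of $[T]$ into intervals, up to exchanging $\max_\mu$ and the sum. So I will write
\[
\mathcal R(\mu^c)=\sum_{s=1}^{S}\mathcal R^{s}(\mu^c),\qquad
\mathcal R_{\mathrm{pseudo}}(T)=\max_{\mu}\sum_{s=1}^{S}\mathcal R^{s}(\mu)\le\sum_{s=1}^{S}\max_{\mu}\mathcal R^{s}(\mu).
\]
Since the stage boundaries are random stopping times, I would make this rigorous by writing each sum as $\sum_{s\ge1}\mathbb E[\mathbf 1\{s\le S\}\,\cdot\,]$. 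The per-stage lemmas are stated for a fixed stage index $s$, so they apply term by term.

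\textbf{Step 1 (number of stages).} A stage reset occurs only when $R_{\text{hedge}}>\widehat R_s$. The update then sets $\widehat R_{s+1}\ge 2\widehat R_s$, by the doubling rule and \cref{app:lem:regret-scale}, which gives the strict monotonicity together with the factor-two jump. Starting from $\widehat R_1=2$, this yields $\widehat R_s\ge 2^{s}$. On the other hand, \cref{app:lem:regret-scale} also gives $\widehat R_{s+1}<2R_{\text{hedge}}$, and $R_{\text{hedge}}\le T$ because losses lie in $[0,1]$. Hence $2^{S}\le 2T$, so $S\le 1+\log_2 T$ deterministically.

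\textbf{Step 2 (summation).} For safety, \cref{app:lem:stage-regret} gives $\mathcal R^s(\mu^c)=O(\log T)$ in every regime. That bound rests on the ratchet of \cref{app:lem:normal-phase} and \cref{app:lem:phase-exit}, which does not use any environmental assumption. Multiplying by $S=O(\log T)$ yields $O(\log^2T)$. For the adversarial case under \hyperref[A1]{A1}, each stage contributes $O(\sqrt{T\log A}\log T)$, and summing gives $O(\sqrt{T\log A}\log^2T)$. For the stochastic case under S1--S2, each stage contributes $O(\frac{\log A}{\Delta}\log T)$, and summing gives $O(\frac{\log A\log^2T}{\Delta})$. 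The unfinished last stage is handled by the same per-stage bounds, because \cref{app:lem:stage-regret} already treats a possibly incomplete final phase. This also makes the bounds hold for any horizon, i.e.\ they are anytime.

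\textbf{Main obstacle.} The delicate point is justifying that the per-stage bounds, which were derived using the gap lemma \cref{app:lem:pseudo-gap} on intervals delimited by data-dependent stopping times, can be summed in expectation over a random number of stages. In the stochastic case this is handled because the Ville-inequality bound in \cref{app:lem:pseudo-gap} holds uniformly over post-$\sigma$ intervals. In the adversarial case, I would bound $\mathbb G(I)\le E_I+3\sqrt{|I|\log 2A}+2\log 2A$ and then use $E_I\le C_E\sqrt{|I|}\le C_E\sqrt T$ on each interval. This keeps each interval's contribution at $O(\sqrt T)$ regardless of where the interval lies. The deterministic bound on $S$ from Step 1 then lets the total be controlled by $(1+\log_2T)$ times the uniform per-stage bound, without needing any independence between $S$ and the stage regrets.
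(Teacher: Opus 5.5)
Your proposal follows the paper's proof: you bound the number of stages by $O(\log T)$ using that each reset at least doubles $\widehat R_s$ while $R_{\text{hedge}}\le T$, then sum the per-stage bounds of \cref{app:lem:stage-regret} over the stages, using $\max_\mu\sum_s\le\sum_s\max_\mu$ for the pseudo-regret. Handling the random stage boundaries through the deterministic bound $S\le 1+\log_2 T$ is a slightly more careful rendering of the same argument, not a different route.
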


\begin{proof}
\cref{alg:doubling_ftrl_comparator} divides the time horizon $T$ into $S$ stages.
We proceed by first bounding the number of stages and then summing the regret across them.

\paragraph{Step 1: Bounding the number of stages ($S$).}
The stage index $s$ increments only when the cumulative regret condition is violated:
\[
\max_{\mu \in \Delta_A} \sum_{j=\text{start}}^{t}
\langle c^j, \hat{\mu}^j - \mu \rangle > \hat{R}_s .
\]
Since losses are bounded $c^t(a) \in [0, 1]$, the cumulative regret cannot exceed the total horizon $T$. Thus, we trivially have:
\[
\max_{\mu \in \Delta_A} \sum_{j=\text{start}}^{t}
\langle c^j, \hat{\mu}^j - \mu \rangle \leq T .
\]
By construction, the pseudo-regret budget $\hat{R}_s$ doubles after each stage reset (specifically $\hat{R}_{s+1} \ge 2\hat{R}_s$). Consequently, the maximum number of stages $S$ is bounded by the number of times $\hat{R}_1$ can be doubled before exceeding $T$.
Let $S_{max}$ be the smallest integer such that $2^{S_{max}-1} \ge T$. It follows that:
\[
S \leq \lceil \log_2 T \rceil + 1 = O(\log T).
\]

\paragraph{Step 2: Summing the regret.}
We now aggregate the regret accumulated in each stage. Invoking \cref{lem:stage-regret}, the regret bounds within any single stage $s$ are:

\begin{itemize}[leftmargin=*]
    \item \textbf{In the Adversarial Setting: }$
    \mathcal{R}^s (\mu^c) \leq O(\log T) \quad \text{and} \quad
    \max _{\mu \in \Delta_A} \mathcal{R}^s (\mu) \leq O(\sqrt{T \log A} \log T).
    $
    
    \item \textbf{In the Stochastic Setting: }$
    \mathcal{R}^s (\mu^c) \leq O(\log T) \quad \text{and} \quad
    \max _{\mu \in \Delta_A} \mathcal{R}^s (\mu) \leq O\left( \frac{\log A \cdot \log T}{\Delta} \right).
    $\end{itemize}

Since there are at most $S = O(\log T)$ stages, the total regret over $T$ rounds is obtained by summing these per-stage bounds.
For the comparator regret in both settings:
\[
\mathcal{R}(\mu^c) = \sum_{s=1}^S \mathcal{R}^s(\mu^c) \le S \cdot O(\log T) = O(\log^2 T).
\]
For the worst-case pseudo-regret in the adversarial setting:
\[
\mathcal{R}_{\mathrm{pseudo}}(T) \le \sum_{s=1}^S O(\sqrt{T \log A} \log T) = O(\sqrt{T \log A} \log^2 T).
\]
Similarly, for the stochastic setting:
\[
\mathcal{R}_{\mathrm{pseudo}}(T) \le \sum_{s=1}^S O\left( \frac{\log A \cdot \log T}{\Delta} \right) = O\left( \frac{\log A \cdot \log^2 T}{\Delta} \right).
\]
This concludes the proof.
\end{proof}

\newpage
\section{Extension to More General Adaptive Adversaries}
\label{app:adaptive-adversaries}

In this appendix we record a simple extension of Assumption~\ref{eq:A1}.
Recall that Assumption~\ref{eq:A1} controls the cumulative adaptivity of the
environment over every interval $I$ through a bound of the form
$E_I \le C_E \sqrt{|I|}$. More generally, one can ask what happens if the
environment is allowed to react more strongly to the learner's past actions,
namely if
\[
    E_I \le C_E |I|^{\gamma},
    \qquad \gamma \in (0,1).
\]
The analysis changes only through the pseudo-regret/expected-regret gap in
Lemma~\ref{lem:pseudo-gap}. Under the above condition, Lemma~\ref{lem:pseudo-gap}
contributes an additional adaptivity term of order $O(|I|^\gamma)$.
Consequently, the normal-phase bound in Lemma~\ref{app:lem:normal-phase} becomes
\[
    \mathcal{R}^k(\mu)
    \le
    \sqrt{T\log A}
    +
    O\!\left(T^\gamma+\sqrt{T}\right).
\]
This yields two regimes.

First, if $\gamma \in (0,1/2]$, then the adaptivity term is dominated by the
usual martingale fluctuation term:
\[
    T^\gamma \le \sqrt{T}.
\]
Hence the normal-phase bound remains
\[
    \mathcal{R}^k(\mu)
    \le
    \sqrt{T\log A}
    +
    O(\sqrt{T}),
\]
and the subsequent stage-level and global regret bounds are unchanged.

Second, if $\gamma \in (1/2,1)$, then the adaptivity term dominates the
$\sqrt{T}$ fluctuation term. In this case Lemma~\ref{lem:stage-regret} becomes
\[
    \max_{\mu}\mathcal{R}^s(\mu)
    =
    O\! \left(\left(T^\gamma + \sqrt{T\log A}\right)\log T \right)
\]
in the adversarial regime. Summing over the $O(\log T)$ stages gives the
corresponding global pseudo-regret bound
\[
    \mathcal{R}_{\mathrm{pseudo}}(T)
    \le
    O\!\left(\left(T^\gamma + \sqrt{T\log A} \right)\log^2 T \right).
\]
Thus, Theorem~\ref{thm:main} remains unchanged for
$\gamma\in(0,1/2]$, while for $\gamma\in(1/2,1)$ the adversarial guarantee
degrades gracefully from the minimax $\widetilde{O}(\sqrt{T})$ scale to the
sublinear scale $\widetilde{O}(T^\gamma)$.

This extension shows that \textsc{Compass-Hedge} is not tied to the exact
square-root adaptivity condition: it remains no-regret for any
$\gamma<1$. However, the threshold $\gamma=1/2$ marks the point at which the
environment's cumulative adaptivity begins to dominate the intrinsic
martingale fluctuations. Going beyond this regime---for instance, obtaining
minimax-rate guarantees under stronger forms of adaptive response, or
identifying alternative stability conditions that permit fully adaptive
adversaries without sacrificing safety---is an interesting direction for future
work.
\newpage
\section{Experimental Evaluation}
\label{sec:experiments}

We complement our theoretical guarantees with experiments designed to test whether
\textsc{Compass-Hedge} achieves the intended universality: robustness in adversarial
environments while retaining strong performance in benign or stochastic regimes. Our
evaluation has two parts. First, we consider a clean adversarial benchmark based on a
zero-sum game construction in which a strategic optimizer induces large regret against
standard Hedge-type dynamics. Second, we evaluate the algorithms on S\&P 500 market
data from 2015--2024, a setting that combines long periods of statistical regularity
with abrupt regime shifts.

\paragraph{Algorithms.}
We compare four methods: the fixed comparator baseline, standard Hedge,
Anytime-Hedge, and \textsc{Compass-Hedge}. In the adversarial experiment, we vary the
comparator-quality parameter $\epsilon$, which interpolates between the exact minimax comparator and the uniform comparator. In the financial experiment, the benchmark is evaluated against two
qualitatively different comparators: the ex-post best single asset and the broad-market
SPY comparator.

\subsection{Adversarial Robustness}

We first test the behavior of \textsc{Compass-Hedge} in an adversarial zero-sum game
environment. The loss sequence is generated by a strategic optimizer that alternates
between carefully chosen regimes, in the spirit of adversarial constructions that force
large regret against standard learning dynamics. This setting is intentionally hostile:
the learner cannot rely on stationarity or a persistent stochastic gap, and therefore
any meaningful guarantee must come from adversarial robustness rather than statistical
regularity.

Figure~\ref{fig:adversarial} reports the resulting regret behavior. Safe comparator is \(
q_\varepsilon \;=\; (1-\varepsilon)\, q_{\text{eq}} \;+\; \varepsilon\, u\), where $u$ is the uniform distribution. At $\varepsilon=0$ the comparator is the true minimax strategy; at $\varepsilon=1$ it is uninformative.  Panel~(a) plots
average per-round regret over time. Across all values of $\varepsilon$, the regret of
\textsc{Compass-Hedge} decays toward zero, suggesting asymptotic no-regret behavior.
This contrasts with bad fixed comparators, whose average regret remains bounded away
from zero. Thus, even though the adversarial construction is designed to exploit
predictable learning behavior, \textsc{Compass-Hedge} avoids linear regret.

Panel~(b) provides a finer view of the rate by plotting cumulative regret normalized
by $\sqrt{t}$. The curves remain approximately stable rather than increasing with
time, supporting the interpretation that cumulative regret scales on the order of
$\sqrt{t}$. Importantly, the price paid for robustness appears mainly as a constant
factor: changing $\varepsilon$ affects the vertical level of the curves, but not the
qualitative $\sqrt{t}$ scaling. In the well-specified comparator case
$(\varepsilon=0)$, the regret is substantially smaller and decays more rapidly,
consistent with the intuition that the algorithm adapts quickly when the comparator is
aligned with the best response structure.

\begin{figure}[t]
    \centering
    \begin{subfigure}[t]{0.49\textwidth}
        \centering
        \includegraphics[width=\linewidth]{alpha_avg_regret.png}
        \caption{Average per-round regret over time.}
    \end{subfigure}
    \hfill
    \begin{subfigure}[t]{0.49\textwidth}
        \centering
        \includegraphics[width=\linewidth]{sqrt_regret_over_sqrt_t.png}
        \caption{Regret normalized by $\sqrt{t}$ over time.}
    \end{subfigure}
    \caption{\textbf{Adversarial robustness.}
    Panel~(a) shows that the average regret of \textsc{Compass-Hedge} vanishes over
    time, while bad fixed comparators suffer constant average regret. Panel~(b) shows
    that regret normalized by $\sqrt{t}$ remains stable, indicating adversarial
    $O(\sqrt{T})$ regret up to constant factors. Shaded regions indicate pointwise
    90\% confidence intervals across trials.}
    \label{fig:adversarial}
\end{figure}

\subsection{Stochastic Adaptivity on Market Data}

We next evaluate whether the same method remains competitive in a more benign,
data-driven environment. We use daily S\&P 500 stock data from 2015--2024. This
setting is not purely stochastic: market data exhibit persistent trends, volatility
clusters, and occasional adversarial-looking regime shifts. At the same time, it
contains exploitable structure, making it a natural testbed for adaptivity. Following
the online portfolio-learning interpretation, each asset is treated as an arm, and the
losses are induced by realized returns.

The key question is whether a single algorithm can perform well relative to two very
different benchmarks. The first benchmark is the ex-post best single asset under the transformed loss sequence, which in
this period is NVIDIA (NVDA).  Competing with this comparator requires aggressive
adaptivity, since the best arm is identified only in hindsight and dominates over a
long horizon. The second benchmark is SPY, a broad-market comparator. Competing with
SPY requires safety: an algorithm that chases the best individual asset too strongly
can easily underperform the diversified market benchmark during unstable periods.

Figure~\ref{fig:stochastic} shows that \textsc{Compass-Hedge} is the only adaptive method that remains close to the SPY comparator while improving over the unprotected Hedge variants against NVDA. In Panel~(a), standard Hedge
and Anytime-Hedge accumulate large regret relative to NVDA, while
\textsc{Compass-Hedge} tracks the best-arm benchmark much more closely. The zoomed
inset highlights that this improvement persists even near the end of the horizon,
rather than being driven only by early transient behavior. In Panel~(b), the difference
is more pronounced: relative to SPY, Hedge and Anytime-Hedge incur substantial
positive regret, whereas \textsc{Compass-Hedge} remains close to the market comparator
throughout the entire period.

This experiment illustrates the central empirical message of the paper. Algorithms
optimized only for adversarial regret may be too conservative or may fail to exploit
persistent stochastic structure. Conversely, algorithms tuned to a favorable
comparator may lose robustness when the environment shifts. \textsc{Compass-Hedge}
balances these objectives: it remains close to the strong ex-post asset comparator
while simultaneously preserving safety against the market baseline.

\begin{figure}[t]
    \centering

    \begin{subfigure}{0.48\textwidth}
        \centering
        \includegraphics[width=\textwidth]{sp500_hedge_family_plots_vs_nvda.png}
        \caption{}
    \end{subfigure}
    \hfill
    \begin{subfigure}{0.48\textwidth}
        \centering
        \includegraphics[width=\textwidth]{sp500_hedge_family_plots_vs_spy.png}
        \caption{}
    \end{subfigure}

    \caption{\textbf{S\&P 500 Hedge family comparison, 2015--2024.}
    Panel~(a) plots cumulative loss minus cumulative NVIDIA loss, i.e.,
    regret against the \emph{ex-post} best single arm. Panel~(b) plots cumulative
    loss minus cumulative SPY loss, i.e., regret against the market comparator.
    \textsc{Compass-Hedge} is the only method that remains competitive with both the
    aggressive best-arm benchmark and the safer market benchmark.}
    \label{fig:stochastic}
\end{figure}

\paragraph{Takeaway.}
Together, the two experiments support the universality claim behind
\textsc{Compass-Hedge}. In the adversarial construction, the algorithm retains
sublinear regret with the expected $O(\sqrt{T})$ scaling. On real financial data, it
also adapts to favorable structure and avoids the failure modes of methods that are
competitive with only one benchmark. Thus, the empirical results mirror the theoretical
motivation: \textsc{Compass-Hedge} reconciles adversarial robustness, comparator
safety, and stochastic adaptivity within a single algorithmic design.
\newpage
\section{Extended experimental evaluation}
\label{app:experiments}

This appendix substantiates the empirical claims of Section~\ref{sec:experiments} by walking through the full experimental pipeline: data, preprocessing, simulation protocol, and a sequence of increasingly stringent diagnostic experiments. Our aim is twofold. First, we want the reader to be able to reproduce every number and every figure. Second, we want to show that the behavior of \textsc{Compass-Hedge} is not an artifact of a particular dataset or a particular adversary, but emerges consistently across heterogeneous environments---from real equity markets to synthetic zero-sum games designed to probe the algorithm's failure modes.

\subsection{A synthetic adversary: minimax environments as a controlled stress test}
\label{app:minimax_setup}

Real data is informative but noisy; it cannot by itself rule out the possibility that \textsc{Compass-Hedge} is getting lucky on average while quietly misbehaving on worst-case instances. To close this loophole, we complement the equity experiments with a synthetic adversary drawn from the zero-sum game pipeline, where the ``right answer'' is known in closed form. This synthetic data set is used for Figure~\ref{fig:adversarial} and following experiments.

\paragraph{Construction.}
In the current implementation, we fix the random seed to $0$ and draw a payoff matrix
\[
M \in [-1,1]^{m \times n}, \qquad M_{ij} \sim \mathrm{Unif}[-1,1],
\]
with $m = \MinimaxRows$ row actions and $n = \MinimaxArms$ learner arms. Solving the row maxmin and column minmax linear programs yields an equilibrium pair $(p_{\text{eq}}, q_{\text{eq}})$ and the game value $V^\star$; the code uses `scipy.optimize.linprog` with the HiGHS backend.

The row player does not play $p_{\text{eq}}$ directly. Instead, letting
\[
\ell = \arg\max_i (p_{\text{eq}})_i,
\]
we construct two neighboring distributions
\[
x' = \bigl(1-(p_{\text{eq}})_\ell\bigr)p_{\text{eq}} + (p_{\text{eq}})_\ell e_\ell,
\qquad
x'' = \bigl(1+(p_{\text{eq}})_\ell\bigr)p_{\text{eq}} - (p_{\text{eq}})_\ell e_\ell,
\]
where $e_\ell$ is the $\ell$th standard basis vector. Thus the perturbation magnitude is determined by the equilibrium mass $(p_{\text{eq}})_\ell$ itself: there is no separate perturbation hyperparameter and no explicit projection step.

The environment then alternates deterministically between these two distributions,
\[
x_t =
\begin{cases}
x', & t \text{ odd},\\
x'', & t \text{ even},
\end{cases}
\]
for $T = \MinimaxHorizon$ rounds. The learner therefore sees a structured non-stationary loss stream
\[
c_t = M^\top x_t, \qquad x_t \in \{x', x''\},
\]
with no additional loss rescaling; equivalently, the learner's score vector is updated using $-c_t$. This construction is deliberately harsh: the adversary is never stationary, yet the optimal strategy and benchmark value are both analytically available. For the synthetic experiment, we set the coefficient in front of $\widehat{R}_s$ in Algorihtm~\ref{alg:doubling_ftrl_comparator} (\texttt{line 11}) to 0.1.

\paragraph{A family of comparators.}
To isolate the role of prior quality, we sweep a one-parameter family interpolating between the exact equilibrium and total agnosticism:
\[
q_\varepsilon \;=\; (1-\varepsilon)\, q_{\text{eq}} \;+\; \varepsilon\, u, \qquad \varepsilon \in \{\MinimaxEpsilons\},
\]
where $u$ is the uniform distribution. At $\varepsilon=0$ the comparator is the true minimax strategy; at $\varepsilon=1$ it is uninformative. Every curve reported below aggregates $ \MinimaxTrials$ independent trials with pointwise 90\% confidence bands.

\begin{figure}[t]
    \centering
    \begin{subfigure}[t]{0.49\textwidth}
        \centering
        \includegraphics[width=\linewidth]{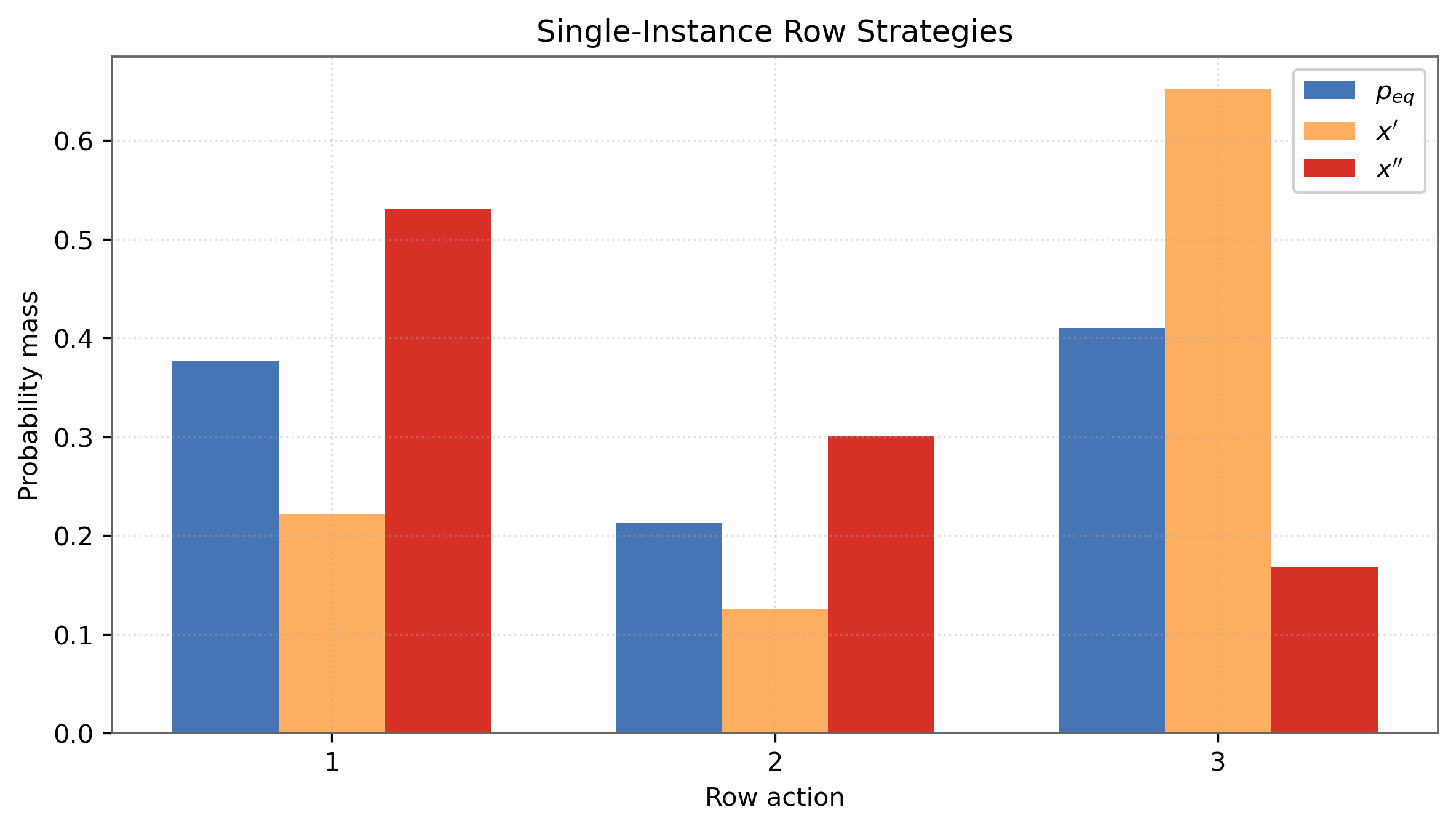}
        \caption{Equilibrium row strategy and its two perturbations}
    \end{subfigure}
    \hfill
    \begin{subfigure}[t]{0.49\textwidth}
        \centering
        \includegraphics[width=\linewidth]{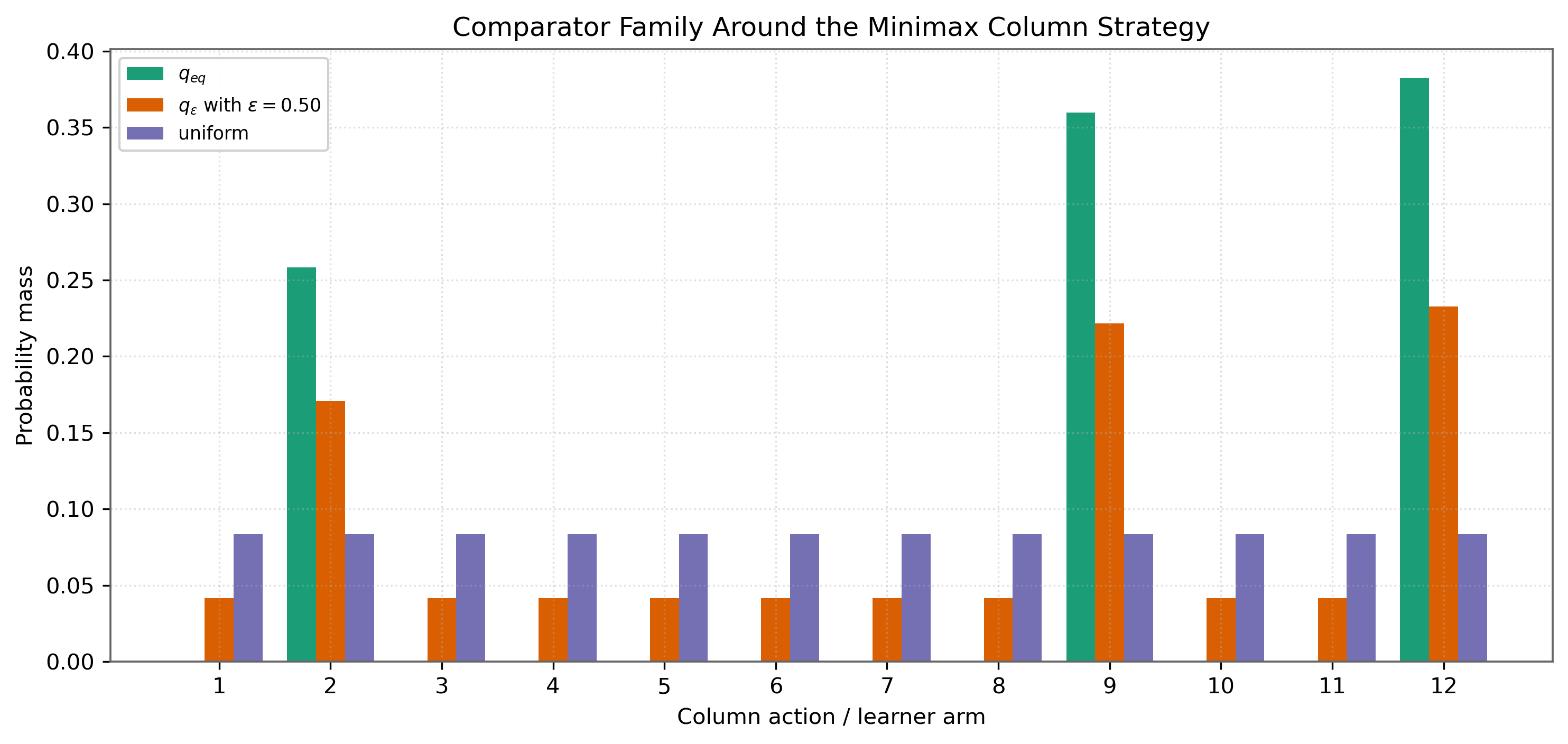}
        \caption{Comparator family, interpolating $q_{\text{eq}} \to u$}
    \end{subfigure}
    \vspace{0.75em}
    \begin{subfigure}[t]{0.8\textwidth}
        \centering
        \includegraphics[width=\linewidth]{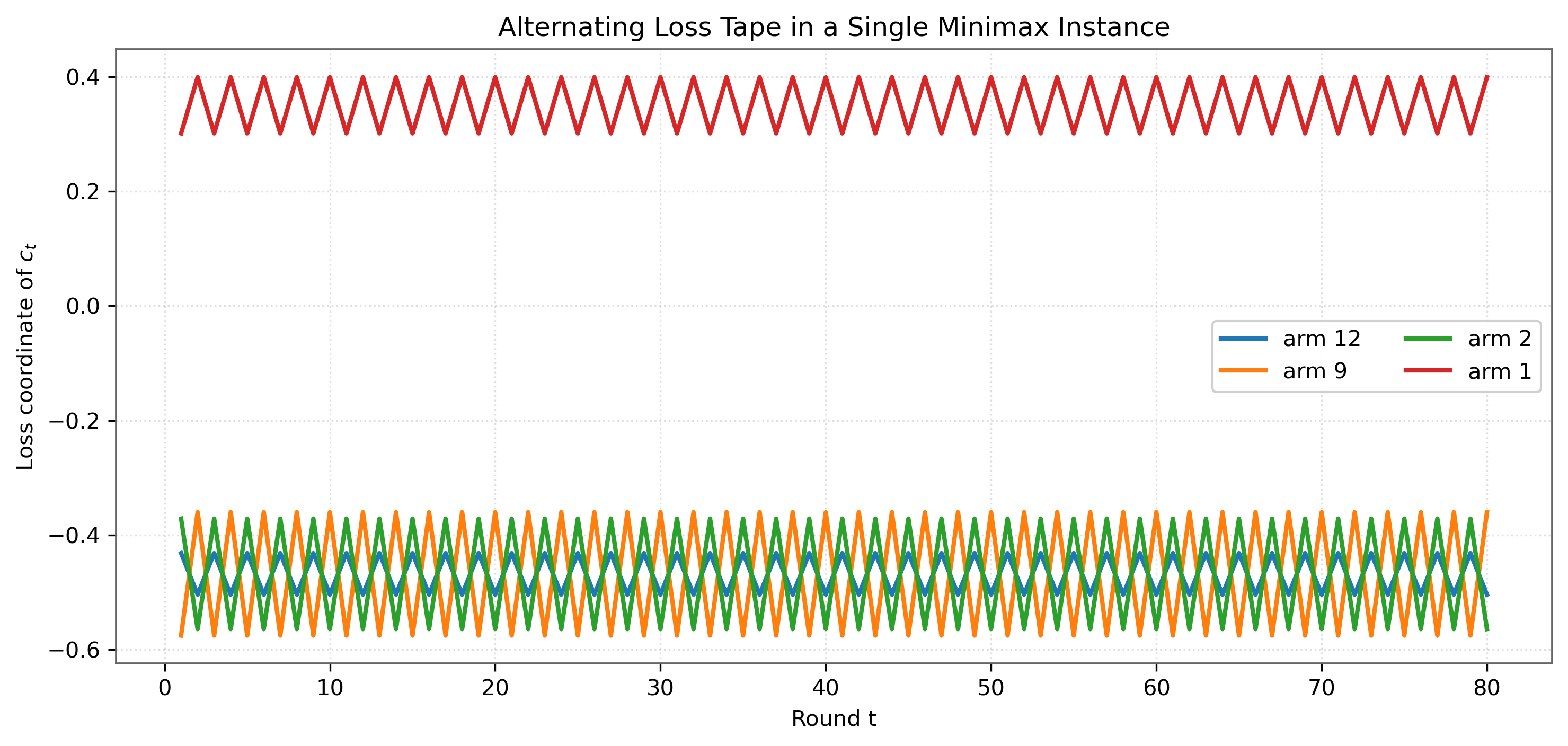}
        \caption{A single-instance view of the alternating loss tape}
    \end{subfigure}
    \caption{\textbf{Anatomy of the synthetic adversary.} The alternating pair $(x', x'')$ produces a structured non-stationary stream anchored to the equilibrium geometry of the game. The sweep over $\varepsilon$ lets us dial comparator quality from ``perfect'' to ``uninformative'' while keeping the underlying environment fixed.}
    \label{fig:minimax-setup}
\end{figure}

Figure~\ref{fig:minimax-setup} makes the construction concrete. The key point is that the safe side information is not arbitrary: it interpolates between the exact minimax strategy and a fully agnostic uniform baseline, so every degradation we observe below can be attributed unambiguously to comparator quality.

\subsubsection{Experiment I: how gracefully does safety degrade?}
\label{app:exp1}

The first experiment asks the most basic question one can ask about a safeguarded learner: \emph{what happens as the safeguard becomes unreliable?} Because $q_{\text{eq}}$ is known exactly, comparator quality is the only moving part.

\begin{figure}[t]
    \centering
    \begin{subfigure}[t]{0.49\textwidth}
        \centering
        \includegraphics[width=\linewidth]{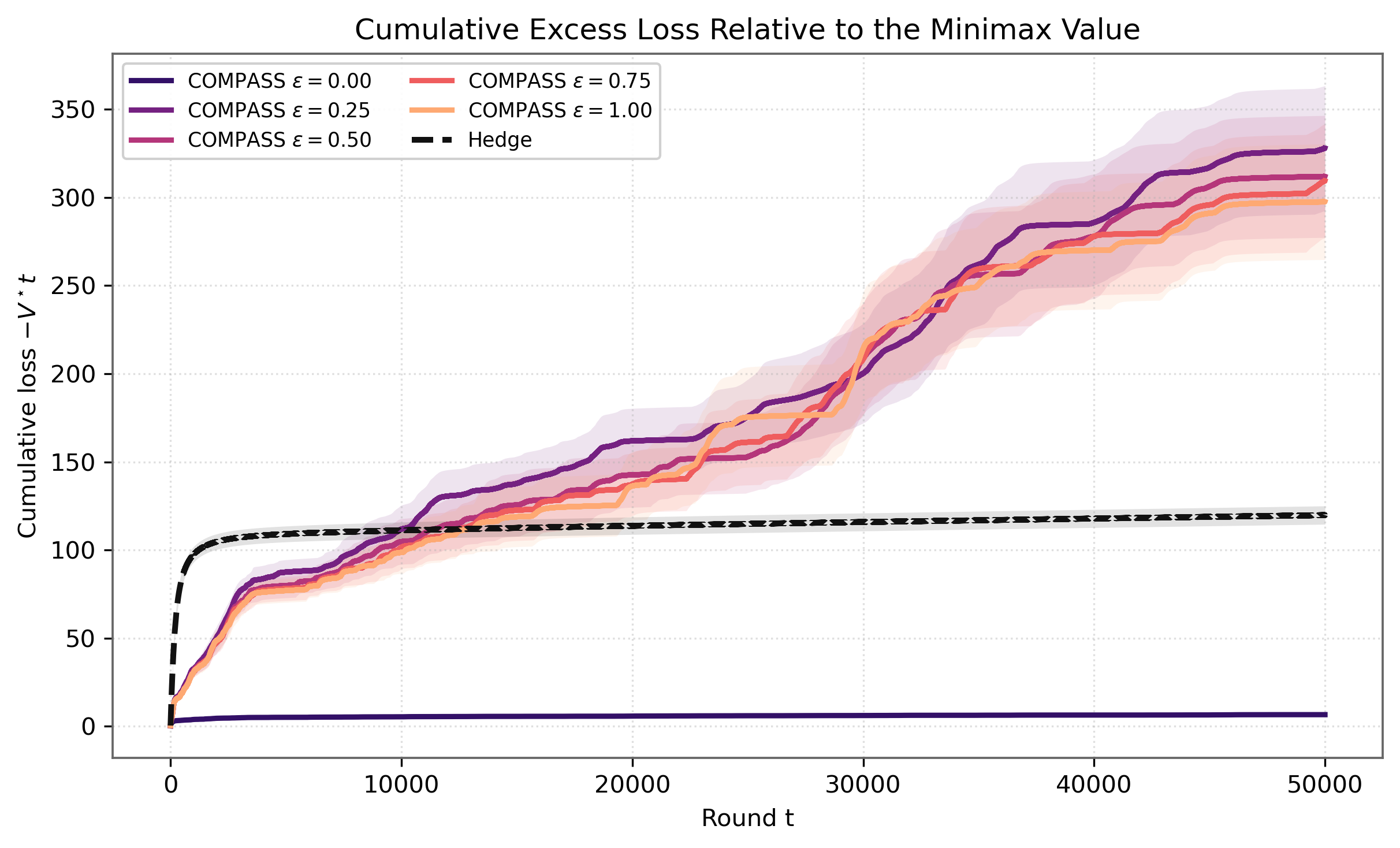}
        \caption{Cumulative excess loss relative to $V^\star t$}
    \end{subfigure}
    \hfill
    \begin{subfigure}[t]{0.49\textwidth}
        \centering
        \includegraphics[width=\linewidth]{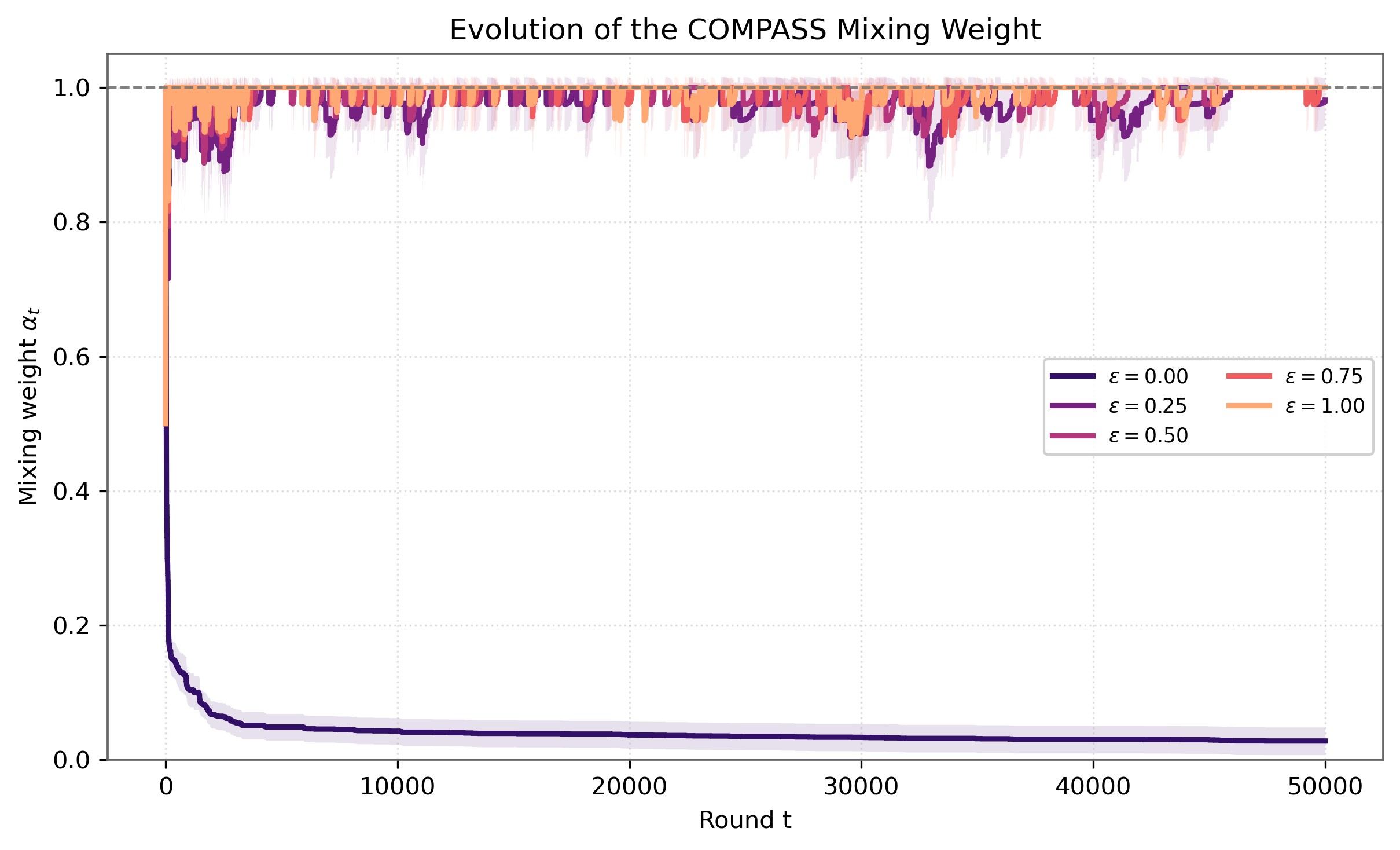}
        \caption{Time-varying mixing weight $\alpha_t$}
    \end{subfigure}
    \caption{Time-series behavior in the comparator-quality sweep.}
    \label{fig:minimax-sweep-timeseries}
\end{figure}

\begin{figure}[t]
    \centering
    \begin{subfigure}[t]{0.49\textwidth}
        \centering
        \includegraphics[width=\linewidth]{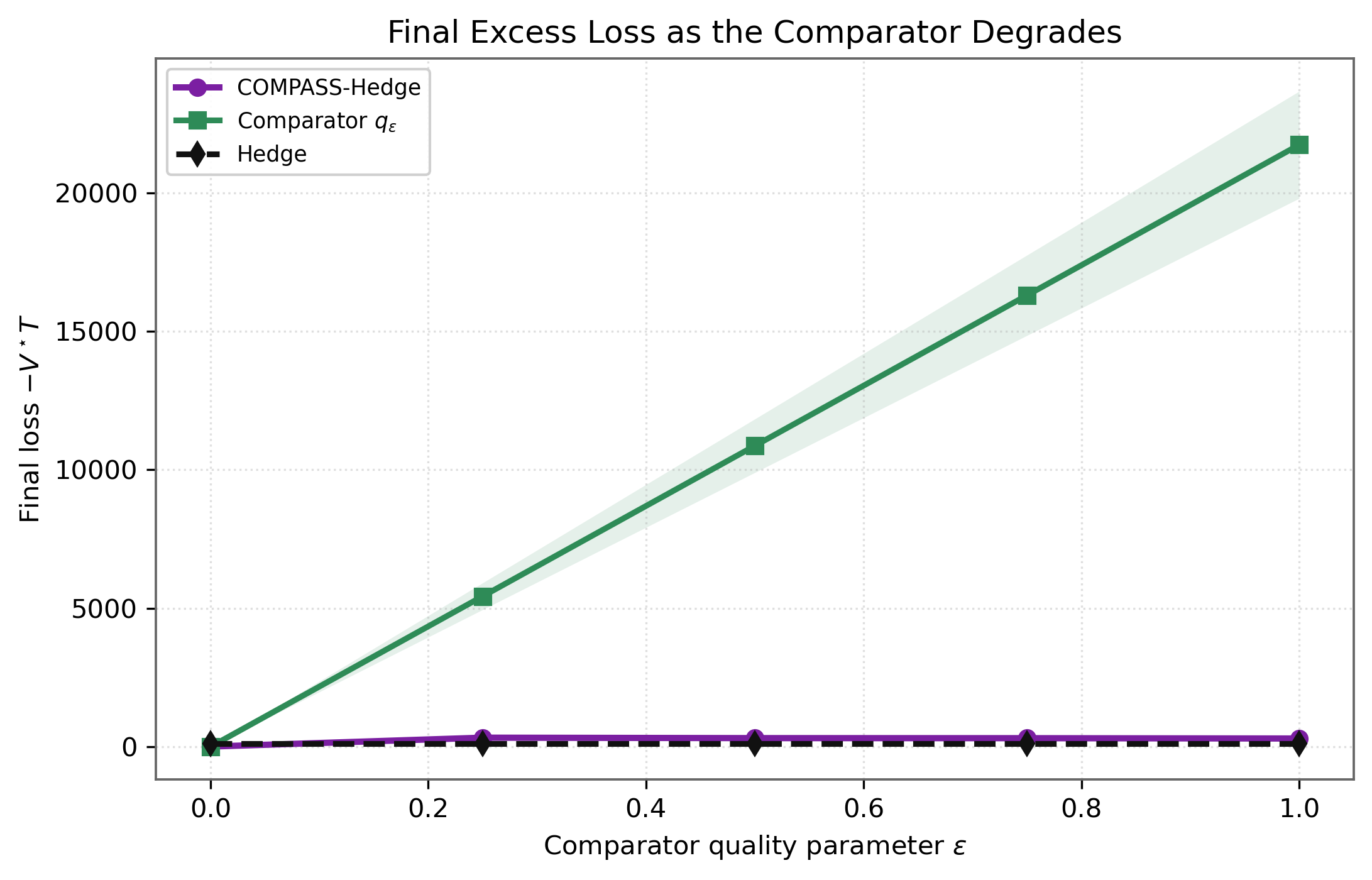}
        \caption{Final excess loss as a function of $\varepsilon$}
    \end{subfigure}
    \hfill
    \begin{subfigure}[t]{0.49\textwidth}
        \centering
        \includegraphics[width=\linewidth]{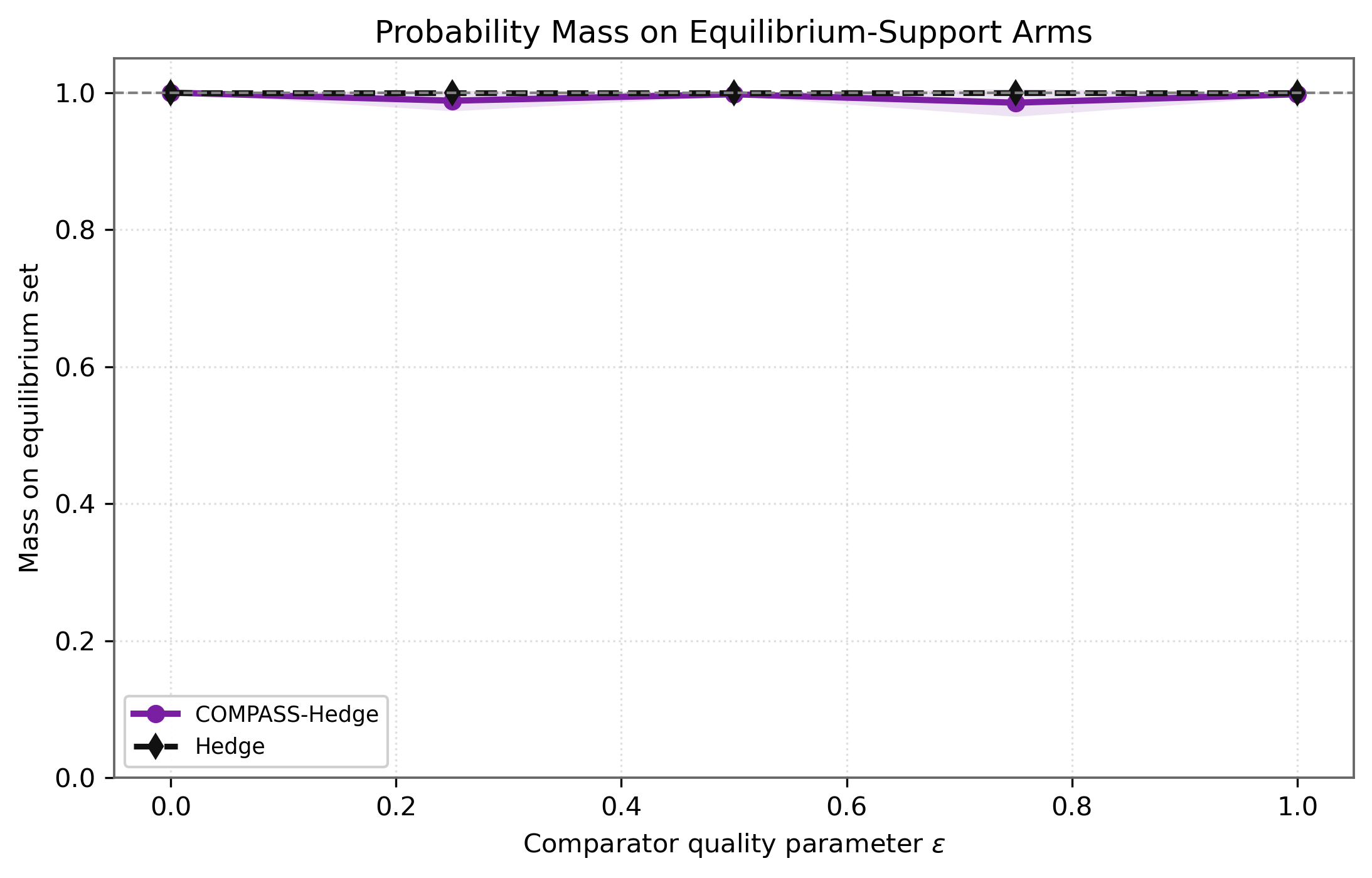}
        \caption{Terminal mass on equilibrium-support arms}
    \end{subfigure}
    \caption{Terminal statistics from the comparator-quality sweep.}
    \label{fig:minimax-sweep-terminal}
\end{figure}

Three patterns emerge consistently across Figures~\ref{fig:minimax-sweep-timeseries} and~\ref{fig:minimax-sweep-terminal}.
\begin{enumerate}[leftmargin=1.4em]
    \item \textbf{Good comparators are exploited, slowly.} When $\varepsilon$ is small and the comparator lies near $q_{\text{eq}}$, \textsc{Compass-Hedge} keeps $\alpha_t$ low for an extended window. Visually this looks like ``laziness''; mechanically it is the algorithm recognizing that the prior is trustworthy and that aggression would only add variance.
    \item \textbf{Bad comparators are abandoned, decisively.} As $\varepsilon$ grows, the algorithm ratchets $\alpha_t$ upward and transfers weight to the adaptive Hedge component. The transition is not binary but graded---it responds continuously to how much cumulative evidence has accumulated against the comparator.
    \item \textbf{Failure is graceful, not catastrophic.} The final excess-loss curve rises \emph{smoothly} with $\varepsilon$, and a non-trivial fraction of the learner's mass remains on equilibrium-support arms even when the comparator is nearly uninformative. A weak prior slows the learner down; it does not trap it.
\end{enumerate}
This is exactly the qualitative profile one wants from a safe-aggression mechanism: a strong prior is exploited as a shortcut, a weak prior is overridden, and the transition between the two is continuous rather than discontinuous.

\subsubsection{Experiment II: the right benchmark matters}
\label{app:exp2}

Comparator-aware algorithms are easy to misrepresent visually. Plotted against the comparator, any method that \emph{should} be close to the comparator looks unimpressive by construction. The cleaner benchmark in the minimax environment is the value line $V^\star t$, which is independent of $\varepsilon$.

\begin{figure}[t]
    \centering
    \begin{subfigure}[t]{0.49\textwidth}
        \centering
        \includegraphics[width=\linewidth]{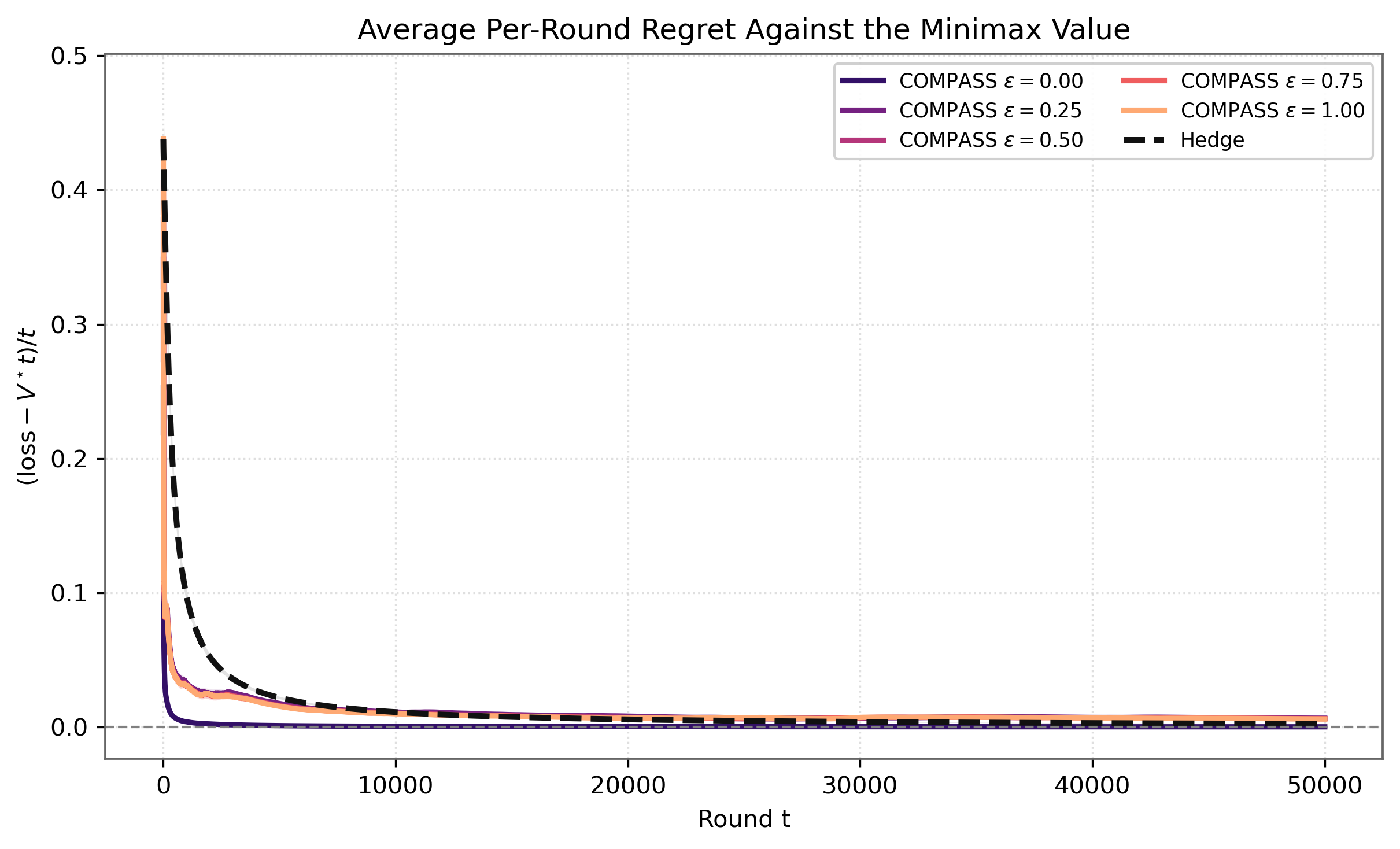}
        \caption{Average per-round regret vs.\ $V^\star$}
    \end{subfigure}
    \hfill
    \begin{subfigure}[t]{0.49\textwidth}
        \centering
        \includegraphics[width=\linewidth]{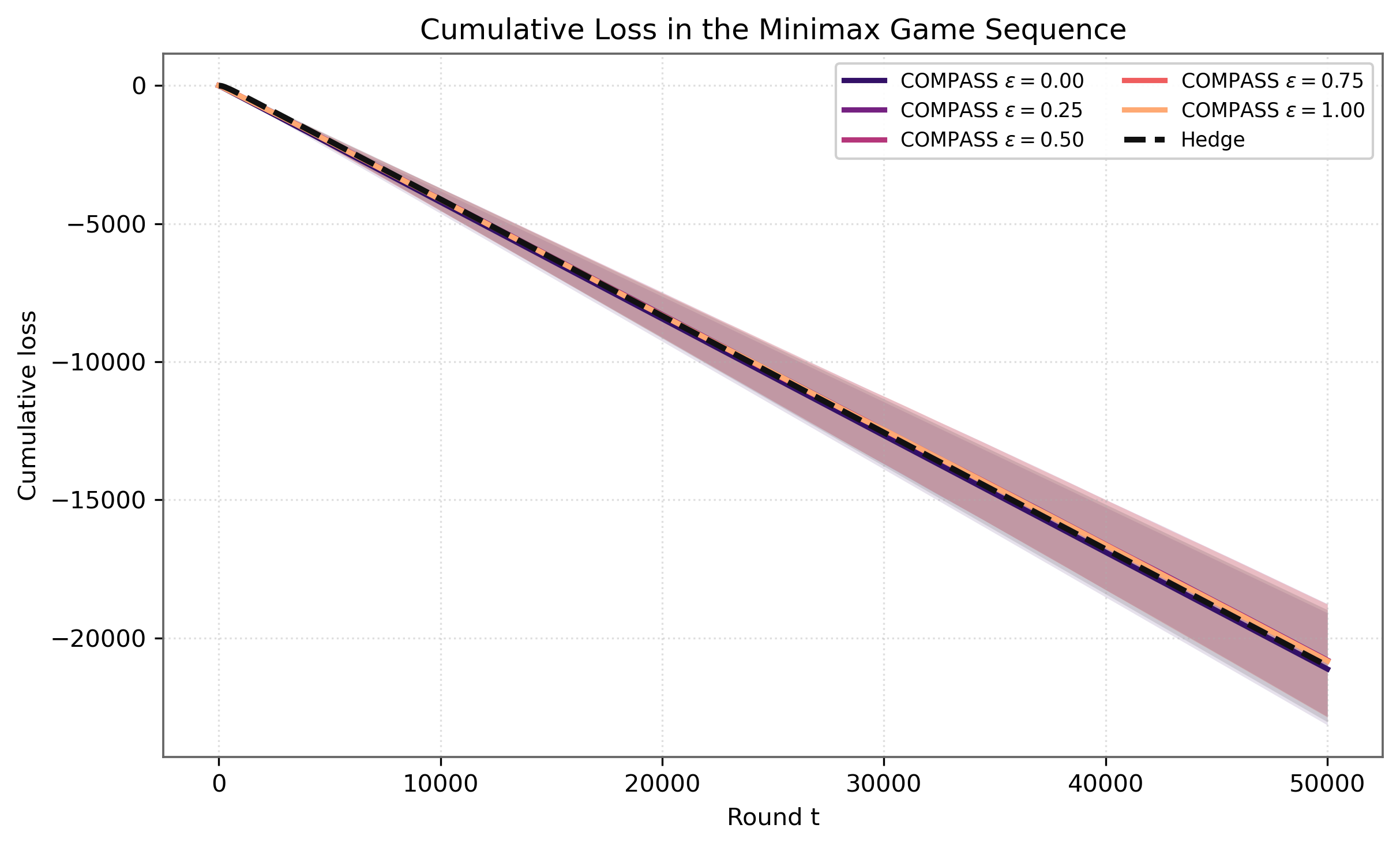}
        \caption{Cumulative loss trajectories}
    \end{subfigure}
    \caption{Regret and loss measured against the game value $V^\star t$.}
    \label{fig:minimax-value-regret}
\end{figure}

Figure~\ref{fig:minimax-value-regret} tells a cleaner story than any comparator-relative plot can. The average-regret curves trend downward---the correct visual signature of sublinear cumulative regret---and the cumulative-loss trajectories remain tightly clustered across comparator choices. Whatever damage a weak comparator inflicts, it is confined to the transient regime; the long-run slope is governed by $V^\star$, not by $q_\varepsilon$.

\subsubsection{Experiment III: is the growth really $\sqrt{T}$?}
\label{app:exp3}

A skeptic could worry that the flat-looking curves in Experiment II hide a slow linear drift. Experiment III is designed to make that worry answerable. We run two diagnostics:
\begin{enumerate}[leftmargin=1.4em]
    \item regret normalized by $\sqrt{t}$, plotted as a time series, and
    \item final regret at horizons $T \in \{\MinimaxScalingHorizons\}$, plotted against $\sqrt{T}$ and against $T$.
\end{enumerate}
The logic is simple: if the true growth were linear in $T$, the normalized time series would drift upward and the final-regret scaling would look linear in $T$ rather than in $\sqrt{T}$.

\begin{figure}[t]
    \centering
    \begin{subfigure}[t]{0.49\textwidth}
        \centering
        \includegraphics[width=\linewidth]{sqrt_regret_over_sqrt_t.png}
        \caption{Regret normalized by $\sqrt{t}$ over time}
    \end{subfigure}
    \hfill
    \begin{subfigure}[t]{0.49\textwidth}
        \centering
        \includegraphics[width=\linewidth]{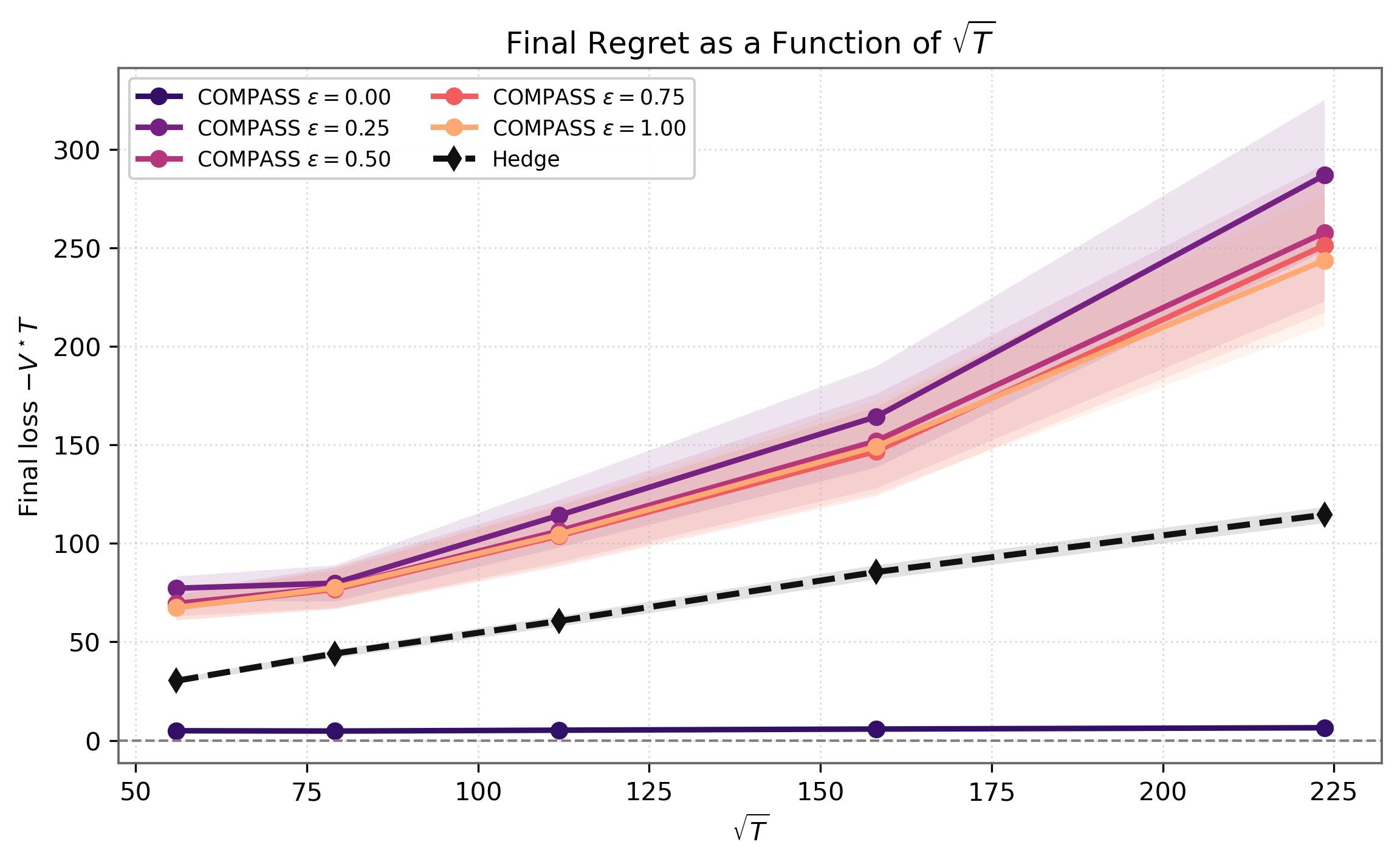}
        \caption{Final regret vs.\ $\sqrt{T}$ across horizons}
    \end{subfigure}
    \vspace{0.75em}
    \begin{subfigure}[t]{0.55\textwidth}
        \centering
        \includegraphics[width=\linewidth]{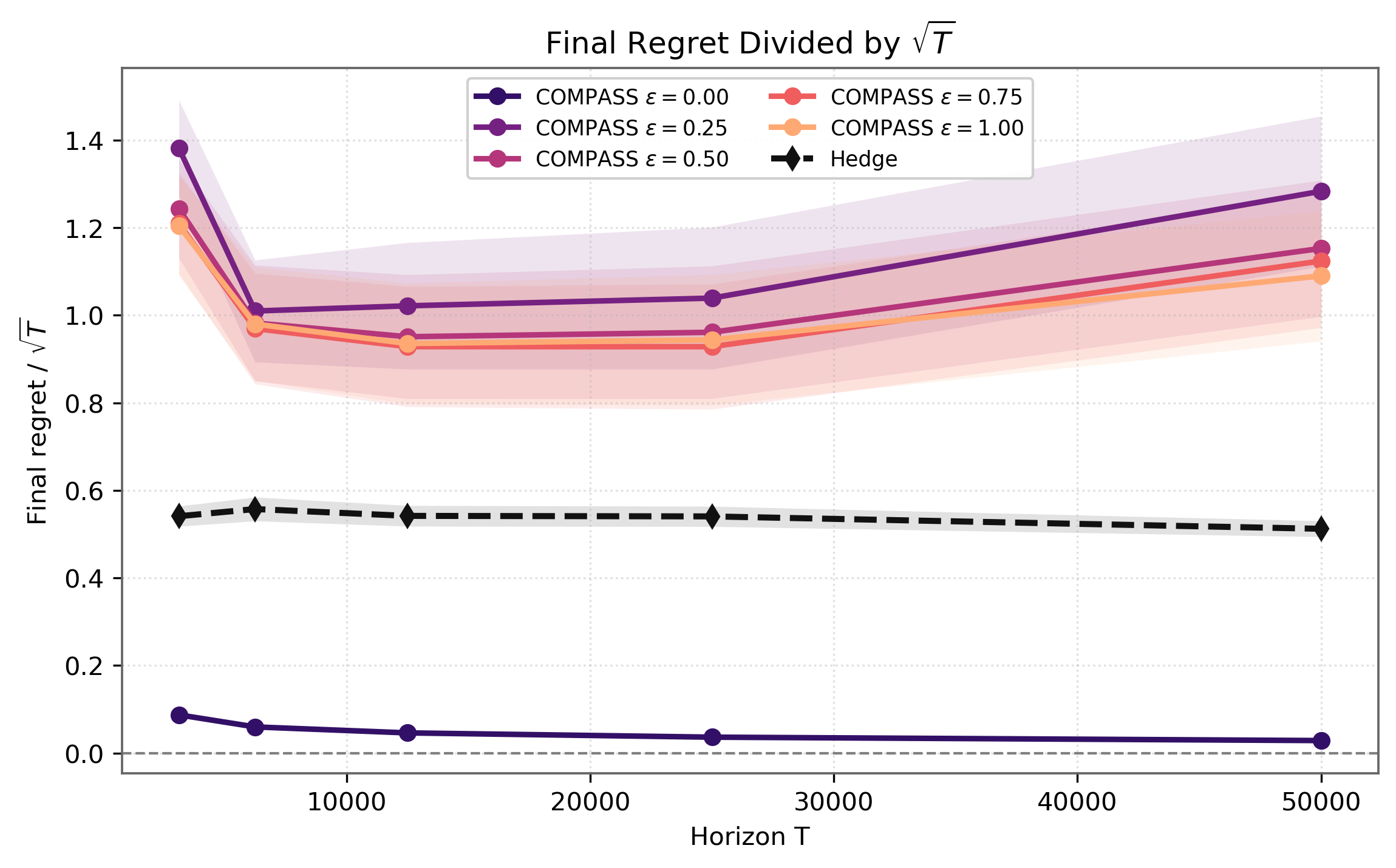}
        \caption{Final regret normalized by $\sqrt{T}$}
    \end{subfigure}
    \caption{\textbf{Distinguishing $\sqrt{T}$ from linear growth.} Curves for regret$/\sqrt{t}$ remain stable rather than drifting upward, final regret scales near-linearly in $\sqrt{T}$, and the normalized final regret stays essentially flat across horizons---three independent pieces of evidence consistent with $O(\sqrt{T})$ growth.}
    \label{fig:minimax-sqrt-scaling}
\end{figure}

Figure~\ref{fig:minimax-sqrt-scaling} speaks clearly. The regret$/\sqrt{t}$ series is essentially stationary rather than trending. The final regret plotted against $\sqrt{T}$ is much closer to linear than the same data plotted against $T$. Most importantly, the normalized final regret---arguably the sharpest diagnostic of all---remains within a narrow band across every horizon we tested. If any of these three diagnostics showed monotone growth with $T$, we would be forced to reject the $\sqrt{T}$ interpretation; none of them do.

\subsubsection{Complement to Experiment II: overlaying comparator trajectories}
\label{app:exp2_complement}

\begin{figure}[t]
    \centering
    \begin{subfigure}[t]{0.49\textwidth}
        \centering
        \includegraphics[width=\linewidth]{alpha_avg_regret.png}
    \end{subfigure}
    \hfill
    \begin{subfigure}[t]{0.49\textwidth}
        \centering
        \includegraphics[width=\linewidth]{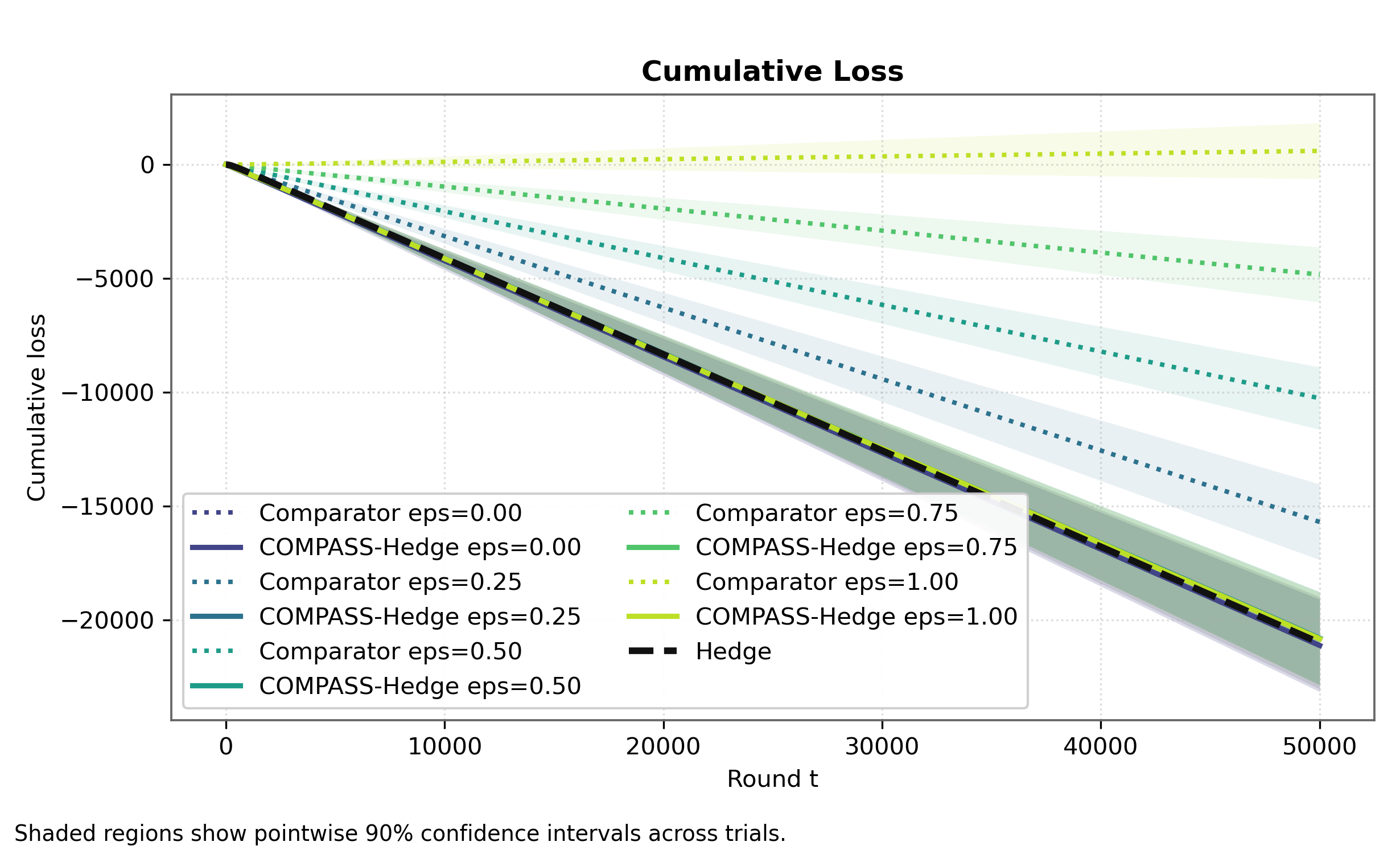}
    \end{subfigure}
    \caption{Regret and cumulative loss against $V^\star t$, with the matched comparator overlaid as a dotted baseline.}
    \label{fig:alpha_values_experiment}
\end{figure}

Figure~\ref{fig:alpha_values_experiment} revisits Experiment II with the comparator trajectories drawn in explicitly as dotted lines. This view makes it unambiguous which curves belong to the learner and which to the static baselines. In the implementation, the learner plays $\alpha_t \mu + (1-\alpha_t)\mu^c$, so small $\alpha_t$ means the method is leaning on the comparator and $\alpha_t$ near one means it has shifted toward the adaptive Hedge component.

Two patterns stand out. First, for small $\varepsilon$, \textsc{Compass-Hedge} (solid) stays deliberately conservative: average regret drops quickly and cumulative loss remains small---the algorithm is exploiting the good comparator rather than chasing marginal gains. Second, and more importantly, the deterioration is \emph{gradual} rather than \emph{catastrophic}. The comparator-alone baselines (dotted) degrade sharply as $\varepsilon$ increases, but \textsc{Compass-Hedge} tracks the dashed Hedge curve and the $V^\star$ benchmark closely throughout. The one-line summary is the same as before: strong comparators are exploited, weak comparators are refused the chance to trap the learner.

\subsubsection{Regret against the comparator family}
\label{app:comparator_regret}

\begin{figure}
    \centering
    \begin{subfigure}[t]{0.49\textwidth}
        \centering
        \includegraphics[width=\linewidth]{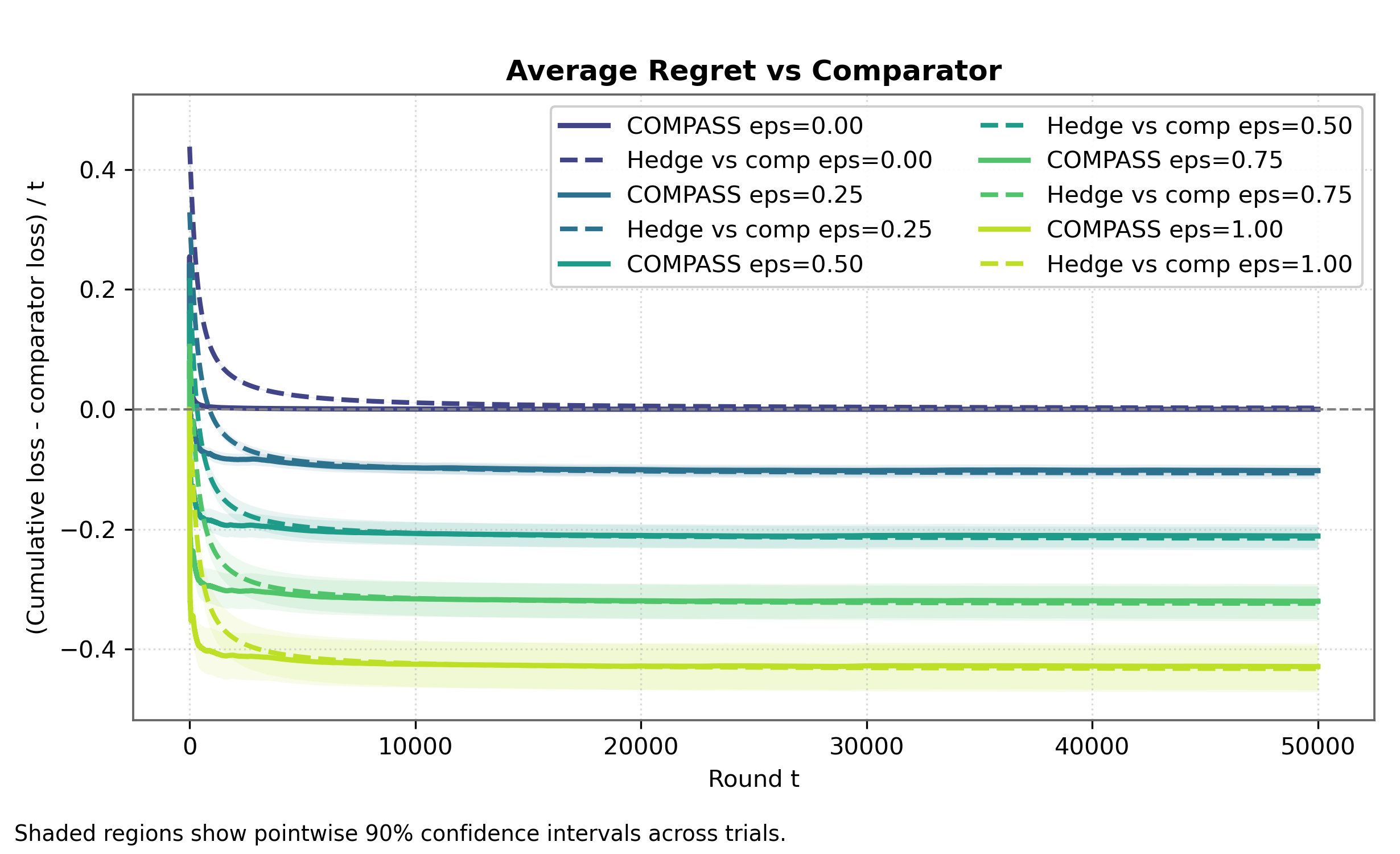}
    \end{subfigure}
    \hfill
    \begin{subfigure}[t]{0.49\textwidth}
        \centering
        \includegraphics[width=\linewidth]{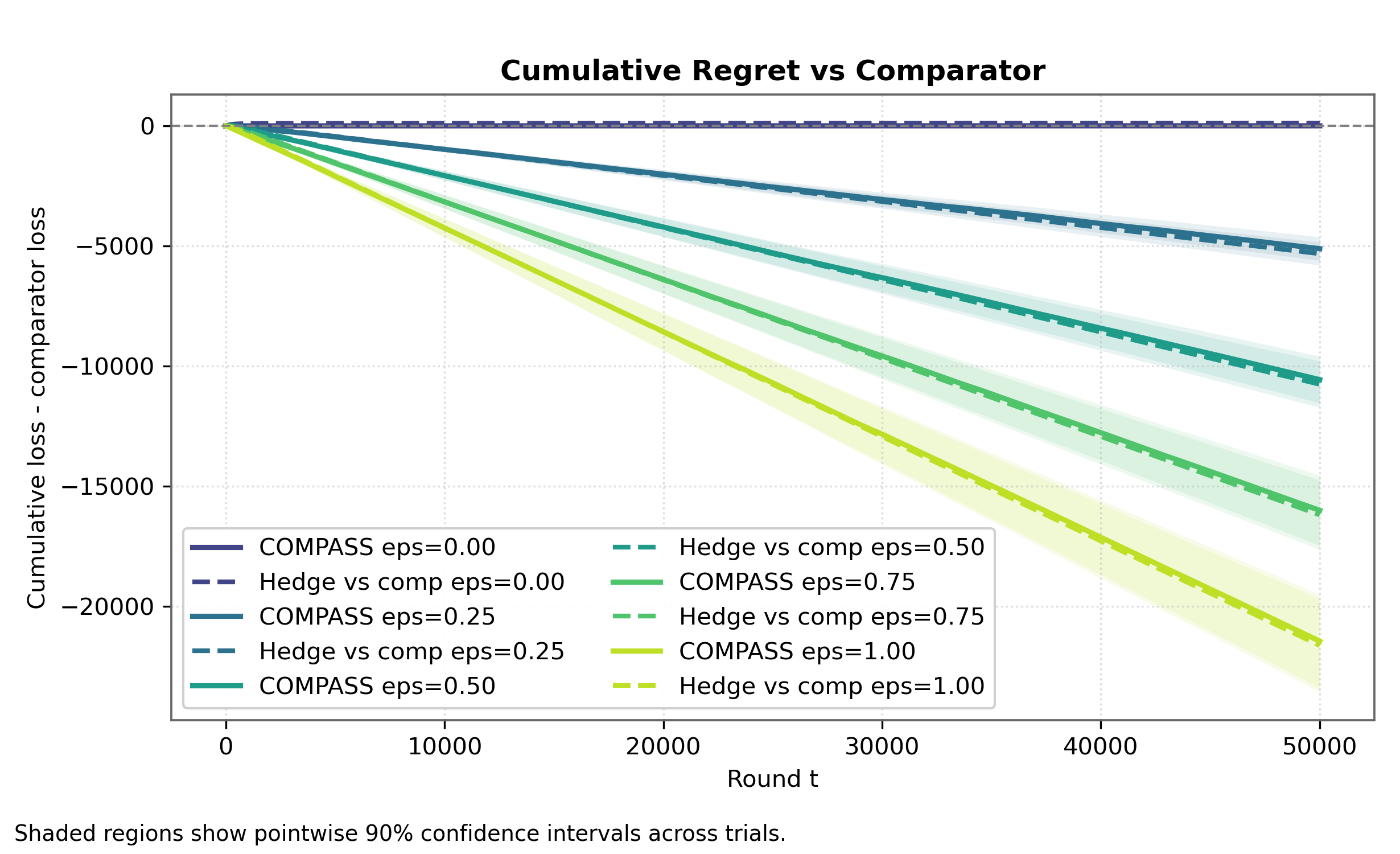}
    \end{subfigure}
    \caption{Regret measured against the matched comparator $q_\varepsilon$ rather than against $V^\star$.}
    \label{fig:regret_against_comparator}
\end{figure}

Figure~\ref{fig:regret_against_comparator} is the natural mirror image of the previous experiment: instead of plotting everything against a single fixed benchmark $V^\star t$, each curve is now measured relative to its own matched comparator $q_\varepsilon$. The left panel shows average regret versus the comparator, $(\text{cumulative loss} - \text{comparator loss})/t$, and the right panel shows the corresponding cumulative regret. Negative values mean the algorithm is outperforming the static comparator.

This plot answers a subtly different question from the value-relative plots. For $\varepsilon = 0$ the comparator \emph{is} (essentially) the equilibrium strategy, and \textsc{Compass-Hedge} sits near zero regret relative to it---exactly as the safety guarantee predicts. As $\varepsilon$ grows and the comparator weakens, both \textsc{Compass-Hedge} and Hedge achieve increasingly negative regret against that comparator, i.e.\ they beat it by a widening margin. The two viewpoints---against $V^\star$ and against $q_\varepsilon$---are complementary: the first establishes optimality, the second establishes safety. \textsc{Compass-Hedge} is the only algorithm in our comparison that does well on both simultaneously.

\subsection{Real-world testbed: the S\&P 500 universe}
\label{app:data_setup}
\paragraph{Why this dataset.}
We need an environment that is simultaneously rich enough to challenge any online learner and transparent enough that we can interpret what the algorithm is doing. Equity markets are a natural choice: they are highly liquid, informationally efficient on average, yet punctuated by heavy-tailed shocks, regime shifts, and occasional structural breaks. This combination---benign on most days, brutally adversarial on a few---maps almost one-to-one onto the ``three worlds'' setting our theory targets.

We use daily Adjusted Closing Prices for all constituents of the S\&P 500 index, sourced from the Center for Research in Security Prices (CRSP) via WRDS. The dataset spans a full decade, from \textbf{January 1, 2015 to December 31, 2024}, yielding $T = 2{,}516$ trading days. 

\paragraph{Survivorship bias and the ``dead arm'' problem.}
A common pitfall in financial backtesting is to restrict the universe to stocks that survive the entire sample window. This removes exactly the arms that should test the robustness of an online allocation rule. We deliberately avoid this survivorship bias by fixing the universe to the January 1, 2015 S\&P 500 cohort and retaining each arm even if it later disappears from the market. In the data pipeline, returns are kept through the asset's last observed trading day, but entries after delisting are left missing and then mapped directly to the worst admissible loss:
\[
c^t(a)=1 \qquad \text{for all } t>t_{\mathrm{dead}}.
\]
Thus, once an arm dies, it remains permanently catastrophic from the learner's perspective. This is exactly the ``dead arm'' scenario against which the safety guarantee of \textsc{Compass-Hedge} is meant to be evaluated.

\paragraph{Mapping returns to losses.}
The algorithms operate on losses in $[0,1]$, so the code uses a clipped linear transform of daily returns rather than a cross-sectional sigmoid. Let $r^t(a)$ denote the raw return of arm $a$ on day $t$, and let $\kappa=0.10$ be the default clipping threshold. We first clip returns to the interval $[-\kappa,\kappa]$,
\[
\tilde r^t(a)=\min\{\max\{r^t(a),-\kappa\},\kappa\},
\]
and then map them to losses via
\[
c^t(a)=\frac{\kappa-\tilde r^t(a)}{2\kappa}.
\]
Under this transformation, a return of $+\kappa$ maps to loss $0$, a return of $0$ maps to loss $1/2$, and a return of $-\kappa$ maps to loss $1$. The clipping step prevents extreme market moves from dominating the scale, while the linear rescaling preserves the ordinal ranking of non-delisted arms and yields bounded losses compatible with Hedge- and FTRL-type updates. Post-delisting missing values are then assigned loss $1$, so dead arms remain permanently worst-case. All regret comparisons are computed using the transformed bounded losses, not raw returns.
\paragraph{Hyperparameters.}
For \textsc{Compass-Hedge} we use $\eta_t = 2\sqrt{\frac{\log A} { t- \operatorname{start} + 1}}$ as the learning rate and keep the coefficient 2 in front of $\widehat{R}_s$ in Algorihtm~\ref{alg:doubling_ftrl_comparator} (\texttt{line 11}) unchanged, matching the prescription of Algorithm~\ref{alg:doubling_ftrl_comparator}. Critically, \emph{no hyperparameter tuning was performed on test data}; the configuration is the one predicted by the theory, which is the strongest form of evidence that the guarantees translate to practice. \textsc{Standard Hedge} uses $\eta= \sqrt{2 \log A / T}$ and initializes uniformly. \textsc{Anytime-Hedge} uses $\eta_t= 2 \sqrt{\log A / t}$. Both use the same transformed losses as \textsc{Compass-Hedge}.
\subsubsection{The two faces of regret: NVIDIA vs.\ SPY}
\label{app:hedge_family}
We use this dataset for Figure~\ref{fig:stochastic}, and choose the safe comparator to be SPY. Operationally, SPY is appended as an additional arm with its own transformed daily loss series, and $\mu^c$ is the unit vector on this SPY arm. This experiment is both the most practical and the most visually revealing. Rather than measuring regret in the abstract, we pit four algorithms---the static \textbf{Comparator} (a SPY-tracking market index), \textbf{Hedge}, its \textbf{Anytime-Hedge} variant, and \textsc{\textbf{Compass-Hedge}}---against two canonical benchmarks on the full ten-year S\&P 500 stream: the best realized arm in hindsight (NVIDIA), and the broad market index (SPY). The two panels tell essentially opposite stories, and only one algorithm emerges looking good on both.

\paragraph{Panel (a): regret against NVIDIA, the best arm in hindsight.}
NVIDIA is the realized top performer of the decade by an extraordinary margin, and all four algorithms accumulate roughly $20$ units of cumulative regret against it---as they should, since matching a single stock that rose over 200$\times$ is not the goal of any of them. The regret values in Figure~\ref{fig:stochastic} are measured in transformed bounded-loss units. The interesting information in this panel lies in the \emph{ordering and spacing} of the curves, which becomes legible in the inset zoom. There, two groups separate cleanly: the \textbf{Comparator} and \textsc{\textbf{Compass-Hedge}} sit at a lower regret level (around $20.0$--$20.5$), while \textbf{Hedge} and \textbf{Anytime-Hedge} sit about $0.7$--$1.0$ units higher (around $21.0$--$21.5$). The message is subtle but important: \textsc{Compass-Hedge} tracks the safe comparator closely even when measured against an aggressive benchmark, whereas the unprotected Hedge variants pay a visible, persistent penalty for their exploration.

\paragraph{Panel (b): regret against SPY, the safe comparator.}
This is the decisive panel. When the benchmark becomes the market comparator itself, the four algorithms fan out dramatically:
\begin{itemize}
    \item The \textbf{Comparator} sits at zero regret by definition (green dotted line).
    \item \textsc{\textbf{Compass-Hedge}} (purple) grows gently to roughly $0.15$ units of comparator regret and then \emph{flattens out} for the remainder of the decade---a textbook realization of the constant-regret safety guarantee.
    \item \textbf{Hedge} and \textbf{Anytime-Hedge} drift continuously upward, reaching above $1.0$ unit of comparator regret by 2020--2021, with further volatile swings afterward. That is almost an order of magnitude worse than \textsc{Compass-Hedge}, and critically, their trajectories show no sign of leveling off.
\end{itemize}

\paragraph{Why this picture matters.}
Panels (a) and (b) together stage the trade-off that motivated the whole paper. An algorithm can look excellent if you let it choose its benchmark: Hedge looks respectable in panel (a), where the comparator penalty is masked by the enormous regret against NVIDIA, but that apparent respectability dissolves the moment we hold it accountable to a safe baseline in panel (b). Conversely, the static Comparator is trivially unbeatable in panel (b) by construction, but it forfeits any hope of catching surges like NVIDIA's in panel (a).

\textsc{Compass-Hedge} is the only curve that remains near the bottom in \emph{both} panels. It neither pays the Hedge-style price of accumulating comparator regret nor commits to the pure Comparator's inability to exploit. This is the three-worlds promise made concrete: on a real, non-stationary, heavy-tailed decade of market data, the algorithm automatically interpolates between safe and adaptive behavior without any regime-specific tuning, and the gap it opens against naive Hedge is visible to the naked eye.

\subsubsection{Two regimes, one algorithm: why the S\&P timeline is a three-worlds stress test}

Our ten-year window naturally decomposes into regimes of very different character, and the per-regime behavior of \textsc{Compass-Hedge} is the empirical counterpart of its theoretical three-worlds guarantee.
\begin{enumerate}
    \item \textbf{Benign / stochastic (e.g.\ 2015--2019, 2021, 2023--2024).} Volatility is low and drift is positive. The algorithm detects that the environment rewards aggression: pseudo-regret flattens quickly as weight concentrates on consistently strong arms.
    \item \textbf{Adversarial / crisis (e.g.\ March 2020 Covid crash, 2022 rate-hike drawdown).} Correlations break down, previously reliable arms collapse, and the comparator itself may underperform transiently. Here the algorithm refuses to double down: the mixing weight $\alpha$ stays low, the learner leans on the uniform comparator, and the catastrophic losses observed in naive Hedge simply do not materialize.
\end{enumerate}

\paragraph{Monte Carlo protocol.}
We run $N = 500$ independent trials. Each trial samples a random subset of $A = 30$ stocks from the universe and instantiates three algorithms on the resulting loss stream: Standard Hedge (benchmark), \textsc{Compass-Hedge} with an Oracle comparator (a high-quality prior concentrated on historically strong performers), and \textsc{Compass-Hedge} with a Uniform comparator (the $1/A$ portfolio---an intentionally weak but safe baseline). The Uniform configuration is the interesting one: it forces the algorithm to demonstrate safety under a comparator that is deliberately weak but diversified; it offers no asset-specific information. The Oracle comparator is constructed offline after observing the full sequence by placing all mass on the best stock ranked by cumulative transformed loss within that trial’s sampled universe. It is not implementable online and is included only as a diagnostic high-quality prior

\paragraph{Interpreting the regret curves.}
We track two complementary metrics throughout the experiments.
\begin{itemize}
    \item \emph{Pseudo-regret} $\mathcal{R}_{\text{pseudo}}$ measures optimality against the best single arm in hindsight. A curve that flattens signals that the selection problem has been ``solved'' in the sense that further learning cannot extract meaningful additional performance.
    \item \emph{Comparator regret} $\mathcal{R}(\mu^c)$ measures safety: the cumulative damage done relative to the protected baseline. The flat trajectories of \textsc{Compass-Hedge} against the Uniform comparator (Fig.~\ref{app:fig:comp_uniform}) are the empirical fingerprint of the $\tilde O(1)$ safety bound proved in Theorem~\ref{thm:main}.
\end{itemize}

\paragraph{The price of protection.}
The initial lag of \textsc{Compass-Hedge} in Fig.~\ref{app:fig:pseudo_uniform} is not a weakness but an accounting of the \emph{cost of safety}: the algorithm needs a short burn-in to verify that aggression is warranted before shifting weight away from the comparator. On a ten-year horizon this cost is negligible; on shorter horizons it is the explicit price paid for the guarantee that nothing catastrophic can happen.

\begin{figure}[t]
    \centering
    \setlength{\abovecaptionskip}{4pt}
    \setlength{\belowcaptionskip}{0pt}
    \begin{subfigure}[b]{0.48\textwidth}
        \centering
        \includegraphics[width=0.98\textwidth]{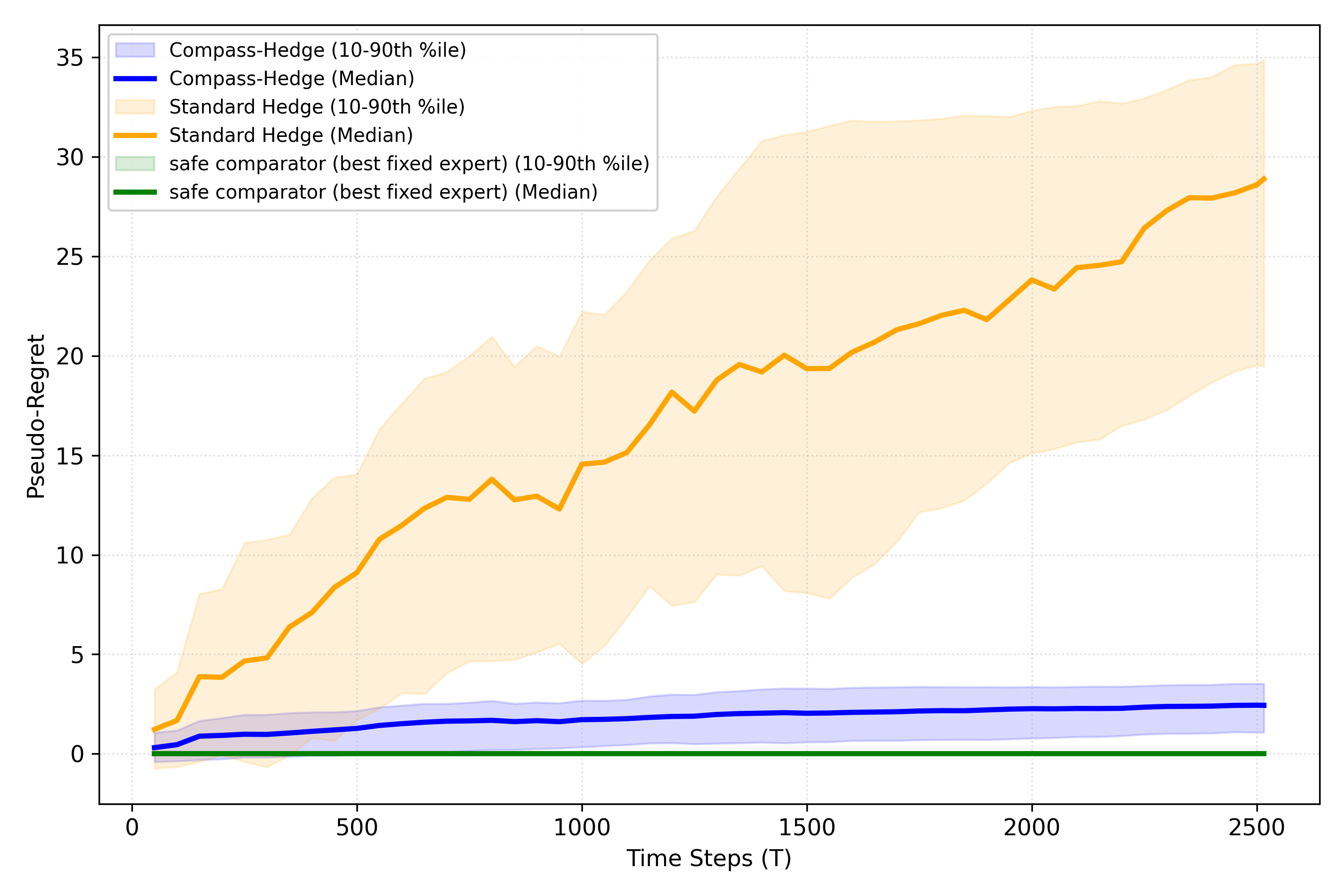}
        \caption{Pseudo-regret, Oracle comparator}
        \label{app:fig:pseudo_oracle}
    \end{subfigure}
    \hfill
    \begin{subfigure}[b]{0.48\textwidth}
        \centering
        \includegraphics[width=0.98\textwidth]{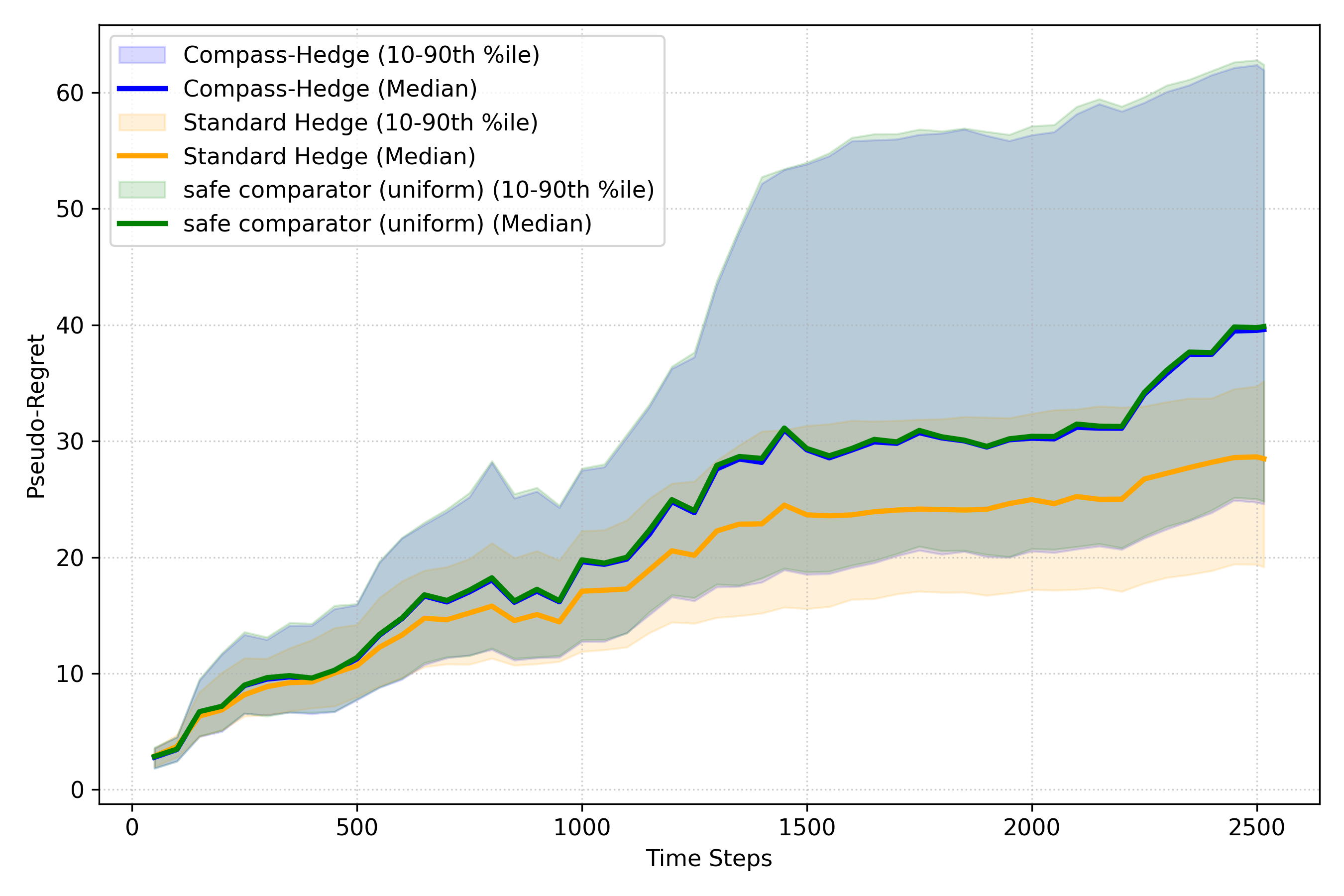}
        \caption{Pseudo-regret, Uniform comparator}
        \label{app:fig:pseudo_uniform}
    \end{subfigure}
    \vspace{1em}
    \begin{subfigure}[b]{0.48\textwidth}
        \centering
        \includegraphics[width=0.98\textwidth]{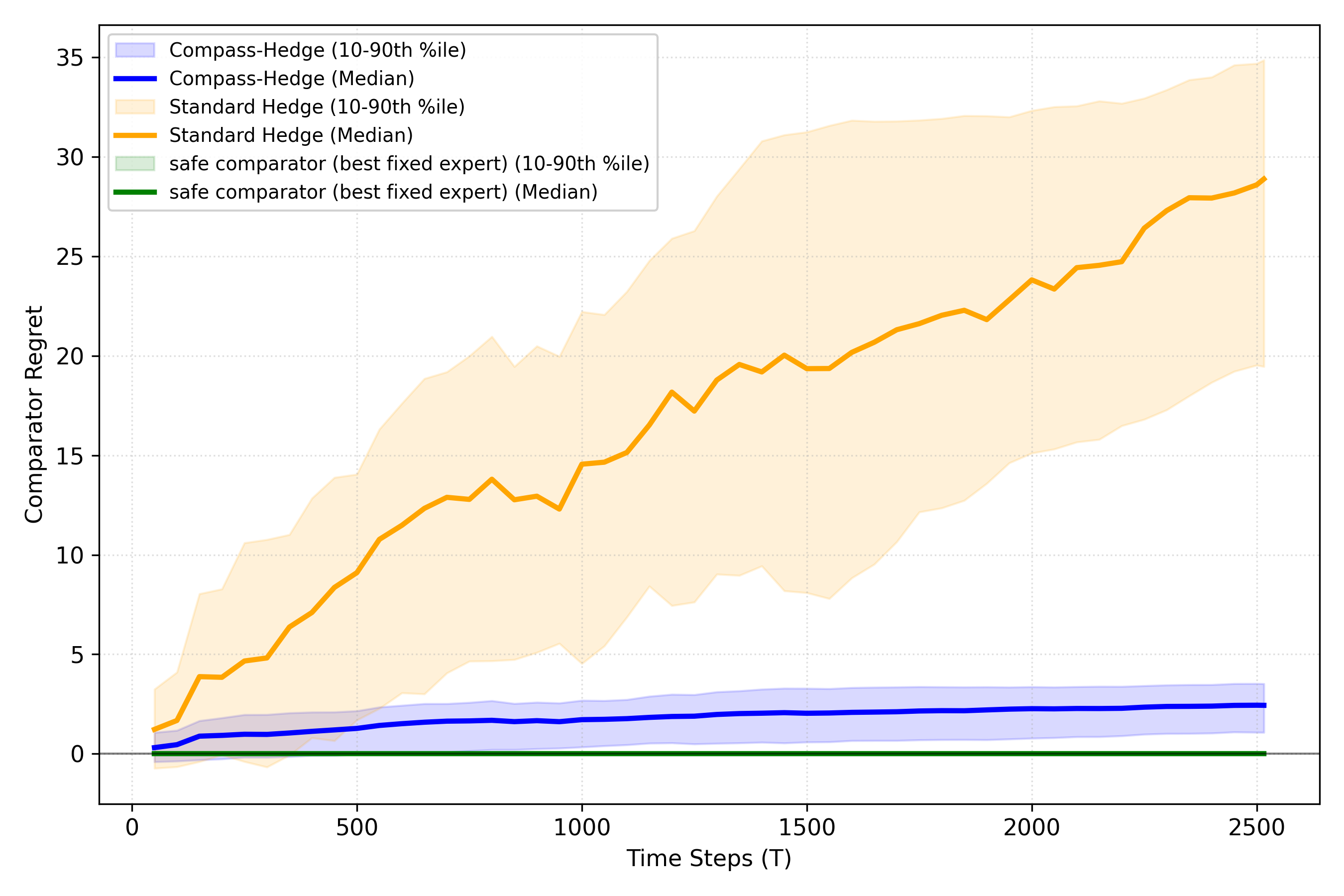}
        \caption{Comparator regret, Oracle}
        \label{app:fig:comp_oracle}
    \end{subfigure}
    \hfill
    \begin{subfigure}[b]{0.48\textwidth}
        \centering
        \includegraphics[width=0.98\textwidth]{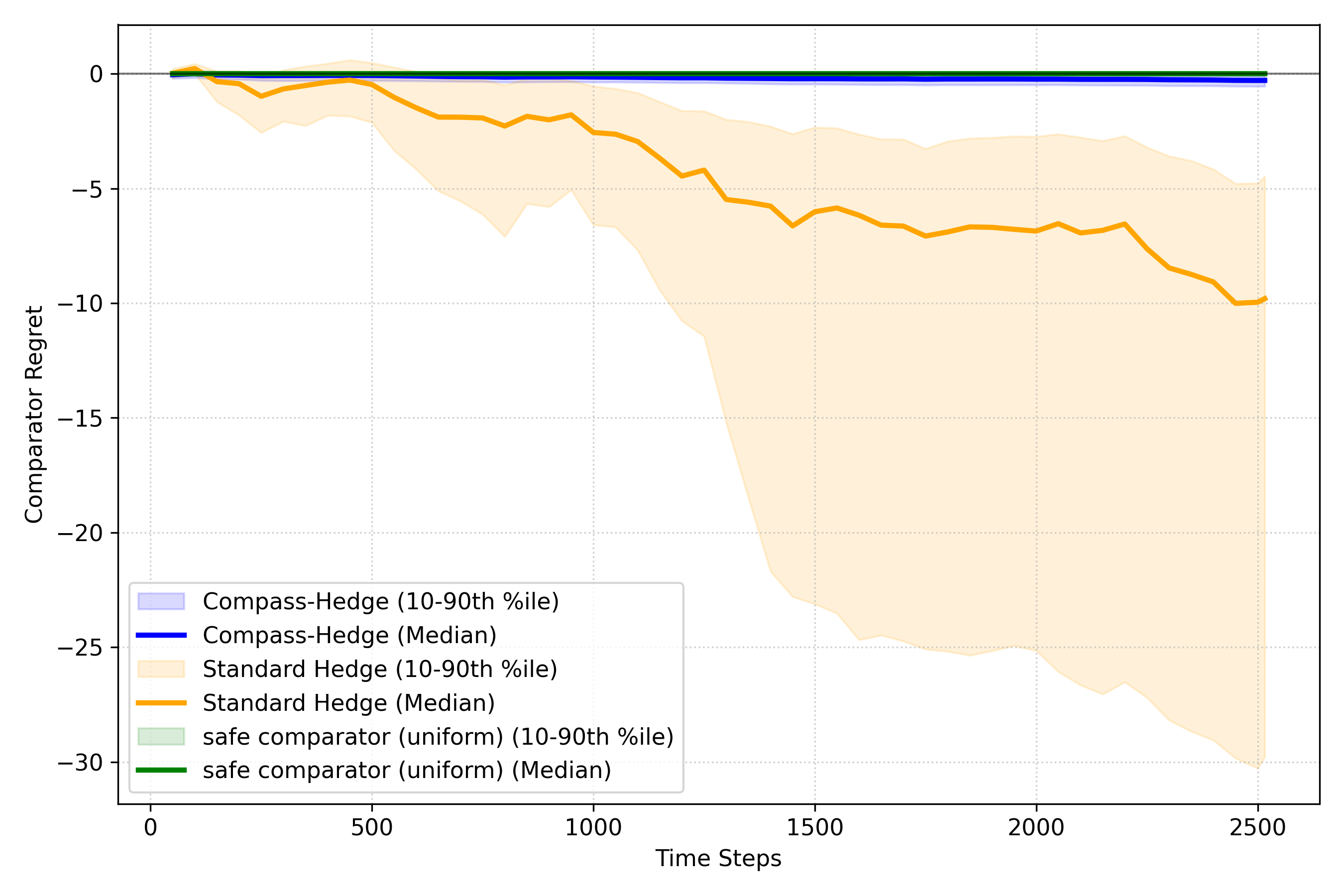}
        \caption{Comparator regret, Uniform}
        \label{app:fig:comp_uniform}
    \end{subfigure}
    \vspace{0.5em}
    \caption{\textbf{Median regret across 500 trials.} Shaded bands are $10^{\text{th}}$--$90^{\text{th}}$ percentiles. Panels (a) and (c): under an informative Oracle prior, \textsc{Compass-Hedge} (blue) tracks the best expert without incurring comparator regret. Panels (b) and (d): under a deliberately weak Uniform prior, Standard Hedge (orange) exhibits high variance and sporadic blow-ups, whereas \textsc{Compass-Hedge} maintains a nearly flat comparator-regret trajectory---the empirical signature of the $\tilde O(1)$ safety guarantee.}
    \label{app:fig:combined_results}
\end{figure}
\end{document}